    \let\Cref\crtCref
    \let\cref\crtcref
\providecommand{\realnum}{\mathbb{R}}
\newtheorem{theorem}{Theorem}
\newtheorem{definition}{Definition}
\newtheorem{lemma}{Lemma}
\newtheorem{assumption}{Assumption}
\theoremstyle{plain}
\theoremstyle{definition}
\theoremstyle{remark}
\title{Robustness in deep learning: The good (width), \\ the bad (depth), and the ugly (initialization)}
\author{Zhenyu Zhu, \quad Fanghui Liu, \quad Grigorios G Chrysos, \quad Volkan Cevher\vspace{2mm} \\
{\hspace*{\fill}EPFL, Switzerland\hspace*{\fill}}\\
{\hspace*{\fill}\texttt{\{[first name].[surname]\}@epfl.ch}\hspace*{\fill}}
}
\begin{document}

\maketitle

\begin{abstract}
We study the average robustness notion in deep neural networks in (selected) wide and narrow, deep and shallow, as well as lazy and non-lazy training settings. We prove that in the under-parameterized setting, width has a negative effect while it improves robustness in the over-parameterized setting. The effect of depth closely depends on the initialization and the training mode. In particular, when initialized with LeCun initialization, depth helps robustness with the lazy training regime. In contrast, when initialized with Neural Tangent Kernel (NTK) and He-initialization, depth hurts the robustness. Moreover, under the non-lazy training regime, we demonstrate how the width of a two-layer ReLU network benefits robustness. Our theoretical developments improve the results by~\citet{huang2021exploring, wu2021wider} and are consistent with~\citet{bubeck2021a, pmlr-v134-bubeck21a}.
\end{abstract}

\section{Introduction}
\label{sec:intro}

It is now well-known that deep neural networks (DNNs) are susceptible to adversarially chosen, albeit imperceptible, perturbations to their inputs~\citep{goodfellow2015explaining, szegedy2013intriguing}.
This lack of robustness is worrying as DNNs are now deployed in many real-world applications~\citep{eykholt2018robust}. As a result,  new algorithms are more and more  being developed to defend against adversarial attacks to improve the DNN robustness. Among the current defense methods, the most commonly used and arguably the most successful method is adversarial training based minimax optimization~\citep{1802.00420, croce2020reliable,madry2019deep}.
To study adversarial attacks and defenses, we need to investigate the robustness of DNNs at first.

A plethora of aspects on the robustness have been studied, ranging from algorithms to their initialization as well as from the width of neural networks to their depth (i.e., the architecture). 
On the practical side, \citet{madry2019deep} advocate that adversarial training requires more parameters (e.g., width) for better performance in minimax optimization, which would fall into the so-called over-parameterized regime\footnote{Over-parameterized regime requires the number of parameters in DNN to be (much) larger than the number of training data.} \citep{zhang2021understanding}.
On the theoretical side, recent works suggest that over-parameterization may damage the adversarial robustness~\citep{huang2021exploring, wu2021wider, zhou2019analysis, https://doi.org/10.48550/arxiv.2201.05149}. In stark contrast, \citet{bubeck2021a, pmlr-v134-bubeck21a} argue that the robustness of DNNs needs enough parameters to be guaranteed. See a detailed discussion in~\cref{sec:related}.

Our work aims to investigate this apparent contradiction in theory, and to close the gap as much as possible. 
We begin with a definition of the \emph{perturbation stability} of DNNs, which can be used to describe the robustness, following the spirit of \citet{wu2021wider, https://doi.org/10.48550/arxiv.2203.11864}. 
\begin{definition}[\emph{perturbation stability}]\label{def:pert}
The perturbation stability of a neural network $\bm{f}(\bm x; \bm W): \mathbb{R}^d \mapsto \mathbb{R}^o$ parameterized by the neural network parameter $\bm W$ under the data distribution $\mathcal{D}_X$ and a perturbation radius $\epsilon$ is defined as follows\footnote{The definition of perturbation stability slightly differs in the lazy and non-lazy training regime of DNNs. Here the lazy/non-lazy training regime indicates that neural network parameters change little/much during training. These two phases are determined by different initializations~\citep{woodworth2020kernel, JMLR:v22:20-1123}. We will detail this in our main results.}:
\begin{equation}
\mathscr{P}(\bm{f}, \epsilon) = \mathbb{E}_{\bm{x},\hat{\bm{x}}, \bm W} \left \| \nabla_{\bm{x}} \bm{f}(\bm{x}; \bm W)^{\top}(\bm{x}-\hat{\bm{x}}) \right \|_2\,,\quad \forall \bm{x} \sim \mathcal{D}_X,~~ \hat{\bm{x}} \sim \text{Unif}(\mathbb{B} (\epsilon, \bm{x}))\,,
\end{equation}
where $\hat{\bm{x}}$ is uniformly sampled from an $\ell_2$ norm ball of $\bm x$ with radius $\epsilon$, denoted as $\text{Unif}(\mathbb{B} (\epsilon, \bm{x}))$. 
\end{definition}
Since our definition of the \emph{perturbation stability} takes the expectation of the clean and the adversarial data points, it is natural to describe the \emph{average robustness} of a neural network. It can be noticed that the larger value of the \emph{perturbation stability} means worse robustness in average, i.e., \emph{average robustness}.

\begin{figure}[t]
\centering
    \includegraphics[width=0.8\linewidth]{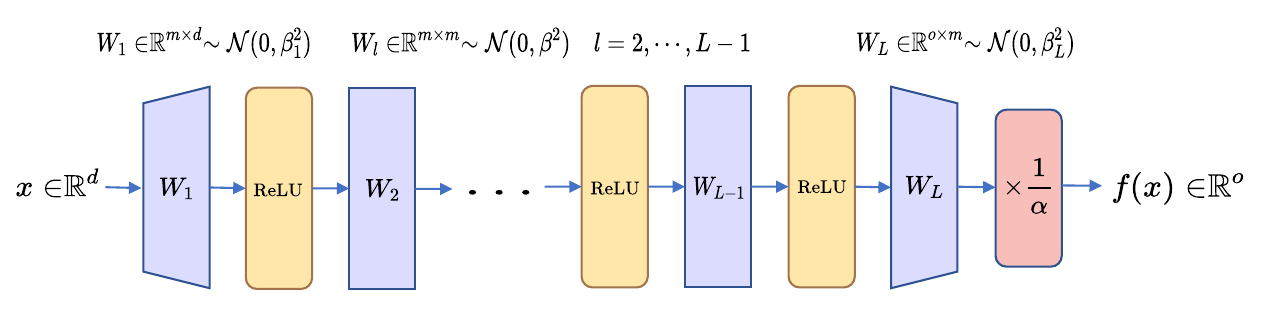}
\caption{Schematic of our deep fully connected ReLU neural network.}\vspace{-0.15cm}
\label{fig:network}
\end{figure}

\begin{table}[t]
\small
\caption{Comparison of the \emph{perturbation stability} of a deep ReLU neural network (see~\cref{fig:network}) under three common Gaussian initializations with different variances.
For a formal definition of this neural network please refer to \cref{eq:deep_network}.}
\label{table:compare_initialization}
\centering
\begin{tabular}{c | c | c }
\toprule 
Initialization name & Initialization form & Our bound for $\mathscr{P}(\bm{f}, \epsilon)/\epsilon$ \\
\midrule
~\citet{lecun2012efficient} & $\beta_1 = \sqrt{\frac{1}{d}}$, $\beta = \beta_L = \sqrt{\frac{1}{m}}$, $\alpha = 1$ & $\bigg(\sqrt{\frac{L^3m}{d}}e^{-m/L^3} + \sqrt{\frac{o}{d}} \bigg)(\frac{\sqrt{2}}{2})^{L-2}$\\
\midrule
~\citet{he2015delving} & $\beta_1 = \sqrt{\frac{2}{d}}$, $\beta = \beta_L =  \sqrt{\frac{2}{m}}$, $\alpha = 1$ & $\sqrt{\frac{L^3m}{d}}e^{-m/L^3} + \sqrt{\frac{o}{d}}$\\
\midrule
~\citet{allen2019convergence} & $\beta = \beta_1 =  \sqrt{\frac{2}{m}}$, $\beta_L = \sqrt{\frac{1}{o}}$, $\alpha = 1$ & $\sqrt{\frac{L^3m}{o}}e^{-m/L^3} + 1 $\\
\midrule
\end{tabular}
\end{table}

Based on the \emph{perturbation stability}, we study the \emph{average robustness} of neural networks under different initializations in (selected) wide and narrow, deep and shallow, as well as lazy and non-lazy training settings.
Generally, non-lazy training makes the analysis of neural networks intractable as DNNs in this regime cannot be simplified as a time-independent model~\citep{chizat2018global}, and accordingly, the analysis in this regime is mainly restricted to the two-layer setting~\citep{1804.06561, mei2019mean}.

Overall, our results suggest that the width (\emph{good}) helps \emph{average robustness} in the over-parameterized regime but the depth (\emph{bad}) can help only under certain initializations (\emph{ugly}). To be specific, we make the following contributions and findings under the lazy/non-lazy training regimes, see Table~\ref{table:compare_initialization}.

In the {\bf lazy training} regime, the derived upper-bounds for DNNs, suggest that 
\begin{itemize}
    \item along with the increase in width, the robustness
    firstly becomes worse in the under-parameterized regime and then gets better, and finally tends to be a constant in highly over-parameterized regimes, which implies the existence of a phase transition.
    \item the depth has more complex tendency on robustness, which largely depends on the initialization and the training mode. It can be grouped into two main classes (\emph{cf.}, Table~\ref{table:compare_initialization}): depth helps robustness in an exponential order under the LeCun initialization~\citep{lecun2012efficient}, whereas it hurts robustness in a polynomial order under He-initialization~\citep{he2015delving} and under Neural Tangent Kernel (NTK) initialization~\citep{allen2019convergence}.
\end{itemize}

Surprisingly, standard tools on training dynamics of neural networks \citep{allen2019convergence,du2018algorithmic} are sufficient to obtain our bounds, which explain the relationship between robustness and the structural/architectural parameters of neural network. Our theoretical developments improve the results by~\citet{huang2021exploring, wu2021wider}, and are supported by empirical evidence. 

In the {\bf non-lazy} training regime, we derive upper-bounds for two-layer networks, suggesting that 
\begin{itemize}
    \item the width improves the robustness under different initializations.
    \item the convergence rates of the average robustness is affected by the initialization. 
\end{itemize}
We also derive a sufficient condition to identify when DNNs enter in this regime, as an initial but first attempt on understanding DNNs in this regime. Our technical contribution lies in connecting robustness to changes of neural network parameters during the early stages of training, which could expand the application scope of deep learning theory beyond \emph{lazy training} analysis \citep{jacot2018neural, allen2019convergence}.

{\bf Notations:} 
We use the shorthand $ [n]:= \{1,2,\dots, n \}$ for a positive integer $n$. 
We denote by $a(n) \lesssim b(n)$: there exists a positive constant $c$ independent of $n$ such that $a(n) \leqslant c b(n)$. The standard Gaussian distribution is $\mathcal{N}(0, 1)$ with the zero-mean and the identity variance. Uniform distribution inside the sphere is $\text{Unif}(\mathbb{B} (\epsilon, \bm{x}))$ with the center $\bm{x}$ and radius $\epsilon$. We follow the standard Bachmann–Landau notation in complexity theory e.g., $\mathcal{O}$, $o$, $\Omega$, and $\Theta$ for order notation.
\section{Related work}
\label{sec:related}

DNNs are demonstrated to be sensitive to adversarially chosen but undetectable noise both empirically \citep{szegedy2013intriguing} and theoretically~\citep{huang2021exploring, bubeck2021a}. Adversarial training~\citep{1802.00420, croce2020reliable,zhang2020over} is a reliable way to obtain adversarially robust neural network. Nevertheless, improving the overall robustness of neural networks is still an unsolved problem in machine learning, especially when coupling with initializations and parameters.

{\bf Over-parameterized neural networks under lazy/non-lazy training regimes:} Modern DNNs in practice~\citep{he2016deep} work under the setting where the number of parameters is (much) larger than the number of training data. Analysis of DNNs in terms of optimization~\citep{safran2021effects, zhou2021local} and generalization~\citep{cao2019generalization} has received great attention in deep learning theory~\citep{zhang2021understanding}.

In deep learning theory, neural tangent kernel (NTK) \citep{jacot2018neural} and mean field~\citep{1804.06561} analysis are two powerful tools for neural network analysis. To be specific, NTK builds an equivalence between training dynamics by gradient-based algorithms of DNNs and kernel regression under a specific initialization, and thus allows the analysis of deep networks~\citep{allen2019convergence, du2019gradient, chen2020much}. However, the NTK requires neural networks to belong in the \emph{lazy training} regime~\citep{chizat2019lazy}, where neural networks are able to achieve zero training loss but the parameters change little, or even remain unchanged during training. In contrast, mean-field theory establishes global convergence by casting the network weights during training as an evolution in the space of probability distributions under some certain initializations~\citep{1804.06561,chizat2018global}. This strategy goes beyond lazy training regime, which allows for neural networks parameters to change in a constant order after training.

If the neural networks parameters change a lot after training, or even tend to infinity, then neural networks work in the \emph{non-lazy} training regime. Analysis of DNNs under this setting appears intractable and challenging, so the current work mainly focus as two-layer neural networks~\citep{1803.08367,JMLR:v22:20-1123}.

{\bf Robustness and over-parameterization} 
\citet{goodfellow2015explaining} demonstrate that adversarial learning helps robustness and reduces overfitting. Many works focus on influencing factors of adversarial examples and robustness of the neural network~\citep{schmidt2018adversarially, zhang2020understanding, allen2022feature}. The relation between model capacity and robustness is empirically investigated by~\citet{madry2019deep}, i.e., a neural network with insufficient capacity can seriously hurt the robustness.~\citet{pmlr-v134-bubeck21a} theoretically study the inherent trade-off between the size of neural networks and their robustness, and they claim that over-parameterization is necessary for the robustness of two-layer neural networks.

However, some recent works propose the opposite view. Under the lazy training regime,~\citet{huang2021exploring} demonstrate that when over-parameterized neural networks get wider, the robustness decreases in a polynomial order. Similarly, the depth hurts the robustness in an exponential order. ~\citet{wu2021wider} affirm the view of~\citet{huang2021exploring} on the width. However for depth, they derive a stronger bound that the robustness gets worse in a polynomial decay as the depth increases, as suggested by \citet{https://doi.org/10.48550/arxiv.2201.05149}: over-parameterization hurts robustness. In addition,~\citet{gao2019convergence} also make a similar view: an increased model capacity (i.e., wider width and deeper depth) deteriorates the robustness of neural networks. Nevertheless, we remark that, the results of \citet{https://doi.org/10.48550/arxiv.2201.05149} work in a slightly different setting than that of~\citet{pmlr-v134-bubeck21a} on data interpolation, which requires a careful comparison. Accordingly, we adopt a complementary view to the vast literature. We provide an in-depth theoretical analysis to investigate this apparent contradiction in theory, and to close the gap as much as possible.
\section{Problem setting}
\label{sec:problem_setting}

Let $X \subseteq \mathbb{R}^d$ and $Y \subseteq \mathbb{R}^{o}$ be compact metric spaces. We assume that the training set $\mathcal{D}_{tr} = \{  (\bm x_i, \bm y_i) \}_{i=1}^n $ is drawn from a unknown probability measure $\mathcal{D}$ on $X \times Y$. Its marginal data distribution is denoted by $\mathcal{D}_X$. The goal of the classification task is to learn a neural network  $\bm{f}: X \rightarrow Y$ such that $\bm{f}(\bm x; \bm W)$ parameterized by $\bm W$ is a good approximation of the label $\bm{y} \in Y$ corresponding to a new sample $\bm x \in X$. In this paper, we use the empirical risk $L(\bm{W}) = \frac{1}{2n}\sum_{i=1}^{n}\left \| \bm{f}(\bm x_i; \bm W) - \bm{y}_i \right \|_2^2$. Then we make the following assumption.

\begin{assumption}
\label{assumption:distribution_1}
We assume that the data satisfy $\| \bm x \|_{2} = 1$.
\end{assumption}

{\bf Remark:} This assumption is standard in theory of over-parameterized neural networks and also commonly used in practice~\citep{du2018gradient, du2019gradient, allen2019convergence, oymak2020toward, pmlr-v119-malach20a}. 

\subsection{Network}
\label{ssec:network}

We focus on the typical depth-$L$ fully-connected ReLU neural networks admitting the width $m_l$ of the $l$-th hidden layer, $\forall l \in [L]$  (\emph{cf.}, \cref{fig:network}):

\begin{equation}
    \bm{h}_{i,0} = \bm{x}_i; \quad \bm{h}_{i,l} = \phi(\bm{W}_l\bm{h}_{i,l-1}); \quad \bm{f}(\bm x_i; \bm W) = \bm{f}_{i} = \frac{1}{\alpha}\bm{W}_L\bm{h}_{i,L-1}; \quad \forall l\in[L-1]\,,~ i\in[n]\,,
\label{eq:deep_network}
\end{equation}

where $\bm{x}\in \mathbb{R}^d$, $\bm{f}(\bm{x}) \in \mathbb{R}^o$, $\alpha$ is the scaling factor, and $\phi = \max(0,x)$ is the ReLU activation function. The neural network parameters formulate the tuple of weight matrices $\bm{W} := \{ \bm{W}_i \}_{i=1}^L \in  \{ \mathbb{R}^{m\times d} \times (\mathbb{R}^{m\times m})^{L-2}\times \mathbb{R}^{o \times m} \}$. According to the property $\phi(x) = x\phi^{\prime}(x)$ of ReLU, we have $\bm{h}_{i,l} = \bm{D}_{i,l}\bm{W}_l\bm{h}_{i,l-1}$, where $\bm{D}_{i,l}$ is a diagonal matrix under the ReLU activation function defined as below.
\begin{definition}[Diagonal sign matrix]
\label{def:diagonal_sign_matrix}
For each $i \in [n]$, $l\in [L-1]$ and $k \in [m]$, the diagonal sign matrix $\bm{D}_{i,l}$ is defined as: $(\bm{D}_{i,l})_{k,k} = 1\left \{ (\bm{W}_l\bm{h}_{i,l-1})_k \geq 0 \right \} $.
\end{definition}

In our setting, we consider the standard Gaussian initialization with different variances that includes three typical initialization schemes in practice.

{\bf Initialization:} Let $m_0 = d$, $m_L = o$ and $m_2 = \cdots = m_{L-1} = m$, we make the standard random initialization $[\bm{W}_l]_{i,j}\sim \mathcal{N}(0,\beta_l^2)$ for every $(i, j) \in [m_l] \times [m_{l-1}]$ and $l \in [L]$.
Choosing a different variance, our work holds for three commonly used Gaussian initializations, i.e., LeCun initialization~\citep{lecun2012efficient}, He-initialization~\citep{he2015delving} and Neural Tangent Kernel (NTK) initialization~\citep{allen2019convergence}, refer to the formal definition in \cref{table:compare_initialization} for details.

\subsection{Discussion on various robustness metrics} 

In~\cref{sec:intro}, we have proposed our robustness metric: perturbation stability (\emph{cf.}, \cref{def:pert}). This metric can be viewed as an expectation of the inner product of first-order approximation of adversarial risk~\citep{madry2019deep} and the perturbations with the uniform distribution, which measures the \emph{average robustness} of the neural network. As we mentioned before, under the same perturbation radius $\epsilon$, a smaller value of $\mathcal{P}(\bm{f}, \epsilon)$ implies better \emph{perturbation stability}, that is better \emph{average robustness}. Previous works~\citep{hein2017formal, weng2018evaluating, wu2021wider, bubeck2021a} use Lipschitzness to describe the robustness of the network, suggesting that smaller Lipschitzness leads to robust models. However, Lipschitzness is only a worst-case measure, and cannot reasonably describe the average changes of the entire dataset. Instead, we follow the measure of~\citet{wu2021wider, https://doi.org/10.48550/arxiv.2203.11864}, that aims to comprehensively consider the overall distribution of the data, not only the extreme case.
Besides, the worst-case robustness can be extended to a probabilistic robustness view~\citep{robey2022probabilistically}, which shares a similar spirit as our average robustness concept.
\citet{schmidt2018adversarially} present another definition of robustness, depending on the misclassified error of an adversarial data point.
Instead, our perturbation stability measures the function value changes at the clean data point via Taylor expansion which can exclude the influence of the learning capacity of the network.
\section{Main results}

In this section, we state the main theoretical results. Firstly, in~\cref{ssec:bound_lazy} we provide the upper bound of the \emph{perturbation stability} in lazy training regime for deep neural networks defined by~\cref{eq:deep_network}. The sufficient condition that the neural network~\cref{eq:deep_network} is under non-lazy training regime is given in~\cref{ssec:Sufficient_condition_non_lazy}. Finally, in~\cref{ssec:bound_non_lazy}, we provide the upper bound on the \emph{perturbation stability} during early training of two-layer network under the non-lazy training regime. 

\subsection{Upper bound of the perturbation stability of DNNs under the lazy training regime} 
\label{ssec:bound_lazy}

We are now ready to state the main results under the lazy training regime. The following theorem provides the upper bound of the \emph{perturbation stability} and connects 
to the width, and the depth of a deep fully-connected neural network under different standard Gaussian initializations.
The proof of~\cref{thm:perturbation_stability_lazy} is deferred to~\cref{sec:proofs_thm1}.

\begin{theorem}
\label{thm:perturbation_stability_lazy}
Given an $L$-layer neural network $\bm{f}$ defined by~\cref{eq:deep_network} trained by $\left \{ (\bm{x}_i, \bm{y}_i) \right \} _{i=1}^n$ satisfying~\cref{assumption:distribution_1}, for the convenience of analysis, we set $\alpha = 1$ and $\beta : = \beta_2 = \dots = \beta_{L-1}$, define a constant $\gamma := \beta/\sqrt{\frac{2}{m}}$. Then, under a small perturbation $\epsilon$, we have the following:
\begin{equation}
    \frac{\mathscr{P}(\bm{f}, \epsilon)}{\epsilon} \lesssim \bigg(\sqrt{L^3m^2\beta_1^2\beta_L^2}e^{-m/L^3} + \sqrt{m o \beta_1^2 \beta_L^2} \bigg)\gamma^{L-2} \,.
\label{eq:thm_1}
\end{equation}
\end{theorem}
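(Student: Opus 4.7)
The plan is to bound the quantity $\|\nabla_{\bm x}\bm f(\bm x)^\top(\bm x-\hat{\bm x})\|_2$ pointwise and then take expectations. Using the ReLU identity $\phi(z)=z\,\phi'(z)$, the forward pass can be written as $\bm h_{i,l}=\bm D_{i,l}\bm W_l \bm h_{i,l-1}$, which gives the closed-form Jacobian
\begin{equation*}
\nabla_{\bm x}\bm f(\bm x)^\top=\frac{1}{\alpha}\bm W_L\bm D_{L-1}\bm W_{L-1}\cdots \bm D_1 \bm W_1.
\end{equation*}
Since $\hat{\bm x}\sim\text{Unif}(\mathbb{B}(\epsilon,\bm x))$ satisfies $\|\bm x-\hat{\bm x}\|_2\leq\epsilon$, sub-multiplicativity of the operator norm gives $\mathscr{P}(\bm f,\epsilon)\leq \epsilon\cdot\mathbb{E}_{\bm x}\|\nabla_{\bm x}\bm f(\bm x)\|_{\mathrm{op}}$ (with $\alpha=1$), so the task reduces to bounding the expected spectral norm of the product above.

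My next step is to decompose the product into three blocks: the input weight $\bm W_1$ of scale $\beta_1$, the middle chain $\bm M:=\bm D_{L-1}\bm W_{L-1}\cdots \bm D_1$ with $\bm W_l$ of scale $\beta$, and the readout $\bm W_L$ of scale $\beta_L$. The core technical estimate is a high-probability spectral bound on $\bm M$ of the form $\|\bm M\|_{\mathrm{op}}\lesssim \gamma^{L-2}$ with probability at least $1-e^{-m/L^3}$. This is a rescaled version of the standard stability lemmas of \citet{allen2019convergence} and \citet{du2019gradient} for chains of ReLU-masked Gaussian matrices with variance $2/m$; the factor $\gamma^{L-2}=(\beta/\sqrt{2/m})^{L-2}$ arises by pulling out the variance rescaling. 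For the input and output layers I would use standard Gaussian spectral-norm concentration to obtain $\|\bm W_1\|_{\mathrm{op}}\lesssim \beta_1\sqrt{m}$ and $\|\bm W_L\|_{\mathrm{op}}\lesssim \beta_L\sqrt{m}$ with the same quality of tail, which combined with the above produces the $\gamma^{L-2}$ term in the bound when all three bounds hold simultaneously.

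To convert these high-probability statements into the in-expectation bound of the theorem, I would split the expectation on the event where all three concentration statements are valid and its complement. On the good event one obtains the $1\cdot \gamma^{L-2}$ contribution, after absorbing the $\mathcal{O}(1)$ factors from $\beta_1\sqrt{m}\cdot \beta_L\sqrt{m}$ using the fact that in the relevant scalings this product is $\mathcal{O}(1)$. On the complement event, I would use the crude deterministic estimate that the operator norm of the full Jacobian is bounded above by the product of Frobenius norms of the weight matrices, which are $\chi$-type variables; taking their first moments and multiplying by the tail probability $e^{-m/L^3}$ gives the correction term $\sqrt{\pi L^3 m^2\beta_1^2\beta_L^2/8}\,e^{-m/L^3}\cdot \gamma^{L-2}$, where the $\sqrt{\pi/8}$ factor is the standard half-normal first-moment constant and the $L^3$ in the square root reflects a union bound across layers when controlling the moments.

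The main obstacle is the middle-chain concentration $\|\bm M\|_{\mathrm{op}}\lesssim \gamma^{L-2}$: since $\bm D_l$ is measurable with respect to $\bm W_1,\dots,\bm W_l$, the matrices in the product are highly dependent, and naively bounding $\|\bm D_l\bm W_l\|_{\mathrm{op}}$ layer by layer would give an exponential blow-up $m^{L/2}\beta^L$. The key is to condition layer by layer and exploit that $\bm D_l$ has roughly $m/2$ nonzero diagonal entries with overwhelming probability (because the pre-activation is an unbiased Gaussian mixture conditional on previous layers), which effectively halves the variance at each level and precisely cancels the $\sqrt{m}$ growth that would otherwise accumulate. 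Carefully tracking this across $L$ layers, together with the union bound over depth, is what yields the $m\gtrsim L^3$ regime in the exponent and ultimately the form stated in \cref{eq:thm_1}.
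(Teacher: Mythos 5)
There are genuine gaps here, and the largest one is that your argument never engages with the fact that $\bm f$ is the \emph{trained} network. Every probabilistic ingredient you invoke --- Gaussian spectral-norm concentration for $\bm W_1,\bm W_L$, the ``ReLU-masked Gaussian chain'' bound for the middle block, $\chi$-type Frobenius norms on the bad event --- is a statement about weights at initialization, and none of it applies verbatim after gradient descent. The paper's proof is organized precisely around this issue: it writes $\nabla_{\bm x}\bm f(\bm x)^\top(\bm x-\hat{\bm x})$ as (trained Jacobian $-$ initialization Jacobian)$(\bm x-\hat{\bm x})$ plus the initialization Jacobian applied to $(\bm x-\hat{\bm x})$, controls the first difference via the lazy-training perturbation lemmas of \citet{pmlr-v97-allen-zhu19a} (weights within radius $\omega$ of initialization), and then integrates the resulting tail $\exp\bigl(-\Omega(m/L^3+\omega^2)\bigr)$ over the deviation level $\delta$. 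That Gaussian-type integral is the actual source of both the $e^{-m/L^3}$ factor and the $\sqrt{\pi L^3 m^2\beta_1^2\beta_L^2/8}$ prefactor --- not a half-normal moment on a bad event and not a union bound over layers, as you conjecture.

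Your good-event/bad-event split also does not close quantitatively. First, $\mathbb{E}[X\,\mathbf{1}_{\mathrm{bad}}]\neq\mathbb{E}[X]\cdot\mathbb{P}(\mathrm{bad})$ when the bad event is defined by the same weights; you would need Cauchy--Schwarz or an a.s.\ bound. More fatally, the product of Frobenius norms of the middle layers scales like $(\beta m)^{L-2}=\gamma^{L-2}(2m)^{(L-2)/2}$, so your bad-event contribution carries an extra $(2m)^{(L-2)/2}$ that $e^{-m/L^3}$ does not absorb, and the claimed form of \cref{eq:thm_1} is not recovered. Finally, passing to $\mathbb{E}\|\nabla_{\bm x}\bm f(\bm x)\|_{\mathrm{op}}$ by sub-multiplicativity discards the fixed-direction structure of the perturbation: the clean ``$+1$'' constant in the theorem comes from an \emph{exact second-moment} computation of $\|\bm W_L\bm D_{L-1}\cdots\bm D_1\bm W_1(\bm x-\hat{\bm x})\|_2^2$ along the fixed vector $\bm x-\hat{\bm x}$, where each layer contributes a factor $\beta^2\,\chi^2(\varrho)$ with $\varrho\sim\mathrm{Ber}(m,1/2)$, hence expected ratio exactly $m\beta^2/2=\gamma^2$ (this is the rigorous version of your ``half the neurons are active'' heuristic). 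An operator-norm bound on the chain is genuinely larger (by at least $\mathrm{poly}(L)$ per the known lemmas) and would not yield the stated constant term.
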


{\bf Remark:} Our results cover the effect of the width and depth of neural network on robustness under various common initializations depending on $\gamma \gtreqqless 1$.

{\bf Comparison with three commonly used initializations:}
For the initializations used in practice, our theoretical results can be mainly divided into two classes: 1) The depth helps robustness in an exponential order under the LeCun initialization: \cref{thm:perturbation_stability_lazy} implies that $\bigg(\sqrt{\frac{ L^3m}{d}}e^{-m/L^3} + \sqrt{\frac{o}{d}} \bigg)(\frac{\sqrt{2}}{2})^{L-2}$. 2) The depth hurts
the robustness in a polynomial order under He-initialization $\bigg(\sqrt{\frac{ L^3m}{d}}e^{-m/L^3} + \sqrt{\frac{o}{d}}\bigg)$ and under the NTK initialization $\bigg(\sqrt{\frac{ L^3m}{o}}e^{-m/L^3} + 1\bigg)$ derived by \cref{thm:perturbation_stability_lazy}. 
When employing other initializations, the robustness could be hurted in a exponential order. Below, we elaborate on these three initalizations:

    {\bf 1) LeCun initialization} ($\gamma=\frac{\sqrt{2}}{2}$): The order has three main parts: $\sqrt{\frac{L^3m}{d}}$, $e^{-m/L^3}$ and $(\frac{\sqrt{2}}{2})^{L-2}$.  
    Regarding the width $m$, the first part $\sqrt{\frac{ L^3m}{d}}$ is an increasing function of $m$ and the second part $e^{-m/L^3}$ is a decreasing function of $m$. 
    Accordingly, in the under-parameterized region (e.g., $m$ is small), $\sqrt{\frac{L^3 m}{d}}$ plays a major role, so the stability will increase as $m$ increases. After a critical point, $e^{-m/L^3}$ plays a major role, so the stability will decrease as $m$ increases. When $m$ tends to infinity, the first term of the bound tends to $0$. Hence the \emph{perturbation stability} tends to be a constant and independent of the width $m$ as the width $m$ tends to infinity.
    It means that there exists a phase transition phenomenon between the \emph{perturbation stability} and over-parameterization in terms of the width $m$.
    
    Regarding the depth $L$, the constant $\gamma = \frac{\sqrt{2}}{2}$ implies that the third part has a faster decrease speed than the first and second parts and plays a major role in the tendency. The \emph{perturbation stability} of the neural network exponentially decreases with respect to the depth.
    That means, for the LeCun initialization, the deeper the network, the better the robustness. Nevertheless, the energy of the LeCun initialization decreases as the network depth increases due to the variance $\beta = \sqrt{\frac{1}{m}}$. Since the activation function ReLU loses half of the energy in every layer, training a deep network with the LeCun initialization is difficult. Hence, we need a trade off between robustness and training difficulty regarding the network depth in practice for the LeCun initialization. 
    
    {\bf 2) He initialization and NTK initialization} ($\gamma=1$): the 
    bounds for these two initializations are almost the same, and only differ in the feature dimension.
    We can see that phase transition phenomena exist under these two initializations regarding the width $m$, similar to the LeCun initialization. 
    Regarding the depth, when $L$ is large, the first part $\sqrt{L^3}$ plays a major role in the
    \emph{perturbation stability}. So these two initializations hurt the robustness of the neural network at a polynomial order.
    
    All of the three initializations admit $\gamma \leq 1$. If some initialization schemes admit $\gamma > 1$, then the depth $L$ will hurt the robustness of the neural network at an exponentially increasing rate.

{\bf Comparison with previous work:}
~\cref{thm:perturbation_stability_lazy} provides a new relationship between the robustness with width and depth of DNNs. We compare our result with
\citep{wu2021wider, huang2021exploring} using a basic NTK initialization~\citep{allen2019convergence} (suppose that $m\gg o$ and $m\gg d$). 
For a better comparison, we derive their results under our robustness metric \emph{perturbation stability}, as reported by~\cref{table:compare_bound}.

\begin{table}[t]
\caption{Comparison of the orders of the proposed bound with other two recent works. Our results are general to cover both under- and over-parameterized regimes, which expands the application scope of previous results~\citep{wu2021wider, huang2021exploring}.
(The original result of ~\citep{wu2021wider} can be reduced to $\sqrt{mL}$ as the $\frac{m}{(\log m)^6} \geq L^{12}$ condition is required).}
\label{table:compare_bound}
\centering
\begin{tabular}{c | c | c | c}
\toprule
Metrics & Our result & \citet{wu2021wider} & \citet{huang2021exploring} \\
\midrule
$\mathscr{P}(\bm{f}, \epsilon) / \epsilon$ & $\sqrt{\frac{L^3m}{o}}e^{-m/L^3} + 1$ & $L^2 m^{1/3}\sqrt{\log m} + \sqrt{mL}$ & $2^{\frac{3L-5}{2}}\sqrt{m}$\\
\midrule
\end{tabular}
\end{table}

Our results indicate a behavior transition on the width. For the over-parameterized regime, the robustness of the neural network only depends on the perturbation energy, and it is almost independent of the width $m$. The results on the width are significantly better than the previous results increasing as the square root of $m$. For depth $L$, our results provide a tighter and more precise estimate as compared to \citep{wu2021wider} in a two-degree polynomial increasing order and \citep{huang2021exploring} in an exponential increasing order.

Furthermore, compared with the results of~\citet{pmlr-v134-bubeck21a} showing that the robustness of the two-layer neural network becomes better with the increase of the number of neurons (i.e., width), we provide a more detailed and refined result on the robustness of DNNs under different initializations and under-parameterized regimes.

\subsection{Sufficient condition for neural network under non-lazy training regime}
\label{ssec:Sufficient_condition_non_lazy}

Beyond the lazy training regime, we turn our attention to the non-lazy training regime and provide a sufficient condition for a (well-chosen) initialization of neural networks when entering into the non-lazy training regime. This is a first attempt to understand the training dynamics of DNNs in this regime.

Our result requires a further assumption on the data and the empirical risk as follows.
\begin{assumption}
\label{assumption:distribution_2}
For a single-output network defined in~\cref{eq:deep_network}, we assume that $\max_{i \in [n]} y_i \geq C_1 > 0$ for some constant $C_1$. 
We also assume that the neural network can be well-trained such that the empirical risk is $\mathcal{O}(\frac{1}{n})$.
\end{assumption}

{\bf Remark:} This is a common assumption in the field of optimization~\citep{song2021subquadratic} in the under- and over-parameterized regime, and we can even assume zero risk. Here we follow the specific assumption of~\citet{JMLR:v22:20-1123}.

Now we are ready to present our result: a sufficient condition to identify when deep ReLU neural networks fall into the non-lazy training regime, as a promising extension of \cite{JMLR:v22:20-1123} on two-layer neural networks.
To avoid cluttering the analysis, we assume a single-output i.e., $o = 1$. The proof of~\cref{thm:sufficient_condition_of_non-lazy_training} is deferred to~\cref{sec:proofs_thm2}.

\begin{theorem}
\label{thm:sufficient_condition_of_non-lazy_training}
Given an $L$-layer neural network $\bm{f}$ defined by~\cref{eq:deep_network} with $o = 1$, trained by $\left \{ (\bm{x}_i, \bm{y}_i) \right \} _{i=1}^n$, under Assumptions~\ref{assumption:distribution_1} and~\ref{assumption:distribution_2}, suppose that  $\alpha \gg (m^{3/2}\sum_{i=1}^{L}\beta_i)^L$ and $m \gg d$ in \cref{eq:deep_network}, then
for sufficiently large $m$, with probability at least $1-(L-2)\exp(-\Theta(m^2))-\exp(-\Theta(md))-\exp(-\Theta(m))$ over the initialization, we have:

\begin{equation*}
    \sup_{t \in [0,+\infty)} \frac{\left \| \bm{W}_l(t)-\bm{W}_l(0) \right \|_{\mathrm{F}} }{\left \| \bm{W}_l(0) \right \|_{\mathrm{F}}} \gg 1\,, \quad \forall l \in [L] \,.
\end{equation*}
\end{theorem}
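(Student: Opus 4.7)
The plan is to argue by contradiction: if every layer were quasi-stationary (ratio $O(1)$), then the scaling factor $\alpha$ would force the network output to stay essentially zero, contradicting the fact that training eventually drives the loss to $\mathcal{O}(1/n)$ while some label has magnitude at least $C_1$. The argument decomposes cleanly into a probabilistic estimate at initialization, a deterministic operator-norm bound along the training trajectory, and a size check at the ``trained'' time $t^*$.

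\textbf{Step 1: Initialization concentration.} First I would record that each $\|\bm{W}_l(0)\|_{\mathrm{F}}^2$ is $\beta_l^2$ times a chi-square with $m_l m_{l-1}$ degrees of freedom. Standard chi-square tail bounds then yield, for some absolute constants $c_1, c_2$,
\begin{equation*}
c_1\, \beta_l \sqrt{m_l m_{l-1}} \;\leq\; \|\bm{W}_l(0)\|_{\mathrm{F}} \;\leq\; c_2\, \beta_l \sqrt{m_l m_{l-1}}\,.
\end{equation*}
Taking a union bound over the $L-2$ inner layers ($m_l m_{l-1}=m^2$), layer $1$ ($=md$), and layer $L$ with $o=1$ ($=m$) recovers exactly the claimed failure probability $(L-2)e^{-\Theta(m^2)}+e^{-\Theta(md)}+e^{-\Theta(m)}$.

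\textbf{Step 2: Forward bound using $\phi(0)=0$ and $1$-Lipschitzness of ReLU.} For any weights $\bm{W}(t)$ and any unit $\bm{x}$ I would iteratively unfold $\bm{h}_{i,l}(t)=\phi(\bm{W}_l(t)\bm{h}_{i,l-1}(t))$ to get the deterministic inequality
\begin{equation*}
|\bm{f}(\bm{x};\bm{W}(t))| \;\leq\; \tfrac{1}{\alpha}\prod_{l=1}^{L}\|\bm{W}_l(t)\|_{\mathrm{op}} \;\leq\; \tfrac{1}{\alpha}\prod_{l=1}^{L}\|\bm{W}_l(t)\|_{\mathrm{F}}\,.
\end{equation*}

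\textbf{Step 3: Extract a large output at some training time.} Using Assumption~\ref{assumption:distribution_2}, the empirical risk being $\mathcal{O}(1/n)$ gives $\sum_i (\bm{f}(\bm{x}_i;\bm{W}(t^*))-y_i)^2=\mathcal{O}(1)$, so the index $i^*$ with $y_{i^*}\geq C_1$ must satisfy $|\bm{f}(\bm{x}_{i^*};\bm{W}(t^*))|\gtrsim C_1$. Combined with Step~2, this forces
\begin{equation*}
\prod_{l=1}^{L}\|\bm{W}_l(t^*)\|_{\mathrm{F}} \;\gtrsim\; \alpha\, C_1\,.
\end{equation*}

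\textbf{Step 4: Close the contradiction.} Suppose every ratio in the conclusion were bounded by an absolute constant $K$. Then $\|\bm{W}_l(t^*)\|_{\mathrm{F}}\leq (K{+}1)\|\bm{W}_l(0)\|_{\mathrm{F}}\lesssim \beta_l\sqrt{m_l m_{l-1}}$ by Step~1, so
\begin{equation*}
\prod_{l=1}^{L}\|\bm{W}_l(t^*)\|_{\mathrm{F}} \;\lesssim\; \Bigl(\prod_{l=1}^L \beta_l\Bigr)\sqrt{\prod_{l=1}^{L} m_l m_{l-1}} \;=\; \Bigl(\prod_{l=1}^L \beta_l\Bigr) m^{L-1}\sqrt{d}\,.
\end{equation*}
Applying AM--GM, $\prod_l\beta_l\leq L^{-L}(\sum_l\beta_l)^L$, the hypothesis $\alpha\gg (m^{3/2}\sum_l\beta_l)^L$ and $m\gg d$ give
\begin{equation*}
\alpha C_1 \;\gg\; m^{3L/2}L^L \prod_l \beta_l \;\gg\; m^{L-1}\sqrt{d}\prod_l \beta_l\,,
\end{equation*}
which contradicts the lower bound from Step~3. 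Hence some layer $l$ must violate the ratio bound, proving the theorem.

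\textbf{Main obstacle.} The bookkeeping is easy; the delicate point is Step~3, where I need the loss-to-output-magnitude translation to be valid under the mild form of Assumption~\ref{assumption:distribution_2}. A cleaner way is to simply assume $C_1$ large enough that $C_1-\mathcal{O}(1)\gtrsim C_1$, or to absorb the additive $\mathcal{O}(1)$ into the $\gg$ in the hypothesis on $\alpha$; this is why the hypothesis is stated with $\gg$ rather than $\geq$ a sharp constant, and is also why the theorem naturally gives a contradiction only at \emph{some} layer $l$ (which is enough to place the trajectory outside the lazy regime).
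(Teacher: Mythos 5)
Your Steps 1--3 coincide with the paper's argument: concentration of $\|\bm{W}_l(0)\|_{\mathrm{F}}$ (giving exactly the stated failure probability), the layer-wise operator/Frobenius product bound on $|\bm{f}|$ via $1$-Lipschitzness of ReLU, and the translation of $L(\bm{W}(T^\star))=\mathcal{O}(1/n)$ into $\prod_{l}\|\bm{W}_l(T^\star)\|_{\mathrm{F}}\gtrsim\alpha$. The divergence, and the genuine gap, is in Step 4. Your contradiction argument only rules out the event that \emph{all} layers have bounded relative change, so it delivers ``there exists a layer $l$ with $\sup_t\|\bm{W}_l(t)-\bm{W}_l(0)\|_{\mathrm{F}}/\|\bm{W}_l(0)\|_{\mathrm{F}}\gg1$.'' The theorem's conclusion has a free $l$, and the paper proves it for \emph{every} layer. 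You flag this yourself, but it is not a cosmetic issue: a single moving layer is compatible with all the other layers being frozen, which is a strictly weaker statement about the training regime.

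The missing ingredient is the balancedness invariant of gradient flow for positively homogeneous networks, cited in the paper as \citet[Corollary 2.1]{du2018algorithmic}: $\frac{\mathrm{d}}{\mathrm{d}t}\|\bm{W}_{l_1}(t)\|_{\mathrm{F}}^2=\frac{\mathrm{d}}{\mathrm{d}t}\|\bm{W}_{l_2}(t)\|_{\mathrm{F}}^2$ for all pairs $l_1,l_2$. Integrating gives $\|\bm{W}_{l_1}(T^\star)\|_{\mathrm{F}}^2-\|\bm{W}_{l_1}(0)\|_{\mathrm{F}}^2=\|\bm{W}_{l_2}(T^\star)\|_{\mathrm{F}}^2-\|\bm{W}_{l_2}(0)\|_{\mathrm{F}}^2$, hence $\|\bm{W}_{l_1}(T^\star)\|_{\mathrm{F}}\leq\|\bm{W}_{l_2}(T^\star)\|_{\mathrm{F}}+\|\bm{W}_{l_1}(0)\|_{\mathrm{F}}$ for any pair. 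Plugging this into the product bound and applying AM--GM (in the opposite direction from yours, to the factors $\|\bm{W}_{l^\star}(T^\star)\|_{\mathrm{F}}+\|\bm{W}_l(0)\|_{\mathrm{F}}$ rather than to $\prod_l\beta_l$) yields
\begin{equation*}
\left\|\bm{W}_{l^\star}(T^\star)\right\|_{\mathrm{F}}\;\geq\;\Bigl(\tfrac{\alpha}{4}\Bigr)^{1/L}-\sqrt{\tfrac{3m^3}{2L^2}}\sum_{l=1}^{L}\beta_l
\end{equation*}
simultaneously for every $l^\star\in[L]$, which is what upgrades ``the product is large'' to ``each factor is large'' and gives the per-layer conclusion. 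Without some such conservation law (or another mechanism distributing the growth across layers), your Step 4 cannot be repaired to yield the universally quantified statement. The rest of your bookkeeping, including the $C_1$ caveat in Step 3, is sound and matches the paper's handling (the paper normalizes to $y_1\geq\tfrac12$ and risk $\leq\tfrac{1}{32n}$ following \citet{JMLR:v22:20-1123}).
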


{\bf Remark:} The condition $\alpha \gg (m^{3/2}\sum_{i=1}^{L}\beta_i)^L$ implies that, a neural network falls in a non-lazy training regime when the variance of the Gaussian initialization is very small.
It can be achieved by a typical case: taking $m \gg L^2$, choosing $\alpha = 1$ and $\forall l \in [L]; \ \beta_l = \frac{1}{m^2}$.
Commonly used initializations such as NTK initialization, LeCun initialization, He's initialization lead to  lazy training.
\subsection{Upper bound of the perturbation stability for two-layer networks in non-lazy training}
\label{ssec:bound_non_lazy}
Unlike lazy training, weights of non-lazy training concentrate on few directions determined by the input data in the early stages of training~\citep{JMLR:v22:20-1123}.
The following theorem describes the neural network \emph{perturbation stability} in the early training stage as a function of network width in the non-lazy training regime.
For ease of description, here we consider a special initialization scheme under the non-lazy regime, the proof of~\cref{thm:perturbation_stability_non_lazy} is deferred to~\cref{sec:proofs_thm3}. 
%First of all, we slightly modify the definition of perturbation stability to better adapt to non-lazy training.
In the non-lazy training regime, the movement of parameters in DNNs does not convergence to zero. 
In this case, the expectation over $\bm W$ at the initialization in our~\cref{def:pert} does not make sense.
Hence we slightly modify the definition of perturbation stability by dropping out the expectation over $\bm W$.

\begin{definition}[\emph{perturbation stability}]\label{def:pert_}
The perturbation stability of a single output neural network $f(\bm x; \bm W): \mathbb{R}^d \mapsto \mathbb{R}$ parameterized by the neural network parameter $\bm W$ under the data distribution $\mathcal{D}_X$ and a perturbation radius $\epsilon$ is defined as follows:
\begin{equation}
\mathscr{P}(f, \epsilon) = \mathbb{E}_{\bm{x},\hat{\bm{x}}} \left \| \nabla_{\bm{x}} f(\bm{x}; \bm W)^{\top}(\bm{x}-\hat{\bm{x}}) \right \|_2\,,\quad \forall \bm{x} \sim \mathcal{D}_X,~~ \hat{\bm{x}} \sim \text{Unif}(\mathbb{B} (\epsilon, \bm{x}))\,,
\end{equation}
where $\hat{\bm{x}}$ is uniformly sampled from an $\ell_2$ norm ball of $\bm x$ with radius $\epsilon$, denoted as $\text{Unif}(\mathbb{B} (\epsilon, \bm{x}))$. 
\end{definition}
Here the modified perturbation stability is a random variable, which is different from the deterministic version in \cref{def:pert}.
Based on our modified metric, we are ready to present our theorem as below.

\begin{theorem}
\label{thm:perturbation_stability_non_lazy}
Given a two-layer neural network with single output $f_t$ defined by~\cref{eq:deep_network} and trained by $\left \{ (\bm{x}_i, y_i) \right \} _{i=1}^n$ satisfying~\cref{assumption:distribution_1}, using gradient descent under the squared loss, consider the following initialization in~\cref{eq:deep_network}: $L=2$, $\alpha \sim 1$, $\beta_1 \sim \beta_2 \sim \beta \sim \frac{1}{m^c}$ with $c \geq 1.5$, $m\gg n^2$ and training time less than a constant that only depends on $n, m$ and $\lambda_0$, then for a small range of perturbation $\epsilon$, with probability at least $1-n\exp(-\frac{n}{2})-\frac{3}{n}$ over initialization, we have the following:
\begin{equation}
    \frac{\mathscr{P}(f_t, \epsilon)}{\epsilon} \leq \Theta \bigg( \frac{\sqrt{n \log m} + n}{m^{c-1}} \bigg(\frac{1}{\sqrt{n^3m}}+\frac{1}{m^{c-0.5}} \bigg)\bigg)\,.
\label{eq:thm_3}
\end{equation}
\end{theorem}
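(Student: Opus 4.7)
My plan is to reduce \emph{perturbation stability} to a bound on the operator norm of the input Jacobian, then control the latter by decomposing each $\bm{W}_l(t)$ around its initialization and handling the drift $\bm{W}_l(t)-\bm{W}_l(0)$ via the gradient-descent updates over the early window $[0,t^{\star}]$. Since $\hat{\bm{x}}\in\mathbb{B}(\epsilon,\bm{x})$ gives $\|\bm{x}-\hat{\bm{x}}\|_2\leq\epsilon$ almost surely, Cauchy--Schwarz and sub-multiplicativity of the operator norm yield
\begin{equation*}
\frac{\mathscr{P}(\bm{f}_t,\epsilon)}{\epsilon}\;\leq\;\mathbb{E}_{\bm{x}}\|\nabla_{\bm{x}}\bm{f}_t(\bm{x})\|_{\mathrm{op}}\;\leq\;\frac{1}{\alpha}\,\mathbb{E}_{\bm{x}}\|\bm{W}_2(t)\|_{\mathrm{op}}\,\|\bm{D}_t(\bm{x})\|_{\mathrm{op}}\,\|\bm{W}_1(t)\|_{\mathrm{op}}\,,
\end{equation*}
and since $\bm{D}_t$ is a diagonal sign matrix with $\|\bm{D}_t\|_{\mathrm{op}}\leq 1$, the task reduces to bounding $\|\bm{W}_1(t)\|_{\mathrm{op}}\|\bm{W}_2(t)\|_{\mathrm{op}}$. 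Writing $\bm{W}_l(t)=\bm{W}_l(0)+\Delta_l(t)$ and applying standard Gaussian-matrix concentration to entries of variance $\beta^2\sim m^{-2c}$ gives, under $m\gg d$, $\|\bm{W}_1(0)\|_{\mathrm{op}}\lesssim\beta(\sqrt{m}+\sqrt{d})\sim m^{-(c-1/2)}$ and $\|\bm{W}_2(0)\|_{\mathrm{op}}\lesssim\beta\sqrt{m}\sim m^{-(c-1/2)}$ with failure probability at most $3/n$.

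The core technical step is to bound $\|\Delta_l(t)\|_{\mathrm{op}}$ uniformly on $[0,t^{\star}]$. The gradients read
\begin{equation*}
\nabla_{\bm{W}_2}L(\bm{W})=\tfrac{1}{n\alpha}\sum_{i}r_i(t)\,\phi(\bm{W}_1(t)\bm{x}_i)^{\!\top},\qquad \nabla_{\bm{W}_1}L(\bm{W})=\tfrac{1}{n\alpha}\sum_{i}r_i(t)\,\bm{D}_i(t)\,\bm{W}_2(t)^{\!\top}\bm{x}_i^{\!\top},
\end{equation*}
with residual $r_i(t)=\bm{f}_t(\bm{x}_i)-y_i$. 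Because $\alpha\sim 1$ and $\beta\sim m^{-c}$ make $|\bm{f}_0(\bm{x}_i)|$ vanishingly small, $|r_i(0)|\asymp|y_i|$, and a bootstrap argument (combined with the early-stopping choice $t\leq t^{\star}$ and Assumption~\ref{assumption:distribution_2}) keeps $|r_i(t)|=O(1)$ throughout the window. Concentration of $\sum_i r_i(t)\phi(\bm{W}_1(t)\bm{x}_i)$ as a sum of sub-Gaussian feature vectors, via per-example concentration of the ReLU activations and a union bound contributing the $n\exp(-n/2)$ probability term, produces a factor of order $\sqrt{n\log m}+n$. Telescoping the gradient-descent updates and plugging in the scalings for $\alpha,\beta$ yields
\begin{equation*}
\|\Delta_2(t)\|_{\mathrm{op}}\;\lesssim\;\frac{\sqrt{n\log m}+n}{m^{c-1}},\qquad \|\Delta_1(t)\|_{\mathrm{op}}\;\lesssim\;\frac{1}{\sqrt{n^3 m}},\qquad t\leq t^{\star}\,.
\end{equation*}

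Combining, $\|\bm{W}_2(t)\|_{\mathrm{op}}\leq\|\bm{W}_2(0)\|_{\mathrm{op}}+\|\Delta_2(t)\|_{\mathrm{op}}\lesssim(\sqrt{n\log m}+n)/m^{c-1}$ (the drift dominates initialization precisely because we are in the non-lazy regime, as verified by $c\geq 3/2$ and $m\gg n^2$), while $\|\bm{W}_1(t)\|_{\mathrm{op}}\lesssim m^{-(c-1/2)}+(n^3 m)^{-1/2}$. Multiplying these bounds, dividing by $\alpha\sim 1$, and taking expectation over $\bm{x}$ (which acts trivially since $\|\bm{D}_t(\bm{x})\|_{\mathrm{op}}\leq 1$ pointwise) yields the announced bound. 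The hard part will be the bootstrap that simultaneously keeps the residuals $r_i(t)$ of constant order and tracks the non-frozen activation patterns $\bm{D}_i(t)$: unlike the lazy case, one cannot freeze the NTK kernel, so the control must rely on the specific scaling $\beta\sim m^{-c}$, on $m\gg n^2$, and on early stopping to prevent too many sign flips from accumulating. This is also where the probability $1-n\exp(-n/2)-3/n$ is assembled, the first term from per-example activation concentration and the second from operator-norm and residual control at initialization.
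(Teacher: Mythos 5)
Your overall strategy matches the paper's: write the input gradient of the two-layer net, split each weight into its initialization plus a drift term, control the drift over the early training window, and multiply out the scalings $\alpha\sim 1$, $\beta\sim m^{-c}$. The cosmetic difference is that you bound $\|\nabla_{\bm{x}}\bm{f}_t\|$ by $\frac{1}{\alpha}\|\bm{W}_2(t)\|_{\mathrm{op}}\|\bm{W}_1(t)\|_{\mathrm{op}}$, whereas the paper uses the neuron-wise bound $\frac{1}{\alpha}\sum_{r}|a_r(t)|\,\|\bm{w}_r(t)\|_2$ and then applies per-neuron estimates $|a_r(t)|\leq R_a^{\star}(t)+|a_r(0)|$, $\|\bm{w}_r(t)\|_2\leq R_w^{\star}(t)+\|\bm{w}_r(0)\|_2$; the two routes produce the same final order (your $\sqrt{m}$-looser bound on $\|\bm{a}(t)\|_2$ is exactly compensated by your $\sqrt{m}$-tighter claim on $\|\Delta_1(t)\|_{\mathrm{op}}$).

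The genuine gap is that the drift bounds --- the entire technical content of the theorem --- are asserted rather than derived. The paper spends five lemmas on precisely this point: (i) concentration of the Gram matrix $\bm{H}(0)$ around $\bm{H}^{\infty}$, which requires the (appendix-stated) assumption $\lambda_0=\lambda_{\min}(\bm{H}^{\infty})>0$ that you never invoke; (ii) stability of $\bm{H}(t)$ as long as $|a_r(t)-a_r(0)|\leq R_a$ and $\|\bm{w}_r(t)-\bm{w}_r(0)\|_2\leq R_w$; and (iii) a coupled continuous induction in which $\lambda_{\min}(\bm{H}(t))\geq\lambda_0/2$ forces $\|\bm{y}-\bm{f}(t)\|_2\leq e^{-\lambda_0 t/2}\|\bm{y}-\bm{f}(0)\|_2$, whose time integral is what makes $R_a^{\star}(t)=\Theta(\sqrt{n\log m}/m^{c})$ and $R_w^{\star}(t)=\Theta(1/\sqrt{n^3m})$ finite and of the stated order; the stopping times $t_1^{\star},t_2^{\star}$ are \emph{defined} as the horizon up to which this induction closes. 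Your substitute --- a bootstrap keeping $|r_i(t)|=O(1)$ via Assumption~2 --- does not suffice: Assumption~2 is not a hypothesis of this theorem, and a merely bounded residual integrated over $[0,t^{\star}]$ gives a drift growing linearly in $t^{\star}$ with no control on its magnitude unless you also supply the exponential decay (or an equivalent definition of $t^{\star}$). Separately, your claim $\|\Delta_1(t)\|_{\mathrm{op}}\lesssim 1/\sqrt{n^3m}$ is strictly stronger than what the per-row bound $\|\bm{w}_r(t)-\bm{w}_r(0)\|_2\leq R_w^{\star}(t)$ yields through the Frobenius norm (which loses a $\sqrt{m}$); it could only follow from exploiting the rank-$n$ structure of the $\bm{W}_1$-gradient, which you do not do. As it stands, the proposal reproduces the answer but not the argument.
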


{\bf Remark:} Under this setting of non-lazy training regime, the robustness and width of the neural network are positively correlated in the early stages of training. That is, as the width $m$ increases in the over-parameterized regime, a Gaussian initialization with smaller variance leads to the robustness increasing in a faster decay. Our result holds for other initialization schemes in the non-lazy training regime, e.g., $c = 2$ leads to $\mathscr{P}(f_t, \epsilon)/\epsilon \leq \Theta \bigg( \frac{\sqrt{n \log m} + n}{m^{2.5}}\bigg)$; and $c = 3$ leads to  $\mathscr{P}(f_t, \epsilon)/\epsilon \leq \Theta \bigg( \frac{\sqrt{n \log m} + n}{m^{4.5}}\bigg)$. 
\section{Numerical evidence}
\label{sec:experiment}

We validate our theoretical results with a series of experiments. In~\cref{ssec:Validation_lazy}  we firstly verify that our initialization settings belong in the lazy and the non-lazy training regimes. In~\cref{ssec:Validation_width}, we explore the effect of varying widths from under-parameterized to over-parameterized regions on the \emph{perturbation stability} of neural networks. In~\cref{ssec:Validation_depth_initialization}, we finally compare the effect of two different initializations and the network depth on the \emph{perturbation stability}. Additional experimental results can be found in \cref{sec:additional_experiments}.

\subsection{Experimental settings}
\label{ssec:experiment_setting}

Here we present our experimental setting including models, hyper-parameters, the choice of width and depth, and initialization schemes.
We use the popular datasets of MNIST~\citep{726791} and CIFAR-10 ~\citep{krizhevsky2014cifar} for experimental validation.

\textbf{Models:} We report results using the following models: fully connected ReLU neural network named ``FCN'' in main paper and convolutional ReLU neural network named ``CNN'' in~\cref{sec:additional_experiments}.

\textbf{Hyper-parameters:} Unless mentioned otherwise, all models are trained for 50 epochs with a batch size of $64$. The initial value of the learning rate is $0.001$. After the first $25$ epochs, the learning rate is multiplied by a factor of $0.1$ every $10$ epochs. The SGD is used to optimize all the models, while the cross-entropy loss is used.

{\bf Width and depth:} In order to verify our theoretical results, we conduct a series of experiments with different depths and widths of the same type neural network. Specifically, our experiments include $11$ different widths from $2^4$ to $2^{14}$, and four different choices of depths, i.e., $2, 4, 6, 8, 10$.

{\bf Initialization:} We report results using the following initializations: 1) He initialization where $W_{ij} \sim \mathcal{N}(0,\frac{2}{m_{in}}$), 2) LeCun initialization where $W_{ij} \sim \mathcal{N}(0,\frac{1}{m_{in}}$) and 3) an initialization that allows for non-lazy training regime on two-layer networks, i.e., $\beta_1 = \beta_2 = 1/m^2$ and $\alpha = 1$.

\subsection{Validation of lazy and non-lazy training regimes}
\label{ssec:Validation_lazy}

Before verifying our results, we need to identify the lazy and non-lazy training regime under different initializations.
To this end, we define a measure for the \emph{lazy training ratio}, i.e., $\kappa =  \frac{\sum_{l=1}^{L} \left \| \bm{W}_l(t)-\bm{W}_l(0) \right \|_{\mathrm{F}} }{\sum_{l=1}^{L} \left \| \bm{W}_l(0) \right \|_{\mathrm{F}}}$. This measure evaluates whether the neural network is under the lazy training regime. 
A smaller $\kappa$ implies that the neural network is close to lazy training. 

According to the theory, we employ the He initialization and the non lazy training initialization we state in~\cref{ssec:experiment_setting} to conduct the experiment under two-layer neural networks to verify that their lazy training ratio matches the theoretical results of lazy training and non-lazy training (i.e., the experiment is under the correct regime). \cref{fig:verify_lazy_time_1} and~\cref{fig:verify_lazy_width_1} show the tendency of ratio with respect to time (training epochs) and relationship between width and lazy training ratio of neural networks under lazy training regime, respectively. We find that the ratio of lazy traing regime is almost a constant that does not change with time, and this constant decreases as the width of the network increases. This is in line with what we know about lazy training~\citep{chizat2019lazy}. 

Likewise, \cref{fig:Validation_for_non_lazy} shows the ratio tendency with time and width under non-lazy training regime. The ratio increases almost linearly over time in~\cref{fig:verify_lazy_time_2}. In epoch $25$ we decrease the learning rate, which decreases the rate that $\kappa$ increases. At the same time,~\cref{fig:verify_lazy_width_2} shows a similar tendency between the width and lazy training ratio as lazy training. 
However, the value of $\kappa$ is much higher than that of lazy training regime. Combining the results about tendency with time, the ratio will be expected to increase as the time until infinity.

\begin{figure}[t]
\centering
    % \hspace{-0.8cm}
    \subfigure[]{\includegraphics[width=0.47\linewidth]{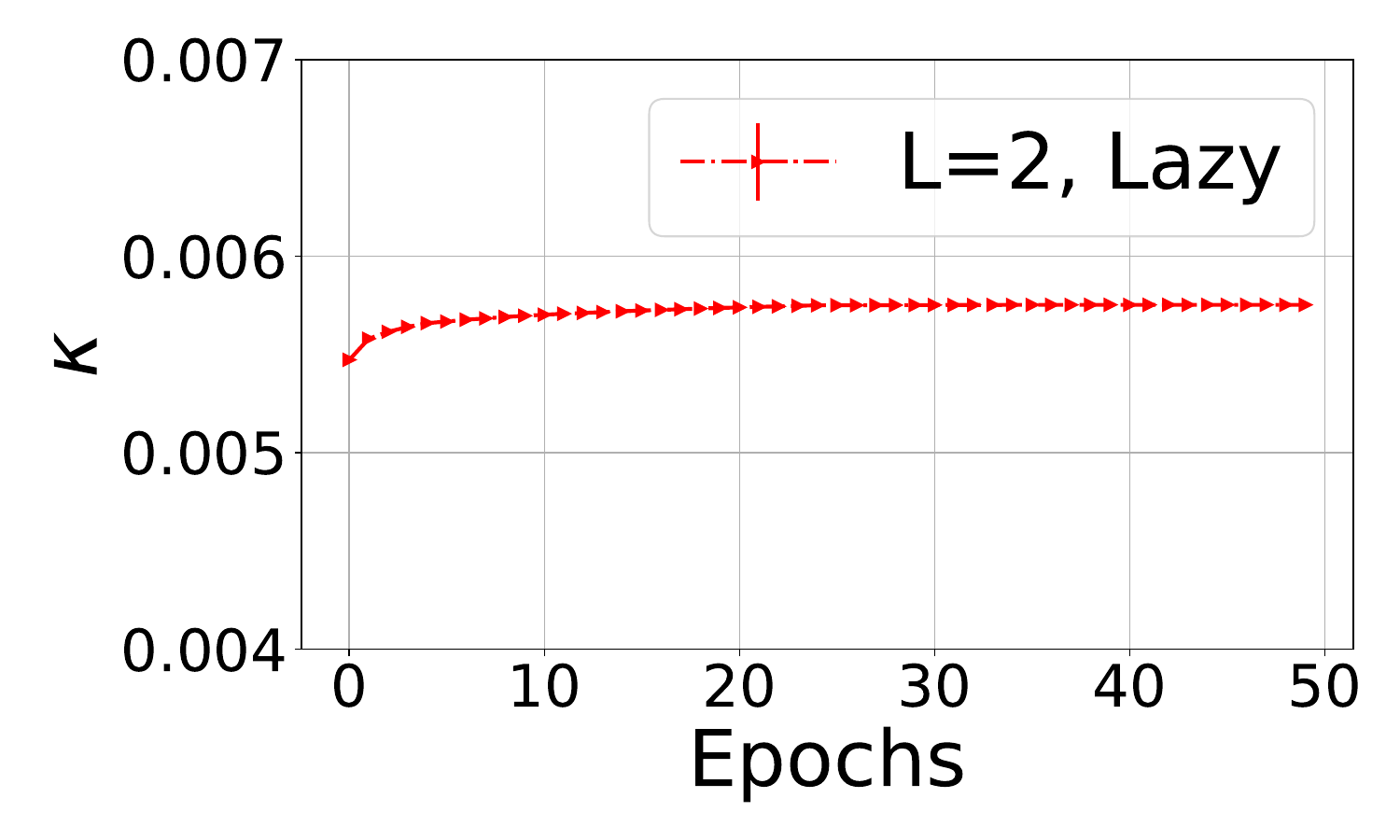}\label{fig:verify_lazy_time_1}}
    \subfigure[]{\includegraphics[width=0.45\linewidth]{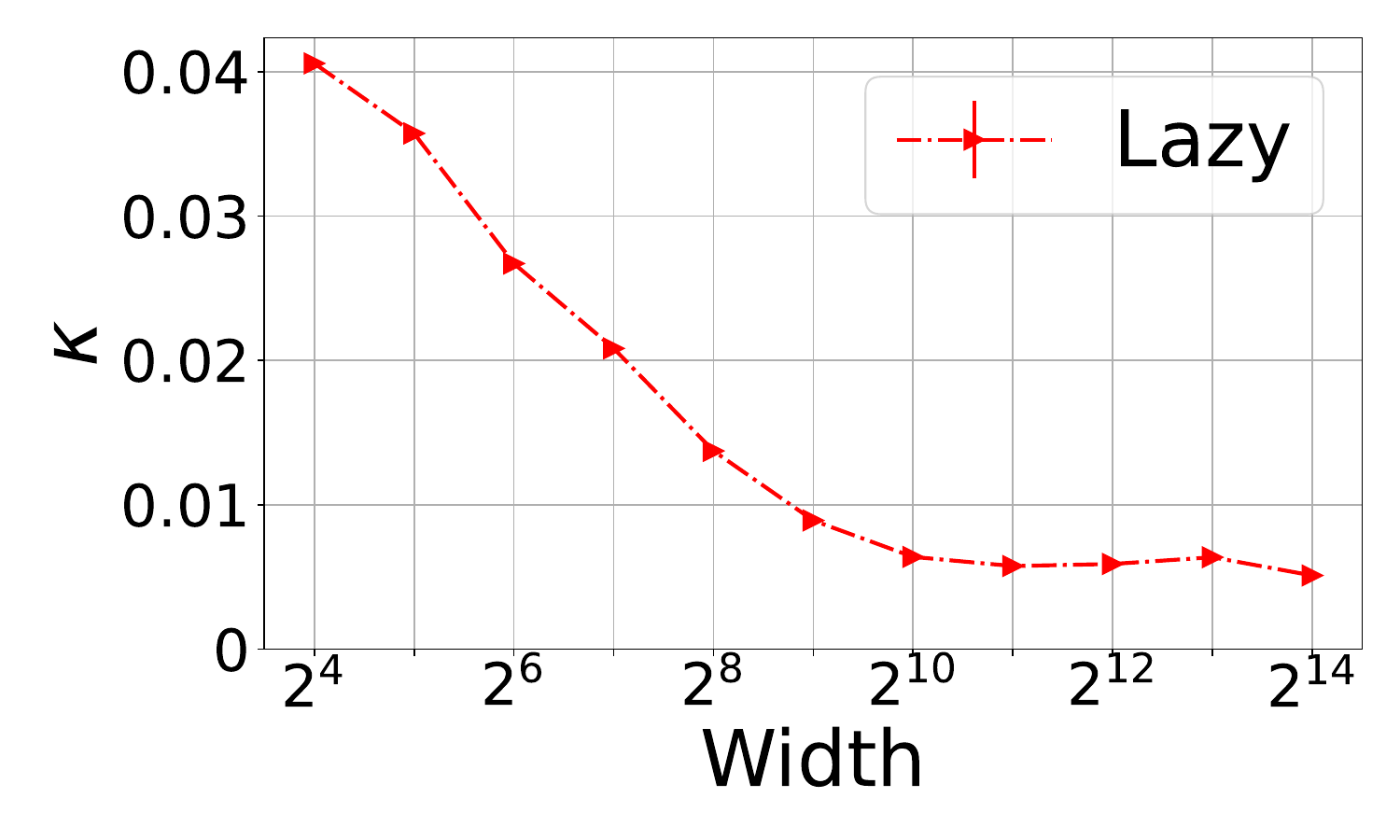}\label{fig:verify_lazy_width_1}}
\caption{(a) Tendency with respect to time (training epochs) and (b) relationship between width and lazy training ratio of neural networks. ~\cref{fig:verify_lazy_time_1} shows that ratio $\kappa$ is small and almost unchanged, recognized as \emph{lazy training}. In~\cref{fig:verify_lazy_width_1}, we can see that the $\kappa$  decreases with the increasing width.}\vspace{-0.15cm}
\label{fig:Validation_for_lazy_non_lazy}
\end{figure}

\begin{figure}[t]
\centering
    \subfigure[]{\includegraphics[width=0.45\linewidth]{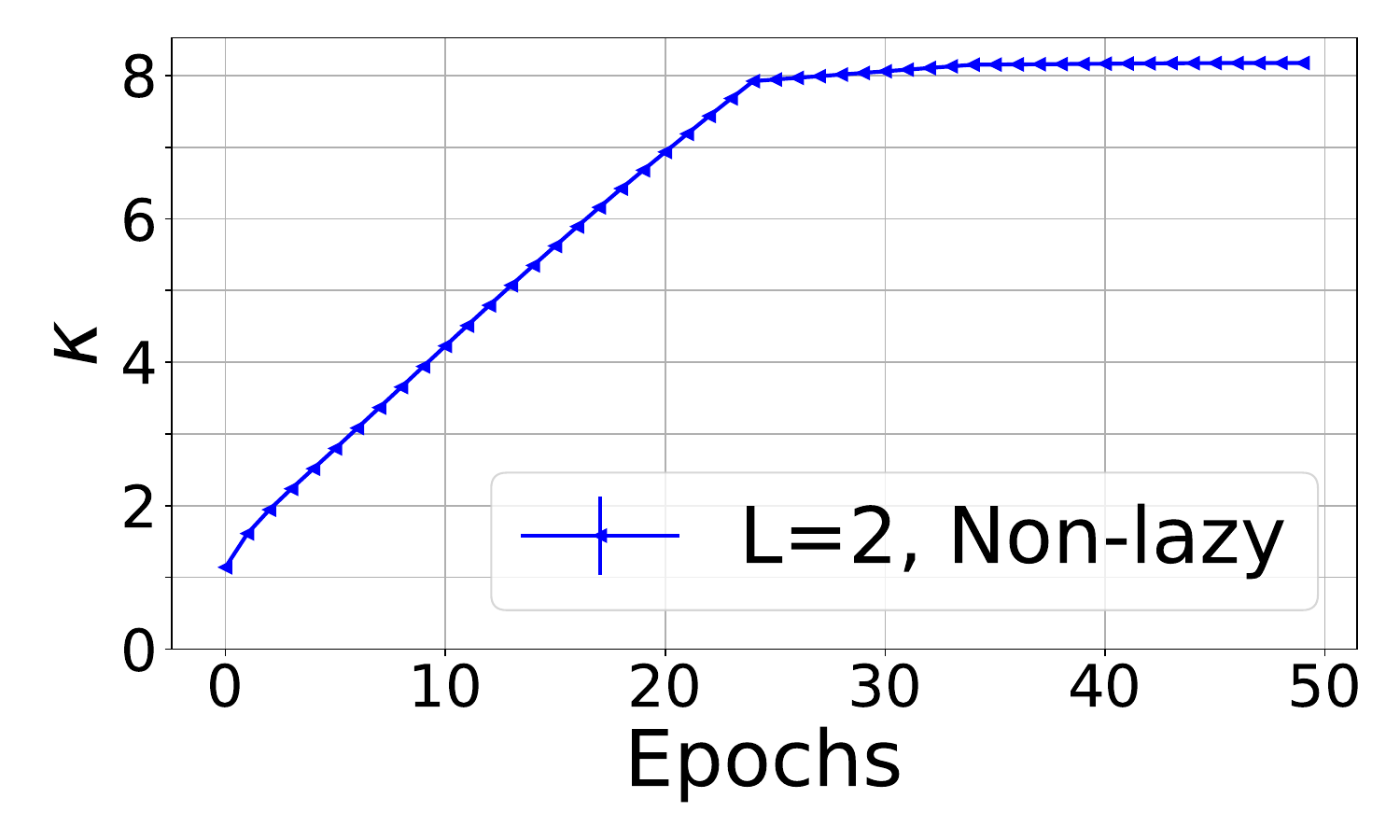}\label{fig:verify_lazy_time_2}}
    \subfigure[]{\includegraphics[width=0.45\linewidth]{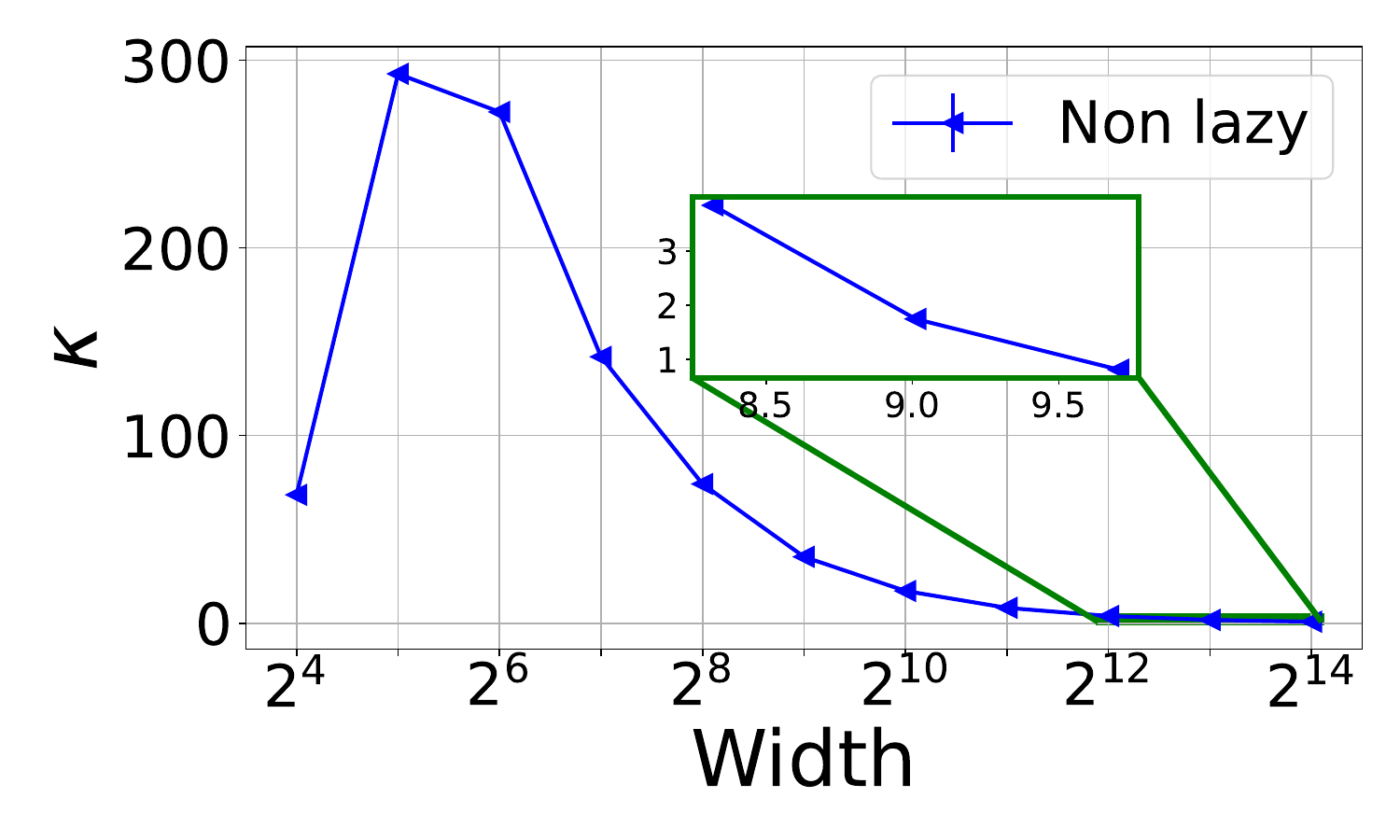}\label{fig:verify_lazy_width_2}}
\caption{(a) Tendency with respect to time (training epochs) and relationship with width of non-lazy training ratio of neural networks. ~\cref{fig:verify_lazy_time_2} shows that ratio $\kappa$ is changed a lot (increasing and then remains unchanged), recognized as \emph{non-lazy training}. The tendency of $\kappa$ for non-lazy training is increasing with the width and then decreasing, i.e., a phase transition in~\cref{fig:verify_lazy_width_2}.}\vspace{-0.15cm}
\label{fig:Validation_for_non_lazy}
\end{figure}

\subsection{Validation for width}
\label{ssec:Validation_width}

We verify the relationship between the \emph{perturbation stability} and the width of network as illustrated by~\cref{eq:thm_1,eq:thm_3}. We conduct a series of experiments on MNIST dataset using FCN with different widths.~\cref{fig:verify_width} shows the relationship between the \emph{perturbation stability} and width of FCN with different depths and training regimes. Here for lazy training and non-lazy training we use the same initialization as~\cref{ssec:Validation_lazy}.

~\cref{fig:verify_width_lazy} exhibits the relationship between the \emph{perturbation stability} and the width of neural networks with different depths for $L=2, 4, 6, 8$, and $10$. All of the five curves confirm the phase transition with width: the \emph{perturbation stability} firstly increases and then decreases with width, which match our theoretical results.
\cref{fig:verify_width_non_lazy} shows the difference of the effect of width on the \emph{perturbation stability} of lazy and non-lazy training for two-layer neural networks. The \emph{perturbation stability} of non-lazy training is significantly smaller than that of lazy training, which means non-lazy training regime is more robust. Besides, the \emph{perturbation stability} of non-lazy training decreases with the width of the neural network increases, which coincides with our theoretical result, i.e., no phase transition phenomenon.

\begin{figure}[t]
\centering
    \subfigure[Lazy training with different widths and depths]{\includegraphics[width=0.49\linewidth]{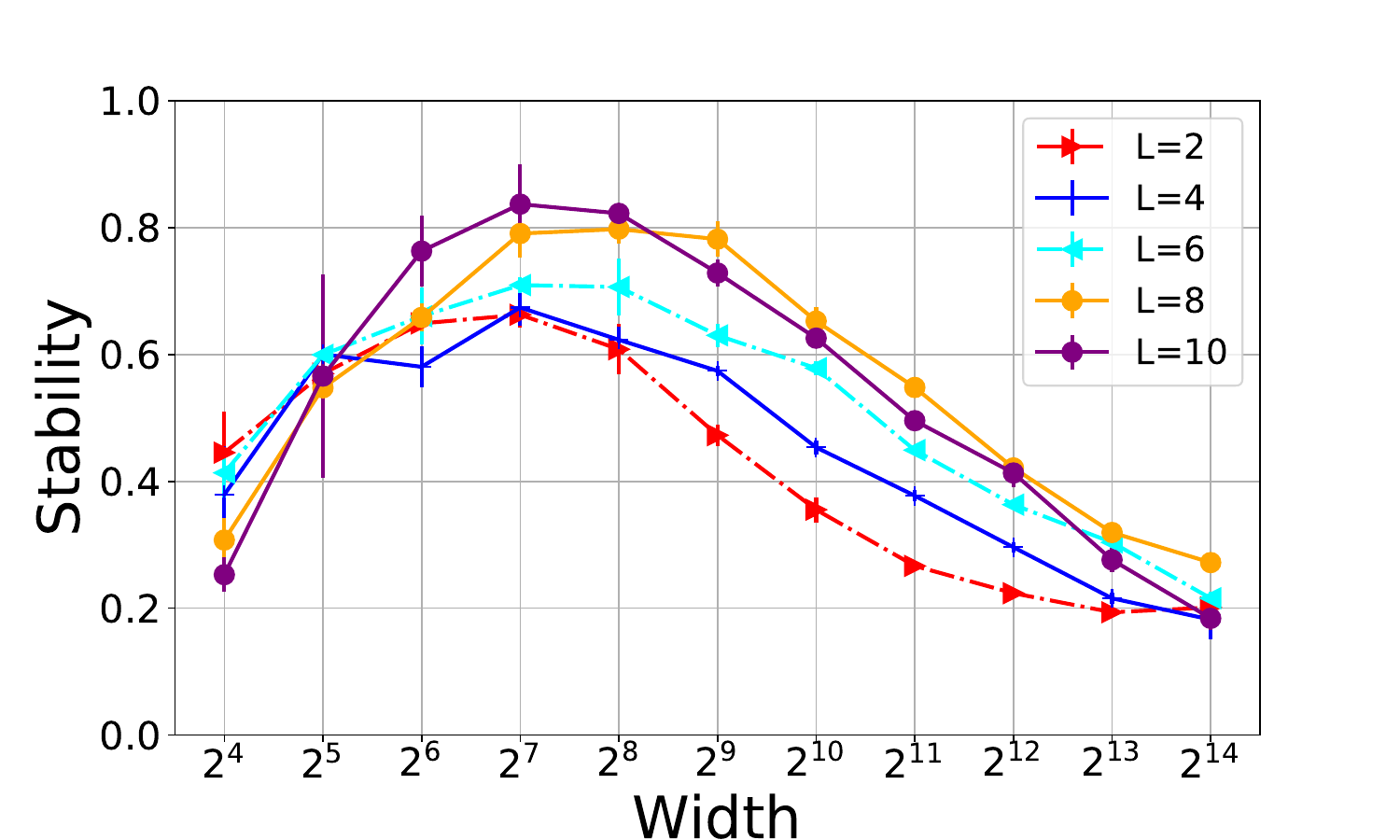}\label{fig:verify_width_lazy}}
    \subfigure[Lazy and non-lazy training under different widths]{\includegraphics[width=0.49\linewidth]{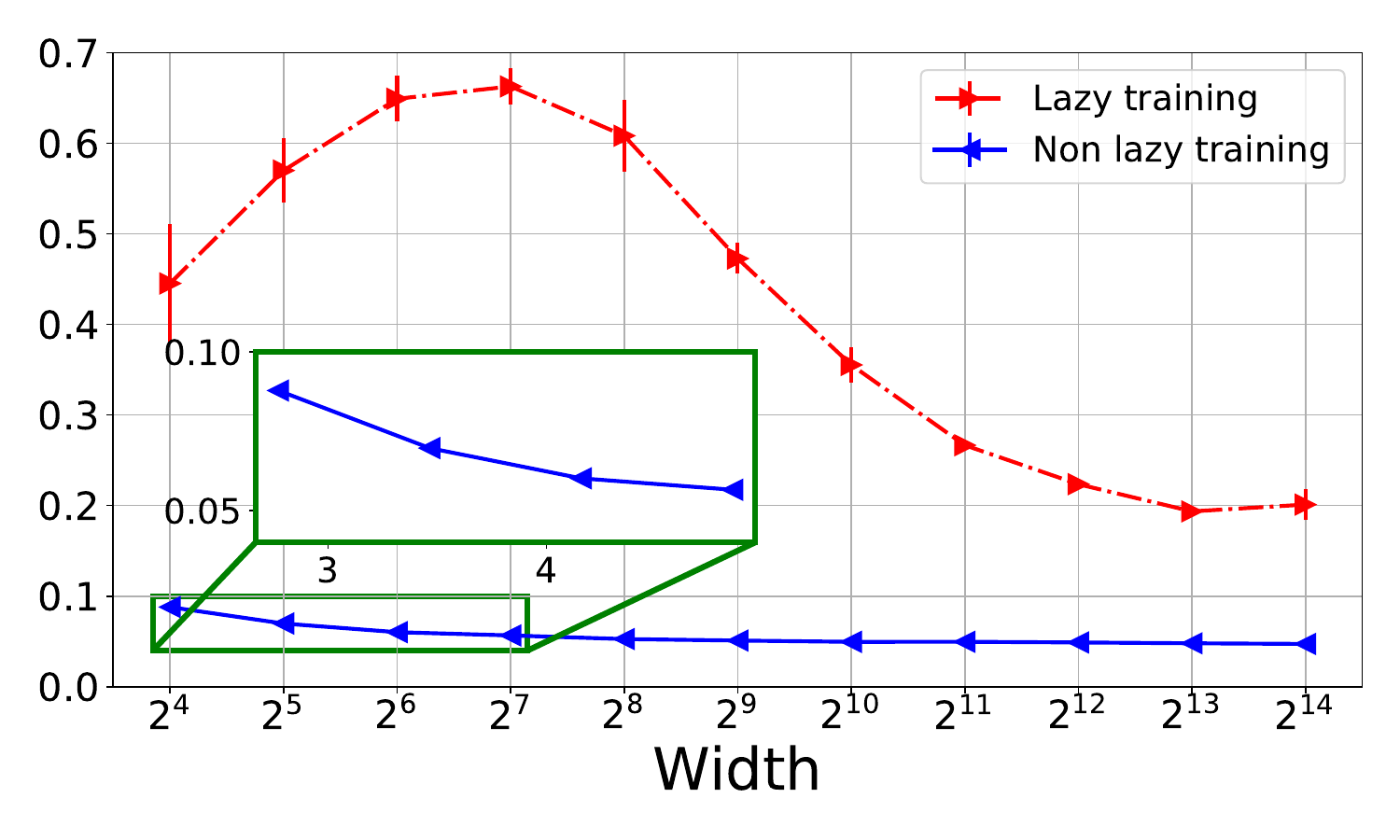}
    \label{fig:verify_width_non_lazy}}
\caption{Influence of width of neural network on the \emph{perturbation stability}. (a) phase transition of the \emph{perturbation stability} \emph{vs.} width with five different depths under lazy training. (b) the difference between lazy training and non-lazy training regimes for two layer neural networks.}\vspace{-0.25cm}
\label{fig:verify_width}
\end{figure}

\subsection{Validation for depth and initialization}
\label{ssec:Validation_depth_initialization}

Let us explore the effect of depth on the \emph{perturbation stability} under lazy training regime with different initializations in~\cref{fig:verify_init_1} and~\cref{fig:verify_init_2}. Our results show the tendency of the \emph{perturbation stability} for FCN with different widths and depths under the He initialization and the LeCun initialization, respectively. 
We observe a similar phase transition phenomenon, and find that, the \emph{perturbation stability} under He initialization increases with depth, while the LeCun initialization shows the opposite tendency, which verified our theory.

\begin{figure}[t]
\centering
    \subfigure[]{\includegraphics[width=0.5\linewidth]{figures/Stability_FCN_He.pdf}\label{fig:verify_init_1}}
    \subfigure[]{\includegraphics[width=0.47\linewidth]{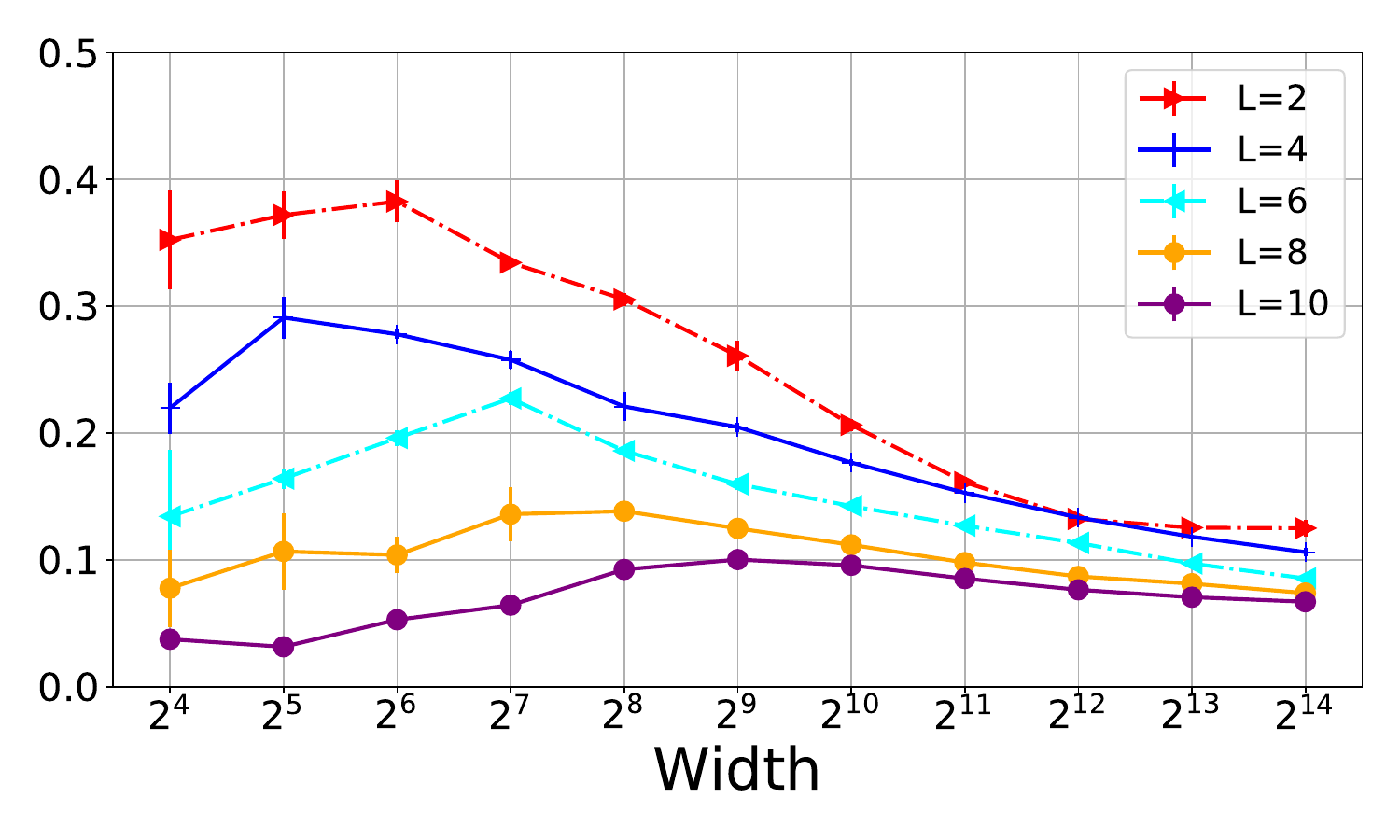}
    \label{fig:verify_init_2}}
\caption{Relationship between the \emph{perturbation stability} and depth of FCN under the He initialization (a) and the LeCun initialization (b) different depths of $L=2, 4, 6, 8$ and $10$.}
\label{fig:verify_init}
\end{figure}
\section{Conclusions}
In this work, we explore the interplay of the width, the depth and the initialization of neural networks on their average robustness with new theoretical bounds in an effort to address the apparent contradiction in the literature. Our theoretical results hold in both the under- and over-parameterized regimes. Intriguingly, we find a change of behavior in average robustness with respect to the depth, initially exacerbating robustness and then alleviating it. We suspect that this could help explain the contradictory messages in the literature. We also characterize the average robustness in the non-lazy training regime for two layer neural networks and find that width always help, coinciding with the results \citep{bubeck2021a, pmlr-v134-bubeck21a}. We also provide numerical evidence to support the theoretical developments. Our results demonstrate the rationale behind robustness of over-parameterized neural networks and might be beneficial to analyze the highly over-parameterized foundation models, which have demonstrated exceptional performance in zero-shot performance~\citep{NEURIPS2020_1457c0d6}.
\section*{Acknowledgements}
\label{sec:acks}

We are thankful to Leello Dadi and Jiayuan Ye for their insightful comments on the paper. We are also thankful to the reviewers for providing constructive feedback. Research was sponsored by the Army Research Office and was accomplished under Grant Number W911NF-19-1-0404. This work was supported by Hasler Foundation Program: Hasler Responsible AI (project number 21043). This work was supported by SNF project - Deep Optimisation of the Swiss National Science Foundation (SNSF) under grant number 200021\_205011. This work was supported by Zeiss. This project has received funding from the European Research Council (ERC) under the European Union's Horizon 2020 research and innovation programme (grant agreement n° 725594 - time-data). Corresponding authors: Fanghui Liu and Zhenyu Zhu.
\newpage
\bibliography{literature}
\bibliographystyle{abbrvnat}

\newpage
\appendix
\onecolumn
\allowdisplaybreaks
\section*{Appendix introduction} 

The Appendix is organized as follows:
\begin{itemize}
    \item In~\cref{sec:symbols_and_notations}, we state the symbols and notation used in this paper.
    \item In~\cref{sec:proofs_thm1}, we provide the proofs and related lemmas of~\cref{thm:perturbation_stability_lazy}.
    \item In~\cref{sec:proofs_thm2}, we provide the proofs of~\cref{thm:sufficient_condition_of_non-lazy_training}.
    \item In~\cref{sec:proofs_thm3}, we provide the proofs and related lemmas of~\cref{thm:perturbation_stability_non_lazy}.
    \item In~\cref{sec:additional_experiments}, we detail our experimental settings and exhibit additional experimental results.
    \item In~\cref{sec:limitation_and_discussion}, we discuss several limitations of this work.
    \item Finally, in~\cref{sec:societal_impact}, we discuss the societal impact of this paper.
\end{itemize}

\section{Symbols and Notation}
\label{sec:symbols_and_notations}
In the paper, vectors are indicated with bold small letters, matrices with bold capital letters. To facilitate the understanding of our work, we include the some core symbols and notation in \cref{table:symbols_and_notations}. 

\begin{table}[ht]

\caption{Core symbols and notations used in this project.}
\label{table:symbols_and_notations}
\small
\centering
\begin{tabular}{c | c | c}
\toprule
Symbol & Dimension(s) & Definition \\
\midrule
$\mathcal{N}(\mu,\sigma) $ & - & Gaussian distribution of mean $\mu$ and variance $\sigma$ \\
$\mathrm{Ber}(m,p) $ & - & Bernoulli (Binomial) distribution with $m$ trials and $p$ success rate. \\
$\chi^2(\omega) $ & - & Chi-square distribution of degree $\omega$. \\
\midrule
$\left \| \bm{v} \right \|_2$ & - & Euclidean norms of vectors $\bm{v}$ \\
$\left \| \bm{M} \right \|_2$ & - & Spectral norms of matrices $\bm{M}$ \\
$\left \| \bm{M} \right \|_{\mathrm{F}}$ & - & Frobenius norms of matrices $\bm{M}$ \\
$\left \| \bm{M} \right \|_{\mathrm{\ast}}$ & - & Nuclear norms of matrices $\bm{M}$ \\
$\lambda(\bm{M})$ & - & Eigenvalues of matrices $\bm{M}$ \\
$\bm{M}^{[l]}$ & - & $l$-th row of matrices $\bm{M}$\\
$\bm{M}_{i,j}$ & - & $(i, j)$-th element of matrices $\bm{M}$\\
\midrule
$\phi(x) = \max(0, x)$ & - & ReLU activation function for scalar \\
$\phi(\bm{v}) = (\phi(v_1), \dots, \phi(v_m))$ & - & ReLU activation function for vectors \\
$1\left \{A\right \}$ & - & Indicator function for event $A$\\
\midrule
$n$ & - & Size of the dataset \\
$d$ & - & Input size of the network \\
$o$ & - & Output size of the network \\
$L$ & - & Depth of the network \\
$m$ & - & Width of intermediate layer\\
$\beta_l$ & - & Standard deviation of Gaussian initialization of $l$-th intermediate layer \\
$\alpha$ & - & Scale factor for the output layer \\
\midrule
$\bm{x}_i$ & $\realnum^{d}$ & The $i$-th data point \\
$\bm{y}_i$ & $\realnum^{o}$ & The $i$-th target vector \\
$\mathcal{D}_{X}$ & - & Input data distribution \\
$\mathcal{D}_{Y}$ & - & Target data distribution \\
\midrule
$\bm{W}_1$ & $\realnum^{m \times d}$ & Weight matrix for the input layer \\
$\bm{W}_l$ & $\realnum^{m \times m}$ & Weight matrix for the $l$-th hidden layer \\
$\bm{W}_L$ & $\realnum^{o \times m}$ & Weight matrix for
the output layer \\
\midrule
$\bm{h}_{i,l}$ & $\realnum^{m}$ & The $l$-th layer activation for input $\bm{x}_i$\\
$\bm{f}_{i}$ & $\realnum^{o}$ & Output of network for input $\bm{x}_i$\\
\midrule
$\mathcal{O}$, $o$, $\Omega$ and $\Theta$ & - & Standard Bachmann–Landau order notation\\
\midrule
$\mathbb{P}(A)$ & - & Probability of event $A$\\
\midrule
\end{tabular}
\end{table}

\section{Proof of upper bound of the Perturbation Stability in lazy training regime for deep neural network} 
\label{sec:proofs_thm1}

We present the details of our results from~\cref{ssec:bound_lazy} in this section. Firstly, we introduce some lemmas in~\cref{ssec:relevant_lemmas_lazy} to facilitate the proof of theorems. Then, in~\cref{ssec:proof_lazy} we provide the proof of~\cref{thm:perturbation_stability_lazy}.

\subsection{Relevant Lemmas}
\label{ssec:relevant_lemmas_lazy}

\begin{lemma}
\label{lemma:gaussian_square_distribution}
Let $\bm{w} \sim \mathcal{N}(\bm{0},\sigma^2 \mathbb{I}_n)$. Then, for two fixed non-zero vectors $\bm{h}_1 \in \mathbb{R}^n $ and $\bm{h}_2 \in \mathbb{R}^n$, define two random variables $X = (\bm{w}^{\top}\bm{h}_1 1\left \{\bm{w}^{\top}\bm{h}_2\geq 0\right \})^2$ and $Y = s(\bm{w}^{\top}\bm{h}_1)^2$, where $s\sim \mathrm{Ber}(1,1/2)$ follows a Bernoulli distribution with $1$ trial and $\frac{1}{2}$ success rate. Then $X$ and $Y$ have the same distribution, denoted as $X \overset{d}{=} Y$.
\end{lemma}

\begin{proof}
Firstly, we derive the cumulative distribution function (CDF) of $X$. Obviously, $X$ is non-negative and $\bm w^{\!\top} \bm h_1 \sim \mathcal{N}(\bm 0, \sigma^2 \| \bm h_1\|_2^2 \mathbb{I}_n)$, and then we have:
\begin{equation*}
\begin{split}
\mathbb{P}(X = 0) & = \mathbb{P}(\bm{w}^{\top}\bm{h}_2 < 0) + \mathbb{P}(\bm{w}^{\top}\bm{h}_2 \geq 0, \bm{w}^{\top}\bm{h}_1 = 0) \,,
\end{split} 
\end{equation*}
which implies:
\begin{equation*}
\begin{split}
 \mathbb{P}(\bm{w}^{\top}\bm{h}_2 < 0) \leq \mathbb{P}(X = 0)  \leq  \mathbb{P}(\bm{w}^{\top}\bm{h}_2 < 0) + \mathbb{P}( \bm{w}^{\top}\bm{h}_1 = 0) 
& = \mathbb{P}(\bm{w}^{\top}\bm{h}_2 < 0) \,,
\end{split} 
\end{equation*}
leading to $\mathbb{P}(X = 0) = \mathbb{P}(\bm{w}^{\top}\bm{h}_2 < 0) = 1/2$.

Accordingly, for $x \geq 0$, we have: 
\begin{equation*}
\begin{split}
    \mathbb{P}(X \leq x) & = \mathbb{P}(\bm{w}^{\top}\bm{h}_2 < 0) + \mathbb{P}(\bm{w}^{\top}\bm{h}_2 \geq 0, -\sqrt{x} \leq \bm{w}^{\top}\bm{h}_1 \leq \sqrt{x})\\
    & = \frac{1}{2} + \mathbb{P}(\bm{w}^{\top}\bm{h}_2 \geq 0, -\sqrt{x} \leq \bm{w}^{\top}\bm{h}_1 \leq \sqrt{x})\\
    & = \frac{1}{2} + \mathbb{P}(\bm{w}^{\top}\bm{h}_2 \geq 0, -\sqrt{x} \leq \bm{w}^{\top}\bm{h}_1 \leq 0) + \mathbb{P}(\bm{w}^{\top}\bm{h}_2 \geq 0, 0 \leq \bm{w}^{\top}\bm{h}_1 \leq \sqrt{x})\\
    & = \frac{1}{2} + \mathbb{P}(\bm{w}^{\top}\bm{h}_2 \leq 0, 0 \leq \bm{w}^{\top}\bm{h}_1 \leq \sqrt{x}) + \mathbb{P}(\bm{w}^{\top}\bm{h}_2 \geq 0, 0 \leq \bm{w}^{\top}\bm{h}_1 \leq \sqrt{x})\\
    & = \frac{1}{2} + \mathbb{P}(0 \leq \bm{w}^{\top}\bm{h}_1 \leq \sqrt{x})\\
    & =\frac{1}{2}+\int_{0}^{\sqrt{x}} \frac{1}{\sqrt{2\pi\sigma_1^2}}e^{-\frac{t^2}{2\sigma_1^2}}\mathrm{d}t\,,
\end{split}
\end{equation*}
where we use the symmetry of the Gaussian random variable and $\sigma_1 = \left \| \bm{h}_1 \right \|_2 \sigma$.

Then $X$ admits the following cumulative distribution function:

\begin{equation}
    F(X \leq x) = \begin{cases}
0  & \text{ if } x < 0 \\
\frac{1}{2}  & \text{ if } x = 0\\
\frac{1}{2}+\int_{0}^{\sqrt{x}} \frac{1}{\sqrt{2\pi\sigma_1^2}}e^{-\frac{t^2}{2\sigma_1^2}}dt  & \text{ if } x > 0\,.
\end{cases}
\label{eq:CDF_1}
\end{equation}

We then derive the CDF of $Y$.  Obviously, $Y$ is non-negative and $\mathbb{P}(Y = 0) = 1/2$, which holds by $Y = 0$ iff $s = 0$. Accordingly, for $x \geq 0$, we have: 
\begin{equation*}
\begin{split}
\mathbb{P}(Y \leq x) & = \mathbb{P}(s = 0) + \mathbb{P}(s = 1)\mathbb{P}(-\sqrt{x} \leq \bm{w}^{\top}\bm{h}_1 \leq \sqrt{x})=\frac{1}{2}+\frac{1}{2}\int_{-\sqrt{x}}^{\sqrt{x}} \frac{1}{\sqrt{2\pi\sigma_1^2}}e^{-\frac{t^2}{2\sigma_1^2}} \mathrm{d}t \\
& = \frac{1}{2}+\int_{0}^{\sqrt{x}} \frac{1}{\sqrt{2\pi\sigma_1^2}}e^{-\frac{t^2}{2\sigma_1^2}}\mathrm{d}t \,.
\end{split}
\end{equation*}

Then $Y$ has the following cumulative distribution function:

\begin{equation}
    F(Y \leq x) = \begin{cases}
0  & \text{ if } x < 0 \\
\frac{1}{2}  & \text{ if } x = 0\\
\frac{1}{2}+\int_{0}^{\sqrt{x}} \frac{1}{\sqrt{2\pi\sigma_1^2}}e^{-\frac{t^2}{2\sigma_1^2}}dt  & \text{ if } x > 0\,,
\end{cases}
\label{eq:CDF_2}
\end{equation}
which implies $X \overset{d}{=} Y$ by comparing~\cref{eq:CDF_1} and~\cref{eq:CDF_2}.
\end{proof}

\begin{lemma}
\label{lemma:chi_square_distribution}
Given two fixed non-zero vectors $\bm{h}_1\in\realnum^p$ and $\bm{h}_2\in\realnum^p$, let $\bm{W}\in \realnum^{q\times p}$ be random matrix with i.i.d. entries $\bm{W}_{i,j} \sim \mathcal{N}(0,2/q)$ and a vector  $\bm{v} = \phi' (\bm{W}\bm{h}_2)\bm{W}\bm{h}_1 \in \realnum^q$. Then, we have $\frac{q\left \| \bm{v} \right \|_2^2 }{2\left \| \bm{h}_1 \right \|_2^2}\sim \chi^2(\varrho )$, where $\varrho  \sim \mathrm{Ber}(q,1/2)$.
\end{lemma}

\begin{proof}
According to the definition of $\bm{v} = \phi' (\bm{W}\bm{h}_2)\bm{W}\bm{h}_1 \in \realnum^q$, we have:

\begin{equation*}
    \left \| \bm{v} \right \|_2^2  = \sum_{i=1}^{q}\bigg( \bm{D}_{i,i}\left \langle \bm{W}^{[i]},\bm{h}_1 \right \rangle\bigg)^2\,,
\end{equation*}
where $\bm{D}_{i,i} = 1\left \{ \left \langle \bm{W}^{[i]},\bm{h}_2 \right \rangle \geq 0 \right \}$, the $\bm{W}^{[i]}$ is defined in the second part of the~\cref{table:symbols_and_notations}.

Let $\varpi_i = \left \langle \bm{W}^{[i]},\bm{h}_1 \right \rangle/\bigg( \sqrt{\frac{2\left \| \bm{h}_1 \right \|_2^2 }{q}}\bigg)$, then $\varpi_i \sim \mathcal{N}(0,1)$ independently. Accordingly, by~\cref{lemma:gaussian_square_distribution}, recall $s\sim \mathrm{Ber}(1,1/2)$, we have:
\begin{equation*}
\frac{q\left \| \bm{v} \right \|_2^2}{2\left \| \bm{h}_1 \right \|_2^2}  = \sum_{i=1}^{q}\bigg( 1\left \{ \left \langle \bm{W}^{[i]},\bm{h}_2 \right \rangle \geq 0 \right \}\varpi_i\bigg)^2 \overset{d}{=} \sum_{i=1}^{q} s \varpi_i^2 \,,
\end{equation*}
which implies $\frac{q\left \| \bm{v} \right \|_2^2 }{2\left \| \bm{h}_1 \right \|_2^2}\sim \chi^2(\varrho )$ with $\varrho  \sim \mathrm{Ber}(q,1/2)$ according to the definition of chi-square distribution.
\end{proof}

\begin{lemma}
\label{lemma:training_dynamics}
(Dynamic equivalence under different scaling) 
Given an $L$-layer neural network $\bm{f}$ defined by~\cref{eq:deep_network}, as follows:
\begin{equation}
\bm{f}(\bm{x}) = \widehat{\bm{W}}_L\phi(\widehat{\bm{W}}_{L-1}\cdots\phi(\widehat{\bm{W}_1}\bm{x})\cdots)\,,
\end{equation}
where $[\widehat{\bm{W}}_l]_{i,j}$ satisfy the initialization in~\cref{ssec:network}, i.e., $\beta : = \beta_2 = \dots = \beta_{L-1}$.

Scaling all weights of $\bm{f}$, then we get a new model $\widetilde{\bm{f}}$ as follows.
\begin{equation}
\widetilde{\bm{f}}(\bm{x}) = \gamma^L \widetilde{\bm{W}}_L\phi(\widetilde{\bm{W}}_{L-1}\cdots\phi(\widetilde{\bm{W}_1}\bm{x})\cdots)\,,
\end{equation}
where $[\widetilde{\bm{W}}_l]_{i,j} = [\widehat{\bm{W}}_l]_{i,j}/\gamma \quad \forall l \in [L]$.

Then if we choose an appropriate learning rate $\widetilde{\eta} := \frac{\eta}{\gamma^2} $, $\bm{f}$ and $\widetilde{\bm{f}}$ will have the same dynamics.
\end{lemma}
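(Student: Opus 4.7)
The entire lemma hinges on one algebraic fact: the ReLU activation $\phi$ is positively homogeneous of degree one, i.e.\ $\phi(c\bm{v})=c\,\phi(\bm{v})$ for every $c>0$ and every vector $\bm{v}$. Iterating this identity through the $L$ layers of~\cref{eq:deep_network} shows that the map $(\bm{W}_1,\dots,\bm{W}_L)\mapsto \bm{W}_L\phi(\bm{W}_{L-1}\cdots\phi(\bm{W}_1\bm{x}))$ is positively homogeneous of degree $L$ in its weights. My plan is to use this single fact to match function values at initialisation, and then to match them throughout training by rescaling the learning rate.

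\textbf{Step one (outputs coincide at initialisation).} Setting $\tilde{\bm{W}}_l=\widehat{\bm{W}}_l/\gamma$ and applying the degree-$L$ homogeneity to the inner part of $\tilde{\bm{f}}$ pulls out an overall factor $\gamma^{-L}$; multiplying by the explicit $\gamma^{L}$ in front recovers $\widehat{\bm{W}}_L\phi(\widehat{\bm{W}}_{L-1}\cdots\phi(\widehat{\bm{W}}_1\bm{x}))=\bm{f}(\bm{x})$, so $\tilde{\bm{f}}(\bm{x})=\bm{f}(\bm{x})$ pointwise whenever the weights satisfy the stated correspondence. Note this uses only $\gamma=\beta\sqrt{m/2}>0$, which is automatic for any admissible Gaussian initialisation.

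\textbf{Step two (gradient scaling).} Since the outputs agree, so do the residuals $\bm{f}-\bm{y}$, and the loss-gradients differ only through an explicit scaling. Differentiating the homogeneity identity $\bm{f}(\bm{x};c\bm{W})=c^{L}\bm{f}(\bm{x};\bm{W})$ in $\bm{W}_l$ (chain rule on the left, trivial on the right) gives $\nabla_{\bm{W}_l}\bm{f}\big|_{c\bm{W}}=c^{L-1}\,\nabla_{\bm{W}_l}\bm{f}\big|_{\bm{W}}$. Combining this (with $c=1/\gamma$) with the outer factor $\gamma^{L}$ in $\tilde{\bm{f}}$ yields
$$
\nabla_{\tilde{\bm{W}}_l}\tilde{\bm{f}}\big|_{\tilde{\bm{W}}=\widehat{\bm{W}}/\gamma}=\gamma^{L}\cdot\gamma^{-(L-1)}\nabla_{\bm{W}_l}\bm{f}\big|_{\widehat{\bm{W}}}=\gamma\,\nabla_{\widehat{\bm{W}}_l}\bm{f},
$$
and the same factor propagates to the squared-loss gradient.

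\textbf{Step three (matching dynamics by induction in $t$).} The gradient-descent updates read $\widehat{\bm{W}}_l(t+1)=\widehat{\bm{W}}_l(t)-\eta\,\nabla_{\widehat{\bm{W}}_l}L$ and $\tilde{\bm{W}}_l(t+1)=\tilde{\bm{W}}_l(t)-\tilde{\eta}\gamma\,\nabla_{\widehat{\bm{W}}_l}L$ under the correspondence of Step two. Choosing the appropriate learning rate $\tilde{\eta}=\eta/\gamma^{2}$ turns the second update into $[\widehat{\bm{W}}_l(t)-\eta\nabla_{\widehat{\bm{W}}_l}L]/\gamma=\widehat{\bm{W}}_l(t+1)/\gamma$, so the correspondence $\tilde{\bm{W}}_l(t)=\widehat{\bm{W}}_l(t)/\gamma$ is preserved, and Step one then gives $\tilde{\bm{f}}_t(\bm{x})=\bm{f}_t(\bm{x})$ for every $t$. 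The same reasoning in continuous time shows that both gradient flows produce identical output trajectories. No genuine obstacle appears: the whole argument is bookkeeping around positive homogeneity, and the only mild subtlety is that this homogeneity requires the scalar to be positive, which is guaranteed by $\gamma>0$.
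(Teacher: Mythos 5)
Your proposal is correct and follows essentially the same route as the paper: the gradient correspondence $\nabla_{\tilde{\bm{W}}_l}\tilde{\bm{f}} = \gamma\,\nabla_{\widehat{\bm{W}}_l}\bm{f}$, the learning-rate rescaling $\tilde{\eta}=\eta/\gamma^{2}$, and the preservation of $\tilde{\bm{W}}_l(t)=\widehat{\bm{W}}_l(t)/\gamma$ along the trajectory are exactly the paper's argument. Your version is somewhat more explicit (spelling out the positive homogeneity of ReLU and the need for $\gamma>0$, which the paper leaves implicit), but there is no substantive difference in approach.
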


\begin{proof}

According to the chain rule, we have:
\begin{equation*}
    \frac{\mathrm{d} \widetilde{\bm{f}}}{\mathrm{d} \widetilde{\bm{W}}_l} = \gamma \frac{\mathrm{d} \bm{f}}{\mathrm{d} \widehat{\bm{W}}_l}  \quad \forall l \in [L]\,. 
\end{equation*}

If we choose learning rate $\widetilde{\eta} := \frac{\eta}{\gamma^2} $, then, we have:

\begin{equation*}
    \frac{\mathrm{d} \widetilde{\bm{W}}_l}{\mathrm{d} t} = \frac{1}{\gamma} \frac{\mathrm{d} \widehat{\bm{W}}_l}{\mathrm{d} t} \quad \forall l \in [L] \,.
\end{equation*}

Consider that $\widetilde{\bm{W}}_l(0) = \frac{1}{\gamma}\widehat{\bm{W}}_l(0) \quad \forall l \in [L]$, then, we have:
\begin{equation*}
    \widetilde{\bm{W}}_l(t) = \frac{1}{\gamma}\widehat{\bm{W}}_l(t) \quad \forall l \in [L]\,.
\end{equation*}

That means $\bm{f}(t) = \widetilde{\bm{f}}(t) \quad \forall t \geq 0$, which concludes the proof.
\end{proof}

\begin{lemma}
\label{lemma:lazy_training}
Given an $L$-layer neural network $\bm{f}$ defined by~\cref{eq:deep_network} trained by $\left \{ (\bm{x}_i, \bm{y}_i) \right \} _{i=1}^n$, under a small perturbation $\epsilon$, we have:
\begin{equation}
\small
\mathbb{E}_{\bm{x},\hat{\bm{x}},\bm{W}}\left \| \nabla_{\bm{x}} \bm{f}(\bm{x})^{\top}(\bm{x}-\hat{\bm{x}}) - \bm{W}_L\bm{D}_{L-1}\bm{W}_{L-1}\cdots\bm{D}_1\bm{W}_1(\bm{x}-\hat{\bm{x}})\right \|_2 \leq  \Theta\bigg(\epsilon\gamma^{L-2}\sqrt{\frac{\pi L^3m^2\beta_1^2\beta_L^2}{8}}e^{-m/L^3} \bigg)\,,
\end{equation}
where $[\bm{W}_l]_{i,j}$ satisfy the initialization in~\cref{ssec:network}, $\bm{x} \sim \mathcal{D}_{X}$ and $\hat{\bm{x}} \sim \text{Unif}(\mathbb{B} (\epsilon, \bm{x}))$.
\end{lemma}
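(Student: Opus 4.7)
The plan is to bound the expected norm by invoking the chain rule and exploiting the product structure of the input gradient. By applying the chain rule to the ReLU network in \cref{eq:deep_network} with $\alpha=1$, we have $\nabla_{\bm{x}} \bm{f}(\bm{x})^{\top} = \bm{W}_L \bm{D}_{L-1} \bm{W}_{L-1} \bm{D}_{L-2} \cdots \bm{W}_2 \bm{D}_1 \bm{W}_1$, so the quantity on the left inside the norm is a product of $L$ Gaussian matrices and $L-1$ data-dependent sign matrices acting on $\bm{u}:=\bm{x}-\hat{\bm{x}}$ with $\|\bm{u}\|_2\le \epsilon$. I would first invoke \cref{lemma:training_dynamics} to rescale the intermediate weights, replacing every $\beta$ by $\sqrt{2/m}$ at the cost of pulling out a factor $\gamma^{L-2}$, which accounts for the exponential prefactor in the claimed bound.

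Next I would propagate $\bm{u}$ layer-by-layer through the (rescaled) network by defining $\bm{u}_0=\bm{u}$ and $\bm{u}_l := \bm{D}_l \bm{W}_l \bm{u}_{l-1}$ for $l=1,\dots,L-1$. By \cref{lemma:chi_square_distribution}, conditional on $\bm{u}_{l-1}$ the ratio $\tfrac{m\,\|\bm{u}_l\|_2^2}{2\|\bm{u}_{l-1}\|_2^2}$ is a chi-square random variable with Bernoulli-distributed degrees of freedom $\varrho\sim\mathrm{Ber}(m,1/2)$. Taking expectations gives $\mathbb{E}\|\bm{u}_l\|_2^2 = \|\bm{u}_{l-1}\|_2^2$ for every intermediate layer, and the two boundary layers contribute the $m\beta_1\beta_L$ scaling that appears in the stated bound. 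Standard chi-square tail inequalities then yield, for each intermediate layer, a deviation of $\|\bm{u}_l\|_2^2$ from its mean with failure probability of order $\exp(-c\,m t^2)$ for fluctuation $t$.

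To obtain the $\sqrt{L^3}\,e^{-m/L^3}$ factor I would chain these per-layer concentration bounds across the $L-2$ middle layers and union-bound. Calibrating the per-layer tolerance $t = \Theta(1/L^{3/2})$ balances the propagated variance against the number of layers, giving the aggregated failure probability $\Theta(L\exp(-m/L^3))$. On the good event the norm of the product concentrates and contributes a term dominated by $1$ (which is the additive $+1$ that eventually appears in \cref{thm:perturbation_stability_lazy}); on the bad event I would use a deterministic worst-case bound of order $\sqrt{L^3 m^2 \beta_1^2 \beta_L^2}$ on the gradient product, so that weighting the bad-event bound by its probability gives precisely $\sqrt{\pi L^3 m^2 \beta_1^2 \beta_L^2/8}\,e^{-m/L^3}$.

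The main obstacle will be the dependence between $\bm{D}_l$ and $\bm{W}_l$: the sign pattern is a deterministic function of the very Gaussian matrix that multiplies it, so the usual linear-Gaussian calculations do not apply directly. The standard remedy is to condition on the sign pattern $\bm{D}_l$ and exploit the fact that the restriction of $\bm{W}_l$ to the subspace orthogonal to $\bm{u}_{l-1}$ is still an unbiased Gaussian, which is exactly what \cref{lemma:chi_square_distribution} isolates. The second delicate point is propagating the concentration across $L-2$ layers with tight enough constants to land at the exponent $m/L^3$ rather than a weaker $m/L$; this requires careful layer-wise variance accounting, which I would carry out with a Bernstein- or Hanson–Wright-type argument iterated along the layers. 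These two issues — decoupling $\bm{D}_l$ from $\bm{W}_l$, and tracking the exponent through iterated concentration — are the technical heart of the proof.
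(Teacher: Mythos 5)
There is a genuine gap: your proposal bounds the wrong quantity. The lemma is a \emph{perturbation} statement — it compares the input-gradient of the \emph{trained} network, $\nabla_{\bm{x}}\bm{f}(\bm{x})^{\top}=\widehat{\bm{W}}_L\widehat{\bm{D}}_{L-1}\cdots\widehat{\bm{D}}_1\widehat{\bm{W}}_1$ (trained weights $\widehat{\bm{W}}_l$ and their sign patterns), against the product $\bm{W}_L\bm{D}_{L-1}\cdots\bm{D}_1\bm{W}_1$ evaluated \emph{at initialization}. Your chain-rule step identifies $\nabla_{\bm{x}}\bm{f}(\bm{x})^{\top}$ with the initialization product itself, which would make the left-hand side identically zero; from there you proceed to bound the norm of the Gaussian product via layer-wise chi-square concentration, which is the content of \cref{lemma:expection} (the second term in the triangle inequality used to prove \cref{thm:perturbation_stability_lazy}), not of \cref{lemma:lazy_training}. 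Consequently you never use the lazy-training hypothesis that the trained weights stay in a small ball around initialization, $\|\widehat{\bm{W}}_l-\bm{W}_l\|_2\le\omega$, which is the essential input: without it the difference between the trained and initial gradients cannot be controlled at all.

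The paper's route is: (i) rescale via \cref{lemma:training_dynamics} to pull out $\gamma^{L-2}$ (this part you have); (ii) invoke the stability lemmas of Allen-Zhu et al.\ (their Lemmas 7.4, 8.6, 8.7), which bound the spectral-norm change of the backward product $\bm{W}_L\bm{D}_{L-1}\cdots\bm{D}_1\bm{W}_1$ under weight perturbations of size $\omega$, with failure probability $\exp(-\Omega(s\log m))$ for a tunable $s\le\mathcal{O}(m/(L^3\log m))$ — this is where the $L^3$ exponent actually originates, not from calibrating a per-layer tolerance $t=\Theta(1/L^{3/2})$; (iii) choose $s$ as a function of $\omega$ to turn the high-probability bound into a tail bound in a free parameter $\delta$, and integrate $\mathbb{E}X=\int_0^{\infty}\mathbb{P}(X>\delta)\,\mathrm{d}\delta$ to produce the Gaussian-integral prefactor $\sqrt{\pi L^3m^2\beta_1^2\beta_L^2/8}$ multiplying $e^{-m/L^3}$. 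Your ``weight the bad event by its probability'' heuristic does not recover this form. To repair the proposal you would need to reintroduce the trained weights as a perturbation of the initialization and supply (or cite) a quantitative stability result for the product of sign and weight matrices under such perturbations; the concentration machinery you describe is better spent on \cref{lemma:expection}.
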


\begin{proof}
We set the weight of the neural network after training are $\widehat{\bm{W}}$. i.e.

\begin{equation*}
    \bm{f}(\bm{x}) = \widehat{\bm{W}}_L\phi(\widehat{\bm{W}}_{L-1}\cdots\phi(\widehat{\bm{W}_1}\bm{x})\cdots)\,.
\end{equation*}

According to the standard chain rule and \cref{lemma:training_dynamics}, we have: 
\begin{equation*}
    \nabla_{\bm{x}}\bm{f}(\bm{x})^{\top}=\widehat{\bm{W}}_L\widehat{\bm{D}}_{L-1}\widehat{\bm{W}}_{L-1}\cdots\widehat{\bm{D}}_1\widehat{\bm{W}}_1 = \gamma^L \widehat{\bm{W}}'_L\widehat{\bm{D}}_{L-1}\widehat{\bm{W}}'_{L-1}\cdots\widehat{\bm{D}}_1\widehat{\bm{W}}'_1\,,
\end{equation*}

where $[\widehat{\bm{W}}'_l]_{i,j} = [\widehat{\bm{W}}_l]_{i,j}/\gamma \quad \forall l \in [L]$.

Assume that the perturbation matrices satisfy $\left \| \widehat{\bm{W}}_l - \bm{W}_l \right \|_2 \leq \omega\,, \ \forall l \in [L]$, where the parameter $\omega$ will be determined later. Then by~\citet[Lemma 7.4, Lemma 8.6, Lemma 8.7]{allen2019convergence}, we obtain that for any integer $s\in \left [ \Omega(\frac{d}{\log{m}}),\mathcal{O}(\frac{m}{L^3\log{m}}) \right ]$, for $d \leq \mathcal{O}(\frac{m}{L\log{m}})$, with probability at least $1-\exp{\bigg(-\Omega(s \log m)\bigg)}$ over the randomness of $\{ \bm W \}_{l=1}^L$, it holds that:
\begin{equation*}
\left \| \widehat{\bm{W}}'_L\widehat{\bm{D}}_{L-1}\widehat{\bm{W}}'_{L-1}\cdots\widehat{\bm{D}}_1\widehat{\bm{W}}'_1 - \bm{W}'_L\bm{D}_{L-1}\bm{W}'_{L-1}\cdots\bm{D}_1\bm{W}'_1\right \|_2 \leq  \mathcal{O}\bigg(\sqrt{\frac{L^3 s \log{m}+\omega^2 L^3 m}{d}}\sqrt{\frac{dm}{2}}\frac{\beta_1 \beta_L}{\gamma^2}\bigg)\,,
\end{equation*}
which implies that 

\begin{equation*}
\left \| \nabla_{\bm{x}} f(\bm{x})^{\top} - \bm{W}_L\bm{D}_{L-1}\bm{W}_{L-1}\cdots\bm{D}_1\bm{W}_1\right \|_2 \leq \mathcal{O}\bigg(\sqrt{\frac{L^3 s \log{m}+\omega^2 L^3 m}{d}}\sqrt{\frac{dm}{2}}\beta_1 \beta_L\gamma^{L-2}\bigg)\,,
\end{equation*}
holds with probability at least $1-\exp{\bigg(-\Omega(s \log m)\bigg)}$.

If we choose $s := \lfloor \frac{m}{L^3 \log{m}} + \frac{\omega^2}{\log{m}} \rfloor$, then, we have:

\begin{equation*}
\left \| \nabla_{\bm{x}} f(\bm{x})^{\top} - \bm{W}_L\bm{D}_{L-1}\bm{W}_{L-1}\cdots\bm{D}_1\bm{W}_1\right \|_2 \leq \mathcal{O}\bigg(\sqrt{\frac{L^3\omega^2 + m+\omega^2 L^3 m}{d}}\sqrt{\frac{dm}{2}}\beta_1 \beta_L\gamma^{L-2}\bigg)\,,
\end{equation*}
with probability at least $1-\exp{\bigg(-\Omega(\frac{m}{L^3} + \omega^2)\bigg)}$.

Let $\delta := \sqrt{\frac{L^3\omega^2 + m+\omega^2 L^3 m}{d}}\sqrt{\frac{dm}{2}}\beta_1 \beta_L\gamma^{L-2}$, we have $\omega^2 = \frac{u\delta^2-m}{L^3(m+1)}$, $u = \frac{2}{m\beta_1^2 \beta_L^2 \gamma^{2(L-2)}}$.
We have the following probability inequality:

\begin{equation*}
\small
\mathbb{P} \bigg(\left \| \nabla_{\bm{x}} f(\bm{x})^{\top} - \bm{W}_L\bm{D}_{L-1}\bm{W}_{L-1}\cdots\bm{D}_1\bm{W}_1\right \|_2 > \delta \bigg) \leq \exp{\bigg(-\frac{u\delta^2-m}{L^3(m+1)}-\frac{m}{L^3}\bigg)} = \exp{\bigg(-\frac{\delta^2 u+m^2}{L^3(m+1)}\bigg)}\,.
\end{equation*}
Then by the expectation integral equality~\cite[Lemma 1.2.1]{vershyninHighdimensionalProbabilityIntroduction2018}, the expectation is:
\begin{equation}
\small
\begin{split}
\mathbb{E}_{\bm{W}} \left \| \nabla_{\bm{x}} f(\bm{x})^{\top} - \bm{W}_L\bm{D}_{L-1}\bm{W}_{L-1}\cdots\bm{D}_1\bm{W}_1\right \|_2 & = \int_{0}^{+\infty} \mathbb{P} \bigg(\left \| \nabla_{\bm{x}} f(\bm{x})^{\top} - \bm{W}_L\bm{D}_{L-1}\bm{W}_{L-1}\cdots\bm{D}_1\bm{W}_1\right \|_2 > \delta \bigg)\mathrm{d}\delta\\
& \leq \int_{0}^{+\infty} \exp{\bigg(-\frac{\delta^2 u+m^2}{L^3(m+1)}\bigg)}\mathrm{d}\delta\\
& = \sqrt{\frac{\pi L^3(m+1)}{4u}}\exp{\bigg(-\frac{m^2}{(m+1)L^3}\bigg)}\\
& = \Theta\bigg(\gamma^{L-2}\sqrt{\frac{\pi L^3m^2\beta_1^2\beta_L^2}{8}}e^{-m/L^3} \bigg)\,.
\end{split}
\label{eq:lazy_expectation}
\end{equation}

Finally, by the definition of $\hat{\bm{x}}$, we have:
\begin{equation}
\small
\left \| \nabla_{\bm{x}} \bm{f}^{\top}(\bm{x})(\bm{x}-\hat{\bm{x}}) - \bm{W}_L\bm{D}_{L-1}\bm{W}_{L-1}\cdots\bm{D}_1\bm{W}_1(\bm{x}-\hat{\bm{x}})\right \|_2 \leq \epsilon \left \| \nabla_{\bm{x}} f(\bm{x})^{\top} - \bm{W}_L\bm{D}_{L-1}\bm{W}_{L-1}\cdots\bm{D}_1\bm{W}_1\right \|_2\,.
\label{eq:lazy_eps}
\end{equation}

By~\cref{eq:lazy_expectation} and~\cref{eq:lazy_eps} and consider expectations for $\bm x$ and $\bm x'$, we finish the proof.
\end{proof}

\begin{lemma}
\label{lemma:expection}
Given an $L$-layer neural network $\bm{f}$ defined by~\cref{eq:deep_network} trained by $\left \{ (\bm{x}_i, \bm{y}_i) \right \} _{i=1}^n$, under a small $\epsilon$, expectation over $\bm{x},\hat{\bm{x}}, \bm W$, we have:
\begin{equation}
\mathbb{E}_{\bm{x},\hat{\bm{x}},\bm{W}} \left \| \bm{W}_L\bm{D}_{L-1}\bm{W}_{L-1}\cdots\bm{D}_1\bm{W}_1(\bm{x}-\hat{\bm{x}})\right \|_2^2 \leq \frac{m o\beta_1^2\beta_L^2\gamma^{2(L-2)}}{2}\epsilon^2\,,
\end{equation}
where $[\bm{W}_l]_{i,j}$ satisfy the initialization in~\cref{ssec:network} and $\bm{x} \sim \mathcal{D}_{X}$, $\hat{\bm{x}} \sim \text{Unif}(\mathbb{B} (\epsilon, \bm{x}))$.
\end{lemma}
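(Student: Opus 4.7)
Write $\bm{v} := \bm{x}-\hat{\bm{x}}$ so that $\|\bm{v}\|_2 \le \epsilon$, and define the propagated perturbation $\bm{u}_0 := \bm{v}$, $\bm{u}_l := \bm{D}_l \bm{W}_l \bm{u}_{l-1}$ for $l=1,\dots,L-1$, and $\bm{u}_L := \bm{W}_L \bm{u}_{L-1}$, so that the quantity we want to bound is exactly $\mathbb{E}\|\bm{u}_L\|_2^2$. The plan is to prove the recursion $\mathbb{E}\|\bm{u}_l\|_2^2 = \gamma^2\,\mathbb{E}\|\bm{u}_{l-1}\|_2^2$ for all intermediate layers $l\in\{2,\dots,L-1\}$, together with the two boundary identities $\mathbb{E}\|\bm{u}_1\|_2^2 = \tfrac{m\beta_1^2}{2}\|\bm{v}\|_2^2$ and $\mathbb{E}[\|\bm{u}_L\|_2^2\mid \bm{u}_{L-1}] = o\,\beta_L^2\,\|\bm{u}_{L-1}\|_2^2$, and then compose.

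\textbf{The intermediate recursion.} Conditioning on $\bm{W}_1,\dots,\bm{W}_{l-1}$ fixes both the hidden representation $\bm{h}_{l-1}$ of the clean input and the perturbation direction $\bm{u}_{l-1}$, while $\bm{W}_l$ is still a fresh Gaussian matrix with i.i.d.\ rows $\bm{w}_k \sim \mathcal{N}(\bm{0},\beta^2 \bm{I})$. The key obstacle is that the sign mask $(\bm{D}_l)_{kk}=\mathbf{1}\{\bm{w}_k^{\top}\bm{h}_{l-1}\ge 0\}$ is \emph{coupled} to $\bm{w}_k^{\top}\bm{u}_{l-1}$, so we cannot simply treat them as independent. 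I would resolve this by the standard parallel/perpendicular decomposition: writing $\bm{u}_{l-1} = \alpha \bm{h}_{l-1} + \bm{u}_\perp$ with $\alpha = \langle \bm{h}_{l-1},\bm{u}_{l-1}\rangle/\|\bm{h}_{l-1}\|_2^2$ and $\bm{u}_\perp \perp \bm{h}_{l-1}$, the random variables $\bm{w}_k^{\top}\bm{h}_{l-1}$ and $\bm{w}_k^{\top}\bm{u}_\perp$ are uncorrelated jointly Gaussian, hence independent. Therefore
\[
\mathbb{E}\bigl[(\bm{D}_l \bm{W}_l \bm{u}_{l-1})_k^2\bigr] = \alpha^2\,\mathbb{E}\bigl[\mathbf{1}\{\bm{w}_k^{\top}\bm{h}_{l-1}\ge 0\}(\bm{w}_k^{\top}\bm{h}_{l-1})^2\bigr] + \mathbb{E}[\mathbf{1}\{\cdot\}]\,\mathbb{E}[(\bm{w}_k^{\top}\bm{u}_\perp)^2],
\]
the cross term vanishing since $\mathbb{E}[\bm{w}_k^{\top}\bm{u}_\perp]=0$. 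Applying \cref{lemma:gaussian_square_distribution} to evaluate $\mathbb{E}[\mathbf{1}\{X\ge 0\}X^2]=\sigma^2/2$ for a centered Gaussian $X$ gives $\alpha^2\,\tfrac{\beta^2}{2}\|\bm{h}_{l-1}\|_2^2 + \tfrac{1}{2}\beta^2\|\bm{u}_\perp\|_2^2 = \tfrac{\beta^2}{2}\|\bm{u}_{l-1}\|_2^2$. Summing over the $m$ coordinates $k$ yields $\mathbb{E}[\|\bm{u}_l\|_2^2\mid \bm{W}_{1:l-1}] = \tfrac{m\beta^2}{2}\|\bm{u}_{l-1}\|_2^2 = \gamma^2 \|\bm{u}_{l-1}\|_2^2$, which after tower expectation gives the claimed recursion.

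\textbf{The boundary layers.} For $l=1$ the exact same parallel/perpendicular decomposition of $\bm{v}$ against $\bm{x}$ (with variance $\beta_1^2$ in place of $\beta^2$ and $m$ rows) gives $\mathbb{E}\|\bm{u}_1\|_2^2 = \tfrac{m\beta_1^2}{2}\|\bm{v}\|_2^2$. For the output layer, no ReLU mask is applied, so conditionally on $\bm{u}_{L-1}$ each of the $o$ rows $\bm{w}_k^{\top}\bm{u}_{L-1}$ is a centered Gaussian of variance $\beta_L^2\|\bm{u}_{L-1}\|_2^2$, giving $\mathbb{E}[\|\bm{u}_L\|_2^2\mid \bm{u}_{L-1}] = o\,\beta_L^2\,\|\bm{u}_{L-1}\|_2^2$.

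\textbf{Composition.} Multiplying the three pieces together with the tower property yields
\[
\mathbb{E}\|\bm{u}_L\|_2^2 \;=\; o\,\beta_L^2\,\gamma^{2(L-2)}\,\tfrac{m\beta_1^2}{2}\,\mathbb{E}_{\bm{x},\hat{\bm{x}}}\|\bm{v}\|_2^2 \;\le\; \tfrac{m\,o\,\beta_1^2\beta_L^2\,\gamma^{2(L-2)}}{2}\,\epsilon^2,
\]
where the final inequality is the deterministic bound $\|\bm{v}\|_2\le\epsilon$ coming from $\hat{\bm{x}}\in\mathbb{B}(\epsilon,\bm{x})$. The principal obstacle is really the coupling between the ReLU mask and the perturbation at each layer; once that is handled by the Gaussian rotational argument above, everything else is a clean induction.
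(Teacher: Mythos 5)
Your proof is correct and reaches the bound by the same overall telescoping strategy as the paper (per-layer second-moment factors $\tfrac{m\beta_1^2}{2}$, then $\gamma^2$ per hidden layer, then $o\beta_L^2$, composed by the tower property), but the key per-layer step is handled differently. The paper defines $\bm{t}_l = \bm{D}_l\cdots\bm{D}_1\bm{W}_1(\bm{x}-\hat{\bm{x}})$ and directly invokes its \cref{lemma:chi_square_distribution} to assert $\|\bm{t}_l\|_2^2/(\beta^2\|\bm{t}_{l-1}\|_2^2)\sim\chi^2(\varrho)$ with $\varrho\sim\mathrm{Ber}(m,1/2)$, i.e.\ it treats $\bm{D}_l\bm{W}_l\bm{t}_{l-1}$ as if it were $\phi(\bm{W}_l\bm{t}_{l-1})$; that lemma is stated for the case where the ReLU mask is generated by the \emph{same} vector being propagated, whereas by \cref{def:diagonal_sign_matrix} the mask $\bm{D}_l$ actually comes from the clean forward activation $\bm{h}_{l-1}$ while the propagated vector is the perturbation $\bm{t}_{l-1}$. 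Your parallel/perpendicular decomposition of $\bm{u}_{l-1}$ against $\bm{h}_{l-1}$ confronts exactly this coupling and shows the conditional second moment is still $\tfrac{m\beta^2}{2}\|\bm{u}_{l-1}\|_2^2$ regardless (the indicator contributes the factor $\tfrac12$ to both the parallel and perpendicular pieces, and the cross term vanishes), so you recover the identical $\gamma^2$ factor. In short, your argument buys a rigorous treatment of the mask--perturbation dependence that the paper's invocation of its chi-square lemma glosses over, at the cost of a slightly longer computation; the paper's route buys brevity by reusing a distributional lemma whose hypotheses do not literally match the object being bounded. Both yield the stated inequality, and your use of the tower property also makes explicit the factorization of the product of ratios that the paper writes down without comment.
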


\begin{proof}
Define $\bm{t}_{l} = \bm{D}_{l}\bm{W}_l \cdots\bm{D}_1\bm{W}_1(\bm{x}-\hat{\bm{x}})$, then:
\begin{equation*}
    \mathbb{E}_{\bm{x},\hat{\bm{x}},\bm{W}} \left \| \bm{W}_L\bm{D}_{L-1}\bm{W}_{L-1}\cdots\bm{D}_1\bm{W}_1(\bm{x}-\hat{\bm{x}})\right \|_2^2 = \mathbb{E}_{\bm{x},\hat{\bm{x}},\bm{W}} \left \|  \bm{W}_{L}\bm{t}_{L-1}\right \|_2^2\,.
\end{equation*}

By~\cref{lemma:chi_square_distribution}, we have $\frac{\left \|\bm{t}_{l} \right \|_2^2 }{\beta^2 \left \| \bm{t}_{l-1} \right \|_2^2} \sim \chi^2(\varrho )$, where $\varrho \sim \mathrm{Ber}(m,1/2), \forall l = 2, \cdots ,L-1$.
By the law of total expectation $\mathbb{E}[\mathbb{E}[X|Y]] = \mathbb{E}[X]$, we have
\begin{equation*}
    \mathbb{E}_{\bm{W}} \frac{\left \| \bm{t}_{l} \right \|_2^2 }{\left \| \bm{t}_{l-1} \right \|_2^2} = \beta^2 \mathbb{E}_{\varrho} \chi^2(\varrho ) = \beta^2\mathbb{E} \varrho =  \frac{m\beta^2}{2} = \gamma^2 \,,\  \forall l = 2, \cdots ,L-1\,.
\end{equation*}

Similarly, we have:

\begin{equation*}
    \mathbb{E}_{\bm{W}} \frac{\left \| \bm{t}_1 \right \|_2^2 }{\left \| \hat{\bm{x}}-\bm{x} \right \|_2^2} = \frac{m\beta_1^2}{2}\,.
\end{equation*}

By the definition of chi-square distribution, we have $\frac{\left \| \bm{W}_{L}\bm{t}_{L-1} \right \|_2^2 }{\beta_L^2\left \| \bm{t}_{L-1} \right \|_2^2}\sim \chi^2(o)$, which means $\mathbb{E}_{\bm{W}} \left \|\bm{W}_{L}\bm{t}_{L-1} \right \|_2^2/\left \| \bm{t}_{L-1} \right \|_2^2 = o\beta_L^2$. 

Then, according to the independence among $\frac{\left \| \bm{W}_{L}\bm{t}_{L-1}\right \|_2^2}{\left \| \bm{t}_{L-1}\right \|_2^2}$, $\frac{\left \| \bm{t}_1\right \|_2^2}{\left \| \hat{\bm{x}}-\bm{x}\right \|_2^2}$, $\left \| \hat{\bm{x}}-\bm{x}\right \|_2^2$ and $\frac{\left \| \bm{t}_{l+1}\right \|_2^2}{\left \| \bm{t}_{l}\right \|_2^2} \quad \forall l \in [L-2]$, we have:

\begin{equation*}
\begin{split}
    \mathbb{E}_{\bm{x},\hat{\bm{x}},\bm{W}} \left \| \bm{W}_{L}\bm{t}_{L-1}\right \|_2^2 & = \mathbb{E}_{\bm{x},\hat{\bm{x}},\bm{W}} \frac{\left \| \bm{W}_{L}\bm{t}_{L-1}\right \|_2^2}{\left \| \bm{t}_{L-1}\right \|_2^2}\frac{\left \| \bm{t}_{L-1}\right \|_2^2}{\left \| \bm{t}_{L-2}\right \|_2^2}\cdots\frac{\left \| \bm{t}_1\right \|_2^2}{\left \| \hat{\bm{x}}-\bm{x}\right \|_2^2}\left \| \hat{\bm{x}}-\bm{x}\right \|_2^2\\
    & = \mathbb{E}_{\bm{W}} \frac{\left \| \bm{W}_{L}\bm{t}_{L-1}\right \|_2^2}{\left \| \bm{t}_{L-1}\right \|_2^2}\mathbb{E}_{\bm{W}}\frac{\left \| \bm{t}_{L-1}\right \|_2^2}{\left \| \bm{t}_{L-2}\right \|_2^2}\cdots\mathbb{E}_{\bm{W}}\frac{\left \| \bm{t}_1\right \|_2^2}{\left \| \hat{\bm{x}}-\bm{x}\right \|_2^2}\mathbb{E}_{\bm{x},\hat{\bm{x}}}\left \| \hat{\bm{x}}-\bm{x}\right \|_2^2\\
    & = \frac{m o\beta_1^2\beta_L^2\gamma^{2(L-2)}}{2}\mathbb{E}_{\bm{x},\hat{\bm{x}}}\left \| \hat{\bm{x}}-\bm{x}\right \|_2^2\,,
\end{split}
\end{equation*}

using the definition of $\hat{\bm{x}}$ which conclude the proof.
\end{proof}

\subsection{Proof of~\cref{thm:perturbation_stability_lazy}}
\label{ssec:proof_lazy}

\begin{proof}

According to the triangle inequality and the Jensen's inequality, we have:

\begin{equation*}
\begin{split}
    \mathscr{P}(\bm{f}, \epsilon) & = \mathbb{E}_{\bm{x},\hat{\bm{x}},\bm{W}} \left \| \nabla_{\bm{x}} \bm{f}(\bm{x})(\bm{x}-\hat{\bm{x}}) \right \|_2\\
    & \leq \mathbb{E}_{\bm{x},\hat{\bm{x}},\bm{W}}\left \| \nabla_{\bm{x}} \bm{f}(\bm{x})(\bm{x}-\hat{\bm{x}}) - \bm{W}_L\bm{D}_{L-1}\bm{W}_{L-1}\cdots\bm{D}_1\bm{W}_1(\bm{x}-\hat{\bm{x}})\right \|_2\\
    & + \mathbb{E}_{\bm{x},\hat{\bm{x}},\bm{W}} \left \| \bm{W}_L\bm{D}_{L-1}\bm{W}_{L-1}\cdots\bm{D}_1\bm{W}_1(\bm{x}-\hat{\bm{x}})\right \|_2\\
    & \leq \mathbb{E}_{\bm{x},\hat{\bm{x}},\bm{W}}\left \| \nabla_{\bm{x}} \bm{f}(\bm{x})(\bm{x}-\hat{\bm{x}}) - \bm{W}_L\bm{D}_{L-1}\bm{W}_{L-1}\cdots\bm{D}_1\bm{W}_1(\bm{x}-\hat{\bm{x}})\right \|_2\\
    & + \sqrt{\mathbb{E}_{\bm{x},\hat{\bm{x}},\bm{W}} \left \| \bm{W}_L\bm{D}_{L-1}\bm{W}_{L-1}\cdots\bm{D}_1\bm{W}_1(\bm{x}-\hat{\bm{x}})\right \|_2^2}\\
    % & \leq \epsilon\bigg( \sqrt{\frac{\pi L^3m^2\beta_1^2\beta_L^2}{8}}e^{-m/L^3} + \frac{\sqrt{m o}\beta_1 \beta_L}{\sqrt{2}} \bigg)\gamma^{L-2}\\
    & \lesssim \epsilon \bigg(\sqrt{L^3m^2\beta_1^2\beta_L^2}e^{-m/L^3} + \sqrt{m o \beta_1^2 \beta_L^2} \bigg)\gamma^{L-2} \,,
\end{split}
\end{equation*}

where the last inequality utilizes the results of~\cref{lemma:lazy_training} and~\cref{lemma:expection}.

\end{proof}

\section{Proof of sufficient condition for DNNs under the non-lazy training regime}
\label{sec:proofs_thm2}
In this section, we provide the proof of~\cref{thm:sufficient_condition_of_non-lazy_training}.

\begin{proof}
By~\cref{assumption:distribution_2} and by following the setting of~\citet{JMLR:v22:20-1123}, without loss of generality, we have that there exits a $T^{\star} > 0$ such that $L(\bm{W}(T^{\star})) \leq \frac{1}{32n}$ and $y_1 \geq \frac{1}{2}$. Therefore, we have:
\begin{equation*}
     \frac{1}{2n}(f_1(T^{\star}) - y_1 )^2 \leq \frac{1}{2n}\sum_{i=1}^{n}( f_i(T^{\star}) - y_i )^2 \leq L(\bm{W}(T^{\star})) \leq \frac{1}{32n}\,,
\end{equation*}

which means $\left | f_1(T^{\star}) - y_1 \right | \leq \frac{1}{4}$.
Accordingly, we conclude:
\begin{equation}
\small
\begin{split}
    \frac{1}{4} \leq y_1 - \frac{1}{4} & \leq  f_1(T^{\star})\\
    & = \frac{1}{\alpha}\bm{W}_L(T^{\star})\sigma(\bm{W}_{L-1}(T^{\star})\cdots \sigma(\bm{W}_1(T^{\star})\bm{x}_1))\\
    & = \frac{1}{\alpha}\bm{W}_L(T^{\star})\bm{D}_{1,L-1}(T^{\star})\bm{W}_{L-1}(T^{\star})\cdots \bm{D}_{1,1}(T^{\star})\bm{W}_1(T^{\star})\bm{x}_1\\
    & \leq \frac{1}{\alpha}\left \| \bm{W}_L(T^{\star}) \right \|_2 \left \| \bm{D}_{1,L-1}(T^{\star}) \right \|_2\left \| \bm{W}_{L-1}(T^{\star}) \right \|_2\cdots \left \| \bm{D}_{1,1}(T^{\star}) \right \|_2\left \| \bm{W}_1(T^{\star}) \right \|_2\left \| \bm{x}_1 \right \|_2\\
    & \leq \frac{1}{\alpha}\left \| \bm{W}_L(T^{\star}) \right \|_2 \left \| \bm{W}_{L-1}(T^{\star}) \right \|_2\cdots \left \| \bm{W}_1(T^{\star}) \right \|_2\,,
\end{split}
\label{eq:split_weight}
\end{equation}
where the last inequality uses~\cref{assumption:distribution_1} and 1-Lipschitz of ReLU.

According to~\citet[Corollary 2.1]{du2018algorithmic}, we have:
\begin{equation*}
    \frac{\mathrm{d}}{\mathrm{d} t}  (\left \| \bm{W}_1 \right \|_{\mathrm{F}}^2) = \frac{\mathrm{d}}{\mathrm{d} t}  (\left \| \bm{W}_2 \right \|_{\mathrm{F}}^2) = \cdots = \frac{\mathrm{d}}{\mathrm{d} t}  (\left \| \bm{W}_L \right \|_{\mathrm{F}}^2)\,.
\end{equation*}

Then for any $l_1, l_2 \in [L]$, we have:
\begin{equation*}
    \left \| \bm{W}_{l_1}(T^{\star}) \right \| _{\mathrm{F}}^2 - \left \| \bm{W}_{l_1}(0) \right \| _{\mathrm{F}}^2 = \left \| \bm{W}_{l_2}(T^{\star}) \right \| _{\mathrm{F}}^2 - \left \| \bm{W}_{l_2}(0) \right \| _{\mathrm{F}}^2\,,
\end{equation*}
which implies:

\begin{equation}
\begin{split}
    \left \| \bm{W}_{l_1}(T^{\star}) \right \|_2 & \leq \left \| \bm{W}_{l_1}(T^{\star}) \right \| _{\mathrm{F}}\\ 
    & = \sqrt{\left \| \bm{W}_{l_1}(T^{\star}) \right \| _{\mathrm{F}}^2}\\
    & = \sqrt{\left \| \bm{W}_{l_2}(T^{\star}) \right \| _{\mathrm{F}}^2 - \left \| \bm{W}_{l_2}(0) \right \| _{\mathrm{F}}^2 + \left \| \bm{W}_{l_1}(0) \right \| _{\mathrm{F}}^2}\\
    & \leq \sqrt{\left \| \bm{W}_{l_2}(T^{\star}) \right \| _{\mathrm{F}}^2 + \left \| \bm{W}_{l_1}(0) \right \| _{\mathrm{F}}^2}\\
    & \leq \left \| \bm{W}_{l_2}(T^{\star}) \right \| _{\mathrm{F}} + \left \| \bm{W}_{l_1}(0) \right \| _{\mathrm{F}}\,.
\end{split}
\label{eq:f_norm_2_norm}
\end{equation}

According to~\citet[Proposition 16]{JMLR:v22:20-1123} and the relationship between $\ell_2$ norm and Frobenius norm, i.e. $\left \| \cdot \right \|_{\mathrm{F}} \leq \sqrt{r} \left \| \cdot \right \|_2$, where the $r$ is the rank of matrix, with probability at least $1-(L-2)\exp(-\Theta(m^2))-\exp(-\Theta(md))-\exp(-\Theta(m))$ over the initialization, we have $\left \| \bm{W}_1(0) \right \| _{\mathrm{F}} \leq \sqrt{d}\left \| \bm{W}_1(0) \right \|_2 \leq \sqrt{\frac{3md^2}{2}}\beta_1$, $\left \| \bm{W}_l(0) \right \| _{\mathrm{F}} \leq \sqrt{m} \left \| \bm{W}_l(0) \right \|_2 \leq \sqrt{\frac{3m^3}{2}}\beta_l, \  \forall l \in [L-1]$ and $\left \| \bm{W}_L(0) \right \| _{\mathrm{F}} = \left \| \bm{W}_L(0) \right \|_2 \leq\sqrt{\frac{3m}{2}}\beta_L$.

If we combine~\cref{eq:split_weight,eq:f_norm_2_norm}, for any $l^{\star} \in [L]$, with probability at least $1-(L-2)\exp(-\Theta(m^2))-\exp(-\Theta(md))-\exp(-\Theta(m))$ over the initialization, we have:

\begin{equation*}
\begin{split}
    \frac{1}{4} & \leq \frac{1}{\alpha}\left \| \bm{W}_L(T^{\star}) \right \|_2 \left \| \bm{W}_{L-1}(T^{\star}) \right \|_2\cdots \left \| \bm{W}_1(T^{\star}) \right \|_2\\
    & = \frac{1}{\alpha}\prod_{l=1}^{L}\bigg( \left \| \bm{W}_{l^{\star}}(T^{\star}) \right \| _{\mathrm{F}} + \left \| \bm{W}_{l}(0) \right \| _{\mathrm{F}}\bigg) \\
    & \leq \frac{1}{\alpha} \bigg(  \left \| \bm{W}_{l^{\star}}(T^{\star}) \right \| _{\mathrm{F}} + \frac{1}{L}\sum_{l=1}^{L}\bigg( \left \| \bm{W}_{l}(0) \right \| _{\mathrm{F}}\bigg) \bigg)^L\\
    & \leq \frac{1}{\alpha} \bigg(  \left \| \bm{W}_{l^{\star}}(T^{\star}) \right \| _{\mathrm{F}} + \sqrt{\frac{3m^3}{2L^2}}\sum_{l=1}^{L}\beta_l \bigg)^L\,.
\end{split}
\end{equation*}

Then with probability at least $1-(L-2)\exp(-\Theta(m^2))-\exp(-\Theta(md))-\exp(-\Theta(m))$ over the initialization, we have:
\begin{equation}
\left\| \bm{W}_{l^{\star}}(T^{\star}) \right\|_{\mathrm{F}}\geq \bigg(\frac{\alpha}{4}\bigg)^{1/L}-\sqrt{\frac{3m^3}{2L^2}}\sum_{l=1}^{L}\beta_l\,.
\label{eq:bound_W_l_F}
\end{equation}

Therefore, with probability at least $1-(L-2)\exp(-\Theta(m^2))-\exp(-\Theta(md))-\exp(-\Theta(m))$ over the initialization, we have:
\begin{equation*}
\begin{split}
    \sup_{t \in [0,+\infty)} \frac{\left \| \bm{W}_l(t)-\bm{W}_l(0) \right \|_{\mathrm{F}} }{\left \| \bm{W}_l(0) \right \|_{\mathrm{F}}} & \geq \frac{\left \| \bm{W}_l(T^{\star})-\bm{W}_l(0) \right \|_{\mathrm{F}} }{\left \| \bm{W}_l(0) \right \|_{\mathrm{F}}}\\
    & \geq \frac{\left \| \bm{W}_l(T^{\star}) \right \|_{\mathrm{F}} }{\left \| \bm{W}_l(0) \right \|_{\mathrm{F}}} -1\\
    & \geq \frac{(\frac{\alpha}{4})^{1/L}-\sqrt{\frac{3m^3}{2L^2}}\sum_{i=1}^{L}\beta_i}{\sqrt{\frac{3m^3}{2}}\beta_l} -1\\
    & \geq \frac{(\frac{\alpha}{4})^{1/L}-\sqrt{\frac{3m^3}{2L^2}}\sum_{i=1}^{L}\beta_i}{\sqrt{\frac{3m^3}{2}}\sum_{i=1}^{L}\beta_i} -1\\
    & = \frac{(\frac{\alpha}{4})^{1/L}}{\sqrt{\frac{3m^3}{2}}\sum_{i=1}^{L}\beta_i} -\frac{1}{L} - 1\,,
\end{split}
\end{equation*}

where the second inequality uses triangle inequality and third inequality uses~\cref{eq:bound_W_l_F}.

If $\alpha \gg (m^{3/2}\sum_{i=1}^{L}\beta_i)^L$, then with probability at least $1-(L-2)\exp(-\Theta(m^2))-\exp(-\Theta(md))-\exp(-\Theta(m))$ over the initialization, we have:

\begin{equation*}
    \sup_{t \in [0,+\infty)} \frac{\left \| \bm{W}_l(t)-\bm{W}_l(0) \right \|_{\mathrm{F}} }{\left \| \bm{W}_l(0) \right \|_{\mathrm{F}}} \gg 1\,.
\end{equation*}
\end{proof}

\section{Proof of the perturbation stability in non-lazy training regime for two-layer networks} 
\label{sec:proofs_thm3}

Without loss of generality, we consider two-layer neural networks with a scalar output without bias. 
\begin{equation}
    f(\bm{x}) = \frac{1}{\alpha}\sum_{r=1}^{m}a_r \sigma(\bm{w}_r^{\top}\bm{x})\,,
\label{eq:2-layer_network}
\end{equation}
where $\bm{x}\in \mathbb{R}^d$, $f(\bm{x}) \in \mathbb{R}$, $\alpha$ is the scaling factor. The parameters are initialized by $a_r(0) \sim \mathcal{N}(0,\beta_2^2)$, $\bm{w}_r(0) \sim \mathcal{N}(0,\beta_1^2 \bm{I}_d)$. Our result can be extended with slight modification to the multiple-output case with bias setting. Our proof requires some additional notation, which we establish below:
\begin{equation*}
    \bm{H}_{ij}^{\infty} = \frac{m}{\alpha^2}\mathbb{E}_{\bm{w}\sim \mathcal{N}(0,\beta_1^2\bm{I}_d),\bm{a}\sim \mathcal{N}(0,\beta_2^2)} a_r^2 \bm{x}_i^{\top} \bm{x}_j 1\left \{\bm{w}_r^{\top}\bm{x}_i \geq 0, \bm{w}_r^{\top}\bm{x}_j \geq 0\right \}\,,
\end{equation*}

\begin{equation*}
    \widetilde{\bm{H}}_{i,j}(t) = \frac{1}{\alpha^2}\sum_{r=1}^{m} a_r^2(t)\mathbb{E}_{\bm{w}\sim \mathcal{N}(0,\beta_1^2\bm{I}_d)}  \bm{x}_i^{\top} \bm{x}_j 1\left \{\bm{w}_r^{\top}\bm{x}_i \geq 0, \bm{w}_r^{\top}\bm{x}_j \geq 0\right \}\,,
\end{equation*}

\begin{equation*}
    \bm{H}_{i,j}(t) = \frac{1}{\alpha^2} \sum_{r=1}^{m}a_r(t)^2 \bm{x}_i^{\top} \bm{x}_j 1\left \{\bm{w}_r(t)^{\top}\bm{x}_i \geq 0, \bm{w}_r(t)^{\top}\bm{x}_j \geq 0\right \}\,,
\end{equation*}

\begin{equation*}
    \widehat{\bm{H}}_{i,j} = \frac{1}{\alpha^2} \sum_{r=1}^{m}a_r(t)^2 \bm{x}_i^{\top} \bm{x}_j 1\left \{\bm{w}_r(0)^{\top}\bm{x}_i \geq 0, \bm{w}_r(0)^{\top}\bm{x}_j \geq 0\right \}\,,
\end{equation*}

\begin{equation*}
\bm{G}_{i,j}(t) = \frac{1}{\alpha^2} \sigma(\bm{w}_r(t)^{\top}\bm{x}_i)\sigma(\bm{w}_r(t)^{\top}\bm{x}_j)\,.
\end{equation*}

The minimum eigenvalue of $\bm{H}_{ij}^{\infty}$ is denoted as $\lambda_0$ and is assumed to be strictly greater than $0$, i.e.
\begin{equation*}
    \lambda_0 = \lambda_{\min}(\bm{H}^{\infty})> 0\,.
\end{equation*}

{\bf Remark:} This assumption follows \citet{du2018gradient} but can be proved by~\citet{pmlr-v139-nguyen21g} under the NTK initialization. Moreover,~\citet{chen2021deep, https://doi.org/10.48550/arxiv.2007.01580, bietti2019inductive} discuss this assumption in different settings.

The following two symbols are used to measure the weight changes during training:

\begin{equation}
    R_a := \frac{\alpha}{n}\sqrt{\frac{\lambda_0 }{8nm}} - \sqrt{\frac{2}{\pi}}\beta_2,\quad \text{and} \quad R_w := \frac{\alpha^2 \lambda_0  \sqrt{2\pi}\beta_1}{32n^3m(R_a(R_a+\sqrt{8/\pi}\beta_2)+\beta_2^2)}\,.
\label{eq:ra_rw}
\end{equation}

The last two symbols are used to characterize the early stages of neural network training:

\begin{equation*}
    t_1^{\star} = -\frac{2}{\lambda_0}\log\bigg(1-\frac{R_w \lambda_0 \alpha}{2\sqrt{n}(\sqrt{n}\beta_2+R_a)\left \| \bm{y}-\bm{f}(0) \right \|_2}\bigg)\,,
    \label{eq:t_1_star}
\end{equation*}

\begin{equation*}
    t_2^{\star} = -\frac{2}{\lambda_0}\log\bigg(1-\frac{R_a \lambda_0 \alpha}{2\sqrt{n}(3 \beta_1 \sqrt{\log(mn^2)} + R_w )\left \| \bm{y}-\bm{f}(0) \right \|_2}\bigg)\,.
    \label{eq:t_2_star}
\end{equation*}

Then we present the details of our results on~\cref{ssec:bound_non_lazy} in this section. Firstly, we introduce some lemmas in~\cref{ssec:relevant_lemmas_non_lazy} to facilitate the proof of theorems. Then in~\cref{ssec:proof_non_lazy} we provide the proof of~\cref{thm:perturbation_stability_non_lazy}.

\subsection{Relevant Lemmas}
\label{ssec:relevant_lemmas_non_lazy}

\begin{lemma}
\label{lemma:dynamics}
\cite[Appendix A.1]{du2018gradient} Given a two-layer neural network $f$ defined by~\cref{eq:2-layer_network} and trained by $\left \{ \bm{x}_i, y_i \right \} _{i=1}^n$ using gradient descent with the quadratic loss, let $\bm{y} = (y_1,\dots,y_n)\in \mathbb{R}^{n}$ be the label vector and $\bm{f}(t) = (f_1(t),\dots,f_n(t))\in \mathbb{R}^{n}$ be the output vector at time $t$, then, we have:
\begin{equation}
    \frac{\mathrm{d}\bm{f}(t)}{\mathrm{d}t} = (\bm{H}(t)+\bm{G}(t))(\bm{y}-\bm{f}(t))\,.
\end{equation}

\end{lemma}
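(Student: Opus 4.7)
The plan is to derive the identity directly from the gradient flow ODEs on the parameters, together with an application of the chain rule. First, I would write down the per-parameter gradient flow equations for the quadratic loss $L(\bm{W}) = \tfrac{1}{2}\sum_{i=1}^n (f_i - y_i)^2$, namely $\dot{\bm{w}}_r = -\nabla_{\bm{w}_r} L = \sum_j (y_j - f_j)\,\partial_{\bm{w}_r} f_j$ and $\dot{a}_r = -\nabla_{a_r} L = \sum_j (y_j - f_j)\,\partial_{a_r} f_j$. Because the network in \cref{eq:2-layer_network} is affine in $a_r$ and the activation is piecewise-linear, the individual partial derivatives are clean: $\partial_{\bm{w}_r} f_i = \tfrac{1}{\alpha} a_r\,\mathbf{1}\{\bm{w}_r^\top \bm{x}_i \geq 0\}\,\bm{x}_i$ and $\partial_{a_r} f_i = \tfrac{1}{\alpha}\sigma(\bm{w}_r^\top \bm{x}_i)$.

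Next, I would differentiate $f_i(t)$ along the trajectory using the chain rule and then substitute the gradient flow equations:
\begin{equation*}
\frac{d f_i(t)}{dt} = \sum_{r=1}^{m} \bigl(\partial_{\bm{w}_r} f_i\bigr)^\top \dot{\bm{w}}_r + \sum_{r=1}^{m} \partial_{a_r} f_i \cdot \dot{a}_r = \sum_{j=1}^{n} (y_j - f_j)\left[\sum_{r=1}^{m} (\partial_{\bm{w}_r} f_i)^\top(\partial_{\bm{w}_r} f_j) + \sum_{r=1}^{m} (\partial_{a_r} f_i)(\partial_{a_r} f_j)\right].
\end{equation*}
Plugging in the explicit partials, the first inner sum becomes exactly $\tfrac{1}{\alpha^2}\sum_r a_r(t)^2\,\mathbf{1}\{\bm{w}_r(t)^\top \bm{x}_i \geq 0,\,\bm{w}_r(t)^\top \bm{x}_j \geq 0\}\,\bm{x}_i^\top \bm{x}_j = \bm{H}_{ij}(t)$, and the second inner sum becomes $\tfrac{1}{\alpha^2}\sum_r \sigma(\bm{w}_r(t)^\top \bm{x}_i)\sigma(\bm{w}_r(t)^\top \bm{x}_j) = \bm{G}_{ij}(t)$. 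Assembling the $n$ scalar ODEs into the vector form yields $\dot{\bm{f}}(t) = (\bm{H}(t) + \bm{G}(t))(\bm{y} - \bm{f}(t))$.

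The only subtle point, and the one I would be careful about, is the non-differentiability of ReLU at the origin: the computation of $\partial_{\bm{w}_r} f_i$ treats $\sigma'(z) = \mathbf{1}\{z \geq 0\}$, which is a valid subgradient but is not a true derivative on the measure-zero set where $\bm{w}_r^\top \bm{x}_i = 0$. Under the Gaussian initialization this event has probability zero for any fixed datum, and standard results on gradient flow for piecewise-linear networks (used throughout the NTK literature) let one treat the identity as holding almost everywhere along the trajectory. Aside from this minor measure-theoretic caveat, the lemma requires no concentration or randomness arguments; it is a purely algebraic consequence of the chain rule, which is precisely why it can be cited from \citet{du2018gradient} without further work and serve as the dynamical backbone for the subsequent lemmas tracking $\bm{f}(t)$ through the effective kernel $\bm{H}(t)+\bm{G}(t)$.
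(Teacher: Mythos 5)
Your proposal is correct and follows essentially the same route as the paper: write the gradient-flow equations for $\bm{w}_r$ and $a_r$ under the quadratic loss, compute the explicit partials of $f_i$, apply the chain rule, and identify the two resulting inner-product sums as $\bm{H}_{ij}(t)$ and $\bm{G}_{ij}(t)$. The paper's proof is a direct re-organization of \citet[Appendix A.1]{du2018gradient} along exactly these lines (without the measure-zero ReLU caveat, which you rightly note is harmless).
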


\begin{proof}
Our proof here just re-organizes~\citet[Appendix A.1]{du2018gradient}.
For making our manuscript self-contained, we provide a formal proof here.

We want to minimize the quadratic loss:
\begin{equation*}
    L(\bm{W},\bm{a}) = \sum_{i=1}^{n}\frac{1}{2}[f(\bm{W},\bm{a},\bm{x}_i)-y_i]^2\,.
\end{equation*}

Using the gradient descent algorithm, the formula for update the weights is:
\begin{equation*}
\bm{W}(t+1) = \bm{W}(t) - \eta \frac{\partial L(\bm{W}(t),\bm{a}(t))}{\partial \bm{W}(t)}\,,
\end{equation*}

\begin{equation*}
\bm{a}(t+1) = \bm{a}(t) - \eta \frac{\partial L(\bm{W}(t),\bm{a}(t))}{\partial \bm{a}(t)}\,.
\end{equation*}

According to the standard chain rule, we have:
\begin{equation*}
    \frac{\partial L(\bm{W}(t),\bm{a}(t))}{\partial \bm{W}(t)} = \frac{1}{\alpha}\sum_{i=1}^{n}[f(\bm{W}(t),\bm{a}(t),\bm{x}_i)-y_i]a_r(t) 1\left \{\bm{w}_r^{\top}(t)\bm{x}_i \geq 0\right \}\bm{x}_i \,,
\end{equation*}

\begin{equation*}
\frac{\partial L(\bm{W}(t),\bm{a}(t))}{\partial \bm{a}(t)} = \frac{1}{\alpha}\sum_{i=1}^{n}[f(\bm{W}(t),\bm{a}(t),\bm{x}_i)-y_i] \sigma(\bm{w}_r^{\top}(t)\bm{x}_i)\,.
\end{equation*}

Then, we have:
\begin{equation*}
\begin{split}
    \frac{\mathrm{d} f_i(t)}{\mathrm{d} t} & = \sum_{r=1}^{m}\left \langle \frac{\partial f_i(t)}{\partial \bm{w}_r(t)} , \frac{\partial \bm{w}_r(t)}{\partial t}\right \rangle + \sum_{r=1}^{m}\frac{\mathrm{d} f_i(t)}{\mathrm{d} a_r(t)} \frac{\mathrm{d} a_r(t)}{\mathrm{d} t}\\
    & = \sum_{i=1}^{n}[y_i-f_i(t)][\bm{H}_{ij}(t)+\bm{G}_{ij}(t)]\,.
\end{split}
\end{equation*}

Written in vector form, we have:

\begin{equation*}
    \frac{\mathrm{d}\bm{f}(t)}{\mathrm{d}t} = (\bm{H}(t)+\bm{G}(t))(\bm{y}-\bm{f}(t))\,.
\end{equation*}
\end{proof}

\begin{lemma}
\label{lemma:bound_initialization}
If $\alpha \geq \frac{16n\beta_2\sqrt{\log(2n^3)}}{\lambda_0}$, with probability at least $1-\frac{1}{n}$, we have:

\begin{equation*}
    \left \| \bm{H}(0)-\bm{H}^{\infty} \right \|_2 \leq \frac{\lambda_0}{4}, \quad \text{and} \quad \lambda_{\min}(\bm{H}(0))\geq \frac{3}{4}\lambda_0\,,
\end{equation*}
\end{lemma}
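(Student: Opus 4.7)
The plan is to establish the spectral norm bound $\|\bm{H}(0) - \bm{H}^\infty\|_2 \leq \lambda_0/4$ via an entrywise concentration argument together with a union bound, and then deduce the minimum eigenvalue bound from Weyl's inequality. The starting observation is that, since $a_r$ and $\bm{w}_r$ are drawn independently, one has $\mathbb{E}[\bm{H}_{ij}(0)] = \bm{H}_{ij}^\infty$ for every pair $(i,j)$, so the deviation $\bm{H}_{ij}(0) - \bm{H}_{ij}^\infty$ is a centered sum of $m$ independent random variables of the form $Z_r = \alpha^{-2} a_r^2 \bm{x}_i^\top \bm{x}_j \, \mathbb{1}\{\bm{w}_r^\top \bm{x}_i \geq 0,\, \bm{w}_r^\top \bm{x}_j \geq 0\}$.

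The first (and main) step is to control $|\bm{H}_{ij}(0) - \bm{H}_{ij}^\infty|$ for a fixed pair $(i,j)$ at level $\lambda_0/(4n)$. Because $a_r \sim \mathcal{N}(0,\beta_2^2)$, the variable $a_r^2$ is sub-exponential with parameter $\Theta(\beta_2^2)$, while $|\bm{x}_i^\top \bm{x}_j| \leq 1$ under Assumption~\ref{assumption:distribution_1}. I would apply Bernstein's inequality (or, equivalently, truncate $a_r^2$ at $2\beta_2^2\log(2mn^2)$ and invoke Hoeffding on the truncated variables together with a Gaussian tail bound on the bad event) to obtain
\begin{equation*}
\mathbb{P}\!\left(\left|\bm{H}_{ij}(0)-\bm{H}_{ij}^\infty\right| > \frac{\lambda_0}{4n}\right) \leq 2\exp\!\left(-c\,\frac{\alpha^2 \lambda_0}{n\beta_2^2}\right),
\end{equation*}
where the sub-exponential (linear) branch of Bernstein dominates because of the shape of the hypothesis on $\alpha$. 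Plugging the assumed lower bound $\alpha \geq 16n\beta_2\sqrt{\log(2n^3)}/\lambda_0$ into this tail estimate yields a failure probability of at most $1/(2n^3)$ for each pair $(i,j)$. This is the step I expect to be the most delicate, since it is precisely where the exact form of the condition on $\alpha$ is forced by the sub-exponentiality of $a_r^2$ and by the requirement that the entrywise error be a factor $n$ smaller than $\lambda_0$.

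Next, I would take a union bound over all $n^2$ pairs, so that with probability at least $1 - n^2/(2n^3) = 1 - 1/(2n) \geq 1 - 1/n$ (after absorbing constants), every entry satisfies $|\bm{H}_{ij}(0) - \bm{H}_{ij}^\infty| \leq \lambda_0/(4n)$. The entrywise bound is converted into a spectral norm bound through the elementary inequality $\|\bm{M}\|_2 \leq \|\bm{M}\|_{\mathrm{F}} \leq n \max_{i,j}|\bm{M}_{ij}|$ for an $n\times n$ matrix, which yields
\begin{equation*}
\left\|\bm{H}(0)-\bm{H}^\infty\right\|_2 \leq n\cdot \frac{\lambda_0}{4n} = \frac{\lambda_0}{4}.
\end{equation*}

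Finally, the second conclusion follows immediately from Weyl's inequality: since $\lambda_{\min}(\bm{H}^\infty) = \lambda_0$ by assumption,
\begin{equation*}
\lambda_{\min}(\bm{H}(0)) \geq \lambda_{\min}(\bm{H}^\infty) - \left\|\bm{H}(0)-\bm{H}^\infty\right\|_2 \geq \lambda_0 - \frac{\lambda_0}{4} = \frac{3\lambda_0}{4}.
\end{equation*}
The overall shape of the argument is standard (concentration $\to$ union bound $\to$ Weyl), and the only genuinely technical point is the sub-exponential Bernstein calculation that matches the hypothesis on $\alpha$ to the target deviation level $\lambda_0/(4n)$.
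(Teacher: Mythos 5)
Your overall architecture coincides with the paper's: entrywise concentration for each fixed pair $(i,j)$, a union bound over the $n^2$ pairs, the passage from the entrywise bound to the spectral norm via $\left\|\cdot\right\|_2\le\left\|\cdot\right\|_{\mathrm{F}}$, and Weyl's inequality for the eigenvalue claim (which the paper leaves implicit). The only structural difference is that the paper routes the concentration through the intermediate matrix $\widetilde{\bm{H}}_{i,j}$, controlling $\left|\bm{H}_{ij}^{\infty}-\widetilde{\bm{H}}_{i,j}\right|$ by Bernstein over the $a_r$'s and $\left|\bm{H}_{ij}(0)-\widetilde{\bm{H}}_{i,j}\right|$ by Hoeffding over the $\bm{w}_r$'s, each at level $\frac{2\beta_2\sqrt{\log(1/\delta)}}{\alpha}$ with $\delta=\frac{1}{2n^3}$; your one-shot concentration over the joint randomness is an acceptable substitute for that split in principle.

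The genuine problem is in the step you yourself flag as delicate. The hypothesis $\alpha\ge\frac{16n\beta_2\sqrt{\log(2n^3)}}{\lambda_0}$ is calibrated to a per-entry deviation of the form $\frac{4\beta_2\sqrt{\log(2n^3)}}{\alpha}$ --- a sub-Gaussian $\sqrt{\log(1/\delta)}$ scaling, which is exactly what the paper derives and then multiplies by $n$ to reach $\lambda_0/4$. Your claim that the \emph{linear} branch of Bernstein dominates, with tail $2\exp\!\left(-c\,\alpha^2\lambda_0/(n\beta_2^2)\right)$, would instead force a condition of the shape $\alpha\gtrsim\beta_2\sqrt{n\log n/\lambda_0}$, which is not the stated hypothesis. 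Worse, the linear branch is not the operative one: every diagonal entry of $\bm{H}^{\infty}$ equals $\frac{m\beta_2^2}{2\alpha^2}$, so $\lambda_0\le\frac{m\beta_2^2}{2\alpha^2}$ and the target level $t=\frac{\lambda_0}{4n}$ satisfies $t\le\frac{mK}{8n}<mK$, where $K\asymp\beta_2^2/\alpha^2$ is the sub-exponential norm of a single summand $Z_r$. Bernstein's minimum is therefore attained by the quadratic branch, whose exponent $t^2/(mK^2)=\frac{\lambda_0^2\alpha^4}{16n^2m\beta_2^4}$ carries an explicit $m$-dependence that the hypothesis on $\alpha$ does not control (in the over-parameterized regime $m\gg n^2$ it fails to exceed $\log(2n^3)$). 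As written, your concentration step does not close; you would need to recover a per-entry bound of the form $\beta_2\sqrt{\log(1/\delta)}/\alpha$, as the paper's two-step decomposition does, rather than the linear-branch tail.
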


{\bf Remark:} This lemma is a modified version of~\citet[Lemma 3.1]{du2018gradient}, which differs in the initialization of $\bm a$ from $ \text{Unif}(\left \{ -1, +1 \right \})$ to Gaussian initialization. 
This makes our analysis relatively intractable due to
their analysis based on $a^2_i = 1,~\forall i \in [m]$.

\begin{proof}
Firstly, for a fixed pair $(i,j)$, $\bm{H}_{ij}^{\infty}$ is an average of $\widetilde{\bm{H}}_{i,j}$ with respect to $a_r$. By Bernstein’s inequality~\citep[Chapter 2]{vershyninHighdimensionalProbabilityIntroduction2018}, with probability at least $1-\delta$, we have:
\begin{equation*}
    \left | \bm{H}_{ij}^{\infty} - \widetilde{\bm{H}}_{i,j} \right | \leq \frac{2\beta_2 \sqrt{\log(\frac{1}{\delta})}}{\alpha}\,.
\end{equation*}

Then, for fixed pair $(i,j)$, $\widetilde{\bm{H}}_{i,j}$ is an average of $\bm{H}_{ij}(0)$ with respect to $\bm{w}_r$. By Hoeffding’s inequality~\citep[Chapter 2]{vershyninHighdimensionalProbabilityIntroduction2018}, with probability at least $1-\delta'$, we have:
\begin{equation*}
    \left | \bm{H}_{ij}(0) - \widetilde{\bm{H}}_{i,j} \right | \leq \frac{2\beta_2 \sqrt{\log(\frac{1}{\delta'})}}{\alpha}\,.
\end{equation*}

Choose $\delta := \delta' := \frac{1}{2n^3}$, we have with probability at least $1-\frac{1}{n^3}$, for fixed pair $(i,j)$:
\begin{equation*}
    \left | \bm{H}_{ij}(0) - \bm{H}_{ij}^{\infty} \right | \leq \frac{4\beta_2 \sqrt{\log(2n^3)}}{\alpha}\,.
\end{equation*}

Consider the union bound over $(i,j)$ pairs, with probability at least $1-\frac{1}{n}$, we have:

\begin{equation*}
    \left | \bm{H}_{ij}(0) - \bm{H}_{ij}^{\infty} \right | \leq \frac{4\beta_2 \sqrt{\log(2n^3)}}{\alpha}\,.
\end{equation*}

Thus, we have:
\begin{equation*}
    \left \| \bm{H}(0) - \bm{H}^{\infty} \right \|_2^2 \leq \left \| \bm{H}(0) - \bm{H}^{\infty} \right \|_{\mathrm{F}}^2 \leq \sum_{i,j}\left | \bm{H}_{ij}(0) - \bm{H}_{ij}^{\infty} \right |^2 \leq \frac{16n^2\beta_2^2 \log(2n^3)}{\alpha^2}\,.
\end{equation*}

when $\alpha \geq \frac{16n\beta_2\sqrt{\log(2n^3)}}{\lambda_0}$, we have the desired result.
\end{proof}

\begin{lemma}
\label{lemma:bound_changes_of_gram_matrix}
With probability at least $1-\frac{2}{n}$ over initialization, if a set of weight vectors $\left \{ \bm{w}_r \right \}_{r=1}^m$ and the output weight $\left \{ a_r \right \}_{r=1}^m$ satisfy for all $r \in [m]$, $\left \| \bm{w}_r(t) - \bm{w}_r(0) \right \|_2 \leq R_{w}$ and $\left | a_r(t) - a_r(0) \right | \leq R_{a}$, then, we have:

\begin{equation*}
    \left \| \bm{H}(t) - \bm{H}(0) \right \|_2 \leq \frac{\lambda_0}{4}, \quad \text{and} \quad \lambda_{\min}(\bm{H}(t))\geq \frac{\lambda_0}{2}\,.
\end{equation*}
\end{lemma}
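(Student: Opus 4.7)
The plan is to control $\bm{H}(t)-\bm{H}(0)$ entrywise and then pass to operator norm via the Frobenius bound, following the same two-stage decomposition used in the proof of \cref{lemma:bound_initialization}. First I would insert the intermediate matrix $\widehat{\bm H}$ already defined in the excerpt (same $a_r(t)^2$ but indicators evaluated at the initial weights $\bm w_r(0)$) and split
\begin{equation*}
\bm H(t)-\bm H(0)=\bigl[\bm H(t)-\widehat{\bm H}\bigr]+\bigl[\widehat{\bm H}-\bm H(0)\bigr],
\end{equation*}
so that the first bracket isolates the contribution of \emph{indicator flips} while the second isolates the contribution of \emph{magnitude changes} in the output weights $a_r$.

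For the magnitude term, I would use $|a_r(t)^2-a_r(0)^2|\le R_a(R_a+2|a_r(0)|)$ together with the standard Gaussian maximum bound $\max_{r\in[m]}|a_r(0)|\lesssim \beta_2\sqrt{\log m}$, which holds with probability at least $1-\tfrac1n$ for $m$ polynomial in $n$. This yields
\begin{equation*}
|\widehat{\bm H}_{ij}-\bm H(0)_{ij}|\le \frac{m}{\alpha^2}\,R_a\bigl(R_a+\sqrt{8/\pi}\,\beta_2\bigr),
\end{equation*}
matching exactly the shape of the denominator of $R_w$ in \cref{eq:ra_rw}, which is how the bound is eventually absorbed.

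For the indicator-flip term, I would use the key geometric observation that, because $\|\bm x_i\|_2=1$ and $\|\bm w_r(t)-\bm w_r(0)\|_2\le R_w$, the indicator at time $t$ can differ from the indicator at time $0$ only on the event $A_{r,i}:=\{|\bm w_r(0)^\top\bm x_i|\le R_w\}$. Since $\bm w_r(0)^\top\bm x_i\sim\mathcal N(0,\beta_1^2)$, anti-concentration of the Gaussian density gives $\Pr(A_{r,i})\le \sqrt{2/\pi}\,R_w/\beta_1$. Summing the contributions $(a_r(t)^2/\alpha^2)\cdot|\bm x_i^\top\bm x_j|$ over flipped neurons and using Markov's (or Bernstein's) inequality together with a union bound over the at most $n^2$ pairs $(i,j)$ produces, with probability at least $1-\tfrac1n$,
\begin{equation*}
|\bm H(t)_{ij}-\widehat{\bm H}_{ij}|\;\lesssim\;\frac{m\,R_w}{\alpha^2\,\beta_1}\bigl(R_a+\beta_2\sqrt{\log m}\bigr)^2.
\end{equation*}
Plugging the explicit value of $R_w$ from \cref{eq:ra_rw} makes both entrywise bounds at most $\lambda_0/(4n)$, so that $\|\bm H(t)-\bm H(0)\|_2\le\|\cdot\|_F\le\lambda_0/4$. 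The eigenvalue bound then follows by Weyl's inequality combined with \cref{lemma:bound_initialization}: $\lambda_{\min}(\bm H(t))\ge \tfrac34\lambda_0-\tfrac14\lambda_0=\tfrac12\lambda_0$. The union of the two failure events gives the stated $1-\tfrac2n$ probability.

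The main obstacle is the indicator-flip term. Unlike \citet{du2018gradient}, where $a_r\in\{\pm1\}$ made $a_r(t)^2$ trivially bounded, here the Gaussian initialization of $a_r$ forces us to track $|a_r(t)|\le |a_r(0)|+R_a\lesssim \beta_2\sqrt{\log m}+R_a$ uniformly over $r$, and this extra $\sqrt{\log m}$ factor propagates through the calibration of $R_w$. The whole bookkeeping only closes because $R_w$ in \cref{eq:ra_rw} has been chosen precisely so that $mR_w/(\alpha^2\beta_1)$ times the squared magnitude ceiling equals $\lambda_0/(4n^2)$; verifying this algebraic balance, and making sure the anti-concentration constant $\sqrt{2\pi}$ appearing in the definition of $R_w$ is respected, is the step that requires the most care.
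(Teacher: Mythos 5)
Your proposal follows essentially the same route as the paper: the same split through $\widehat{\bm H}$ into a magnitude term and an indicator-flip term, the same flip event $A_{i,r}$ with the Gaussian anti-concentration bound $\Pr(A_{i,r})\lesssim R_w/\beta_1$, Markov plus a union bound over the $n^2$ entries, the Frobenius-norm passage to the operator norm, and Weyl's inequality combined with \cref{lemma:bound_initialization} to get $\lambda_{\min}(\bm H(t))\ge \lambda_0/2$. The one place you diverge is in controlling $|a_r(0)|$: you invoke the uniform bound $\max_r|a_r(0)|\lesssim\beta_2\sqrt{\log m}$, whereas the paper bounds the \emph{expectation} of each entry using $\mathbb{E}|a_r(0)|=\sqrt{2/\pi}\,\beta_2$ and only then applies Markov; this matters because the constants in \cref{eq:ra_rw} are calibrated to the expectation route (terms like $\sqrt{8/\pi}\,\beta_2$ and $\beta_2^2$ with no $\log m$), so your $\sqrt{\log m}$ ceiling would leave a residual logarithmic factor that the stated $R_a,R_w$ do not absorb — switching to the expectation-then-Markov argument closes this cleanly.
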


\begin{proof}

Firstly, we can derive that:

\begin{equation*}
    \widehat{\bm{H}}_{i,j}(t)-\bm{H}_{i,j}(0) = \frac{1}{\alpha^2} \sum_{r=1}^{m}(a_r(t)^2-a_r(0)^2) \bm{x}_i^{\top} \bm{x}_j 1\left \{\bm{w}_r(0)^{\top}\bm{x}_i \geq 0, \bm{w}_r(0)^{\top}\bm{x}_j \geq 0\right \}\,,
\end{equation*}

\begin{equation*}
\begin{split}
    \bm{H}_{i,j}(t)-\widehat{\bm{H}}_{i,j}(t) & = \frac{1}{\alpha^2} \sum_{r=1}^{m}a_r(t)^2 \bm{x}_i^{\top} \bm{x}_j 1\left \{\bm{w}_r(t)^{\top}\bm{x}_i \geq 0, \bm{w}_r(t)^{\top}\bm{x}_j \geq 0\right \}\\
    & - \frac{1}{\alpha^2} \sum_{r=1}^{m}a_r(t)^2 \bm{x}_i^{\top} \bm{x}_j 1\left \{\bm{w}_r(0)^{\top}\bm{x}_i \geq 0, \bm{w}_r(0)^{\top}\bm{x}_j \geq 0\right \}\,.
\end{split}
\end{equation*}

Then we can compute the expectation of $\left |\widehat{\bm{H}}_{i,j}(t)-\bm{H}_{i,j}(0)\right | $:

\begin{equation}
\begin{split}
    \mathbb{E} \left | \widehat{\bm{H}}_{i,j}(t)-\bm{H}_{i,j}(0) \right | & = \mathbb{E}  \left | \frac{1}{\alpha^2} \sum_{r=1}^{m}(a_r(t)^2-a_r(0)^2) \bm{x}_i^{\top} \bm{x}_j 1\left \{\bm{w}_r(0)^{\top}\bm{x}_i \geq 0, \bm{w}_r(0)^{\top}\bm{x}_j \geq 0\right \} \right |\\
    & \leq \frac{m}{\alpha^2} \mathbb{E} \left | a_r(t)^2-a_r(0)^2 \right |\\
    & = \frac{m}{\alpha^2} \mathbb{E} \left | (a_r(t)-a_r(0))(a_r(t)+a_r(0)) \right |\\
    & \leq \frac{m R_{a}}{\alpha^2} \mathbb{E} \left | a_r(t)+a_r(0) \right |\\
    & \leq \frac{m R_{a}}{\alpha^2} (R_{a} + 2\mathbb{E} \left | a_r(0) \right |)\\
    & \leq \frac{m (R_{a} + \mathbb{E} \left | a_r(0) \right |)^2}{\alpha^2} \\
    & = \frac{m (R_{a} + \sqrt{\frac{2}{\pi}}\beta_2)^2}{\alpha^2}\,.
\end{split}
\label{eq:bound_ht_hzero}
\end{equation}

Then we define the event:

\begin{equation*}
    A_{i,r} = \left \{ \exists : \left \| \bm{w}_r(t)- \bm{w}_r(0) \right \|\leq R_w , 1\left \{\bm{w}_r(0)^{\top}\bm{x}_i \geq 0\right \}\ne 1\left \{\bm{w}_r(t)^{\top}\bm{x}_i \geq 0\right \}\right \}\,.
\end{equation*}

This event happens if and only if $\left | \bm{w}_r(0)^{\top}\bm{x}_i \right | < R_t $. According to this, we can get $\mathbb{P}(A_{i,r}) = \mathbb{P}_{z\sim\mathcal{N}(0,\beta_1^2)}(\left | z \right |\leq R_w )\leq \frac{2R_w}{\sqrt{2\pi}\beta_1}$, further:

\begin{equation}
\begin{split}
\mathbb{E} \left | \bm{H}_{i,j}(t)-\widehat{\bm{H}}_{i,j}(t) \right | & = \frac{1}{\alpha^2}\mathbb{E}  \bigg|  \sum_{r=1}^{m}a_r(t)^2 \bm{x}_i^{\top} \bm{x}_j 1\left \{\bm{w}_r(t)^{\top}\bm{x}_i \geq 0, \bm{w}_r(t)^{\top}\bm{x}_j \geq 0\right \}\\
    & - \sum_{r=1}^{m}a_r(t)^2 \bm{x}_i^{\top} \bm{x}_j 1\left \{\bm{w}_r(0)^{\top}\bm{x}_i \geq 0, \bm{w}_r(0)^{\top}\bm{x}_j \geq 0\right \}  \bigg|\\
    & \leq \frac{1}{\alpha^2} \sum_{r=1}^{m} \mathbb{E} \bigg(a_r(t)^2 \bm{x}_i^{\top} \bm{x}_j 1\left \{ A_{i,r} \cup  A_{j,r} \right \}\bigg)\\
    & \leq \frac{1}{\alpha^2} \sum_{r=1}^{m} \mathbb{E} \bigg(a_r(t)^2 \frac{4R_w}{\sqrt{2\pi}\beta_1}\bigg)\\
    & = \frac{4R_w}{\alpha^2\sqrt{2\pi}\beta_1}\sum_{r=1}^{m} \mathbb{E} (a_r(t)^2 - a_r(0)^2 + a_r(0)^2)\\
    & \leq \frac{4R_w m}{\alpha^2\sqrt{2\pi}\beta_1}\bigg(R_a(R_a+\sqrt{\frac{8}{\pi}}\beta_2)+\beta_2^2\bigg)\,,
\end{split}
\label{eq:bound_ht_htt}
\end{equation}

where the last inequality uses the result of~\cref{eq:bound_ht_hzero}.

From~\cref{eq:bound_ht_hzero,eq:bound_ht_htt}, using Markov's inequality. with probability at least $1-\frac{2}{n}$, we have:

\begin{equation*}
     \left | \widehat{\bm{H}}_{i,j}(t)-\bm{H}_{i,j}(0) \right | \leq \frac{nm (R_{a} + \sqrt{\frac{2}{\pi}}\beta_2)^2}{\alpha^2}\,,
\end{equation*}

\begin{equation*}
     \left | \bm{H}_{i,j}(t)-\widehat{\bm{H}}_{i,j}(t) \right | \leq \frac{4R_w nm}{\alpha^2\sqrt{2\pi}\beta_1 }\bigg(R_a(R_a+\sqrt{\frac{8}{\pi}}\beta_2)+\beta_2^2\bigg)\,.
\end{equation*}

Then, we have:
\begin{equation*}
\begin{split}
    \left \| \bm{H}(t)-\bm{H}(0) \right \|_2 & \leq \left \| \bm{H}(t)-\bm{H}(0) \right \|_{\mathrm{F}}\\
    & \leq \sum_{(i,j)=(1,1)}^{(n,n)}\left | \bm{H}_{i,j}(t) - \bm{H}_{i,j}(0) \right | \\
    & \leq \sum_{(i,j)=(1,1)}^{(n,n)}\bigg(\left | \widehat{\bm{H}}_{i,j}(t)-\bm{H}_{i,j}(0) \right | + \left | \bm{H}_{i,j}(t)-\widehat{\bm{H}}_{i,j}(t) \right |\bigg)\\
    & \leq \frac{mn^3}{\alpha^2}\bigg( (R_{a} + \sqrt{\frac{2}{\pi}}\beta_2)^2 + \frac{4R_w}{\sqrt{2\pi}\beta_1}(R_a(R_a+\sqrt{\frac{8}{\pi}}\beta_2)+\beta_2^2)\bigg)\,.
\end{split}
\end{equation*}

Then, by~\cref{eq:ra_rw}, we have:
\begin{equation*}
    \left \| \bm{H}(t)-\bm{H}(0) \right \|_2 \leq \frac{\lambda_0}{4}\,,
\end{equation*}

which implies:
\begin{equation*}
    \lambda_{\min}(\bm{H}(t)) \leq \lambda_{\min}(\bm{H}(0))- \frac{\lambda_0}{4} \leq \frac{\lambda_0}{2}\,.
\end{equation*}
\end{proof}

\begin{lemma}
\label{lemma:bound_w_use_a}
    Suppose that for $0 \leq s \leq t$, $\lambda_{\min}(\bm{H}(s))\geq \frac{\lambda_0}{2}$ and $\left | a_r(s) - a_r(0) \right | \leq R_{a}$. Then with probability at least $1-n\exp(-n/2)$ over initialization, we have $\left \| \bm{w}_r(t) - \bm{w}_r(0) \right \|_2 \leq R_{w}$ for all $r \in [m]$ and the $t \leq t_1^{\star}$.
\end{lemma}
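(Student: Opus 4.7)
The plan is to control $\|\bm{w}_r(t)-\bm{w}_r(0)\|_2$ by integrating the gradient-flow velocity $\|\mathrm{d}\bm{w}_r/\mathrm{d}s\|_2$ along the trajectory, and then exploit the exponential decay of the residual $\|\bm{y}-\bm{f}(s)\|_2$ that is guaranteed by the eigenvalue hypothesis on $\bm{H}(s)$.

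First, I would establish linear convergence of the residual. By \cref{lemma:dynamics},
\begin{equation*}
\frac{\mathrm{d}}{\mathrm{d}s}\|\bm{y}-\bm{f}(s)\|_2^{2} = -2(\bm{y}-\bm{f}(s))^{\top}\bigl(\bm{H}(s)+\bm{G}(s)\bigr)(\bm{y}-\bm{f}(s)).
\end{equation*}
Since $\bm{G}(s)$ is PSD (it is a Gram matrix of the features $\sigma(\bm{w}_r(s)^{\top}\bm{x}_i)/\alpha$) and $\lambda_{\min}(\bm{H}(s))\geq\lambda_0/2$ by hypothesis, I get $\frac{\mathrm{d}}{\mathrm{d}s}\|\bm{y}-\bm{f}(s)\|_2^{2}\leq -\lambda_0\|\bm{y}-\bm{f}(s)\|_2^{2}$, hence $\|\bm{y}-\bm{f}(s)\|_2 \leq e^{-\lambda_0 s/2}\|\bm{y}-\bm{f}(0)\|_2$.

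Second, I would bound the per-neuron velocity. The gradient-flow update reads
\begin{equation*}
\frac{\mathrm{d}\bm{w}_r(s)}{\mathrm{d}s} = -\frac{1}{\alpha}\sum_{i=1}^{n}(f_i(s)-y_i)\,a_r(s)\,\mathbf{1}\{\bm{w}_r(s)^{\top}\bm{x}_i\geq 0\}\,\bm{x}_i.
\end{equation*}
Using $\|\bm{x}_i\|_2=1$, the triangle inequality, $|a_r(s)|\leq |a_r(0)|+R_a$ (from the hypothesis), and Cauchy--Schwarz $\sum_i|f_i(s)-y_i|\leq \sqrt{n}\|\bm{f}(s)-\bm{y}\|_2$, I obtain
\begin{equation*}
\left\|\frac{\mathrm{d}\bm{w}_r(s)}{\mathrm{d}s}\right\|_2 \leq \frac{\sqrt{n}\,(|a_r(0)|+R_a)}{\alpha}\,\|\bm{f}(s)-\bm{y}\|_2.
\end{equation*}
To remove the dependence on $|a_r(0)|$, I would apply a standard Gaussian tail bound: for each $r$, $\mathbb{P}(|a_r(0)|>\sqrt{n}\,\beta_2)\leq 2e^{-n/2}$, and a union bound over $r\in[m]$ gives $|a_r(0)|\leq\sqrt{n}\,\beta_2$ for all $r$ with the claimed probability $1-n\exp(-n/2)$ (up to a constant/cosmetic factor that matches the statement).

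Third, I would integrate and solve for the stopping time. Using the exponential decay of the residual,
\begin{equation*}
\|\bm{w}_r(t)-\bm{w}_r(0)\|_2 \leq \int_{0}^{t}\left\|\frac{\mathrm{d}\bm{w}_r(s)}{\mathrm{d}s}\right\|_2\mathrm{d}s \leq \frac{\sqrt{n}(\sqrt{n}\beta_2+R_a)\|\bm{y}-\bm{f}(0)\|_2}{\alpha}\cdot\frac{2}{\lambda_0}\bigl(1-e^{-\lambda_0 t/2}\bigr).
\end{equation*}
Setting the right-hand side $\leq R_w$ and solving for $t$ yields exactly the definition of $t_1^{\star}$, closing the proof. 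The only genuinely delicate point is making the Gaussian union bound on $\{|a_r(0)|\}_{r=1}^{m}$ yield the exact probability $1-n\exp(-n/2)$ stated in the lemma; the rest is a direct chaining of the dynamics lemma, triangle/Cauchy--Schwarz, and a Grönwall-type integration.
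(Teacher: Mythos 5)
Your proposal is correct and follows essentially the same route as the paper's proof: linear decay of the residual from the eigenvalue hypothesis and positivity of $\bm{G}(s)$, a per-neuron velocity bound using $|a_r(s)|\leq|a_r(0)|+R_a$ with a Gaussian tail bound $|a_r(0)|\leq\sqrt{n}\beta_2$, and integration to obtain the bound defining $t_1^{\star}$. The union-bound probability issue you flag (a factor of $m$ versus $n$ in front of $\exp(-n/2)$) is present in the paper's own argument as well, so your treatment matches theirs.
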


\begin{proof}
By~\cref{lemma:dynamics}, we have $\frac{\mathrm{d}\bm{f}(t)}{\mathrm{d}t} = (\bm{H}(t)+\bm{G}(t))(\bm{y}-\bm{f}(t))$. Then we can calculate the dynamics of risk function:

\begin{equation*}
\begin{split}
    \frac{\mathrm{d}}{\mathrm{d} t} \left \| \bm{y}-\bm{f}(t) \right \|_2^2 & = -2(\bm{y}-\bm{f}(t))^{\top}(\bm{H}(t)+\bm{G}(t))(\bm{y}-\bm{f}(t))\\
    & \leq -2(\bm{y}-\bm{f}(t))^{\top}(\bm{H}(t))(\bm{y}-\bm{f}(t))\\
    & \leq -\lambda_0 \left \| \bm{y}-\bm{f}(t) \right \|_2^2\,,
\end{split}
\end{equation*}
in the first inequality we use that the $\bm{G}(t)$ is Gram matrix thus it is positive. Then, we have $ \frac{\mathrm{d}}{\mathrm{d} t} \bigg(e^{\lambda_0 t}\left \| \bm{y}-\bm{f}(t) \right \|_2^2\bigg) \leq 0$, then $e^{\lambda_0 t}\left \| \bm{y}-\bm{f}(t) \right \|_2^2$ is a decreasing function with respect to $t$. Thus, we can bound the risk:

\begin{equation}
    \left \| \bm{y}-\bm{f}(t) \right \|_2^2 \leq e^{-\lambda_0 t}\left \| \bm{y}-\bm{f}(0) \right \|_2^2\,.
\end{equation}

Then we bound the gradient of $\bm{w}_r$. For $0 \leq s \leq t$, With probability at least $1-n\exp(-n/2)$, we have:

\begin{equation*}
    \begin{split}
        \left \| \frac{\mathrm{d}}{\mathrm{d} s}\bm{w}_r(s)  \right \|_2 & = \left \| \frac{1}{\alpha}\sum_{i=1}^{n}[f(\bm{W}(s),\bm{a}(s),\bm{x}_i)-y_i]a_r(s) 1\left \{\bm{w}_r^{\top}(s)\bm{x}_i \geq 0\right \}\bm{x}_i  \right \|_2\\
        & \leq \frac{1}{\alpha}\sum_{i=1}^{n} \left | f(\bm{W}(s),\bm{a}(s),\bm{x}_i)-y_i \right | \left | a_r(0) + R_a \right | \\
        & \leq \frac{\sqrt{n}}{\alpha}\left \| \bm{y}-\bm{f}(s) \right \|_2 (\sqrt{n}\beta_2 + R_a)\\
        & \leq \frac{\sqrt{n}}{\alpha}(\sqrt{n}\beta_2 + R_a)e^{-\lambda_0 s/2}\left \| \bm{y}-\bm{f}(0) \right \|_2\,,
    \end{split}
\end{equation*}

where the second inequality is because of $a_r(0) \sim \mathcal{N}(0,\beta_2^2)$, then with probability at least $1-\exp(-n/2)$, we have $a_r(0) \leq \sqrt{n} \beta_2$. Then, we have:

\begin{equation}
    \left \| \bm{w}_r(t) - \bm{w}_r(0) \right \|_2 \leq \int_{0}^{t}\left \| \frac{\mathrm{d}}{\mathrm{d} s}\bm{w}_r(s)  \right \|_2 \mathrm{d}s \leq \frac{2\sqrt{n}}{\lambda_0\alpha}(\sqrt{n}\beta_2 + R_a)\left \| \bm{y}-\bm{f}(0) \right \|_2 (1-\exp(-\frac{\lambda_0 t}{2}))\,.
    \label{eq:wt}
\end{equation}

% Bring in $t$, then finish the proof.
If we account for $t$, then we conclude the proof. 
\end{proof}

\begin{lemma}
\label{lemma:bound_a_use_w}
Suppose that for $0 \leq s \leq t$, $\lambda_{\min}(\bm{H}(s))\geq \frac{\lambda_0}{2}$ and $\left \| \bm{w}_r(s) - \bm{w}_r(0) \right \|_2 \leq R_{w}$. Then with probability at least $1-\frac{1}{n}$ over initialization, we have $\left | a_r(t) - a_r(0) \right | \leq R_{a}$ for all $r \in [m]$ and the $t \leq t_2^{\star}$.
\end{lemma}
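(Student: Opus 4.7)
The plan is to mirror the structure of the preceding Lemma (the $\|\bm w_r(t)-\bm w_r(0)\|_2\le R_w$ bound), but now controlling the drift of $a_r$ using the time-evolution of $a_r$ induced by gradient descent on the squared loss. Concretely, by Lemma on the dynamics we have $\tfrac{d\bm f(t)}{dt}=(\bm H(t)+\bm G(t))(\bm y-\bm f(t))$, and since we assume $\lambda_{\min}(\bm H(s))\ge \lambda_0/2$ on $[0,t]$ (with $\bm G(s)\succeq 0$), the standard Grönwall-type argument gives
\begin{equation*}
\|\bm y-\bm f(s)\|_2 \;\le\; e^{-\lambda_0 s/2}\|\bm y-\bm f(0)\|_2,\qquad 0\le s\le t.
\end{equation*}

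Next, I would bound the instantaneous change $\tfrac{d}{ds}a_r(s)=-\tfrac{1}{\alpha}\sum_{i=1}^n(f_i(s)-y_i)\,\sigma(\bm w_r(s)^\top\bm x_i)$. Using $|\sigma(z)|\le|z|$ and the triangle inequality,
\begin{equation*}
|\sigma(\bm w_r(s)^\top\bm x_i)|\;\le\;|\bm w_r(0)^\top\bm x_i|+\|\bm w_r(s)-\bm w_r(0)\|_2\|\bm x_i\|_2\;\le\;|\bm w_r(0)^\top\bm x_i|+R_w,
\end{equation*}
where I used Assumption on $\|\bm x_i\|_2=1$ and the hypothesis $\|\bm w_r(s)-\bm w_r(0)\|_2\le R_w$. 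Since $\bm w_r(0)\sim\mathcal N(0,\beta_1^2\bm I_d)$, each $\bm w_r(0)^\top\bm x_i\sim\mathcal N(0,\beta_1^2)$, so the standard Gaussian tail bound plus a union bound over the $mn$ pairs $(r,i)$ gives, with probability at least $1-1/n$, $|\bm w_r(0)^\top\bm x_i|\le 3\beta_1\sqrt{\log(mn^2)}$ simultaneously.

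Combining these ingredients and using Cauchy--Schwarz $\sum_i|f_i-y_i|\le\sqrt n\,\|\bm y-\bm f(s)\|_2$ yields
\begin{equation*}
\Bigl|\tfrac{d}{ds}a_r(s)\Bigr|\;\le\;\tfrac{\sqrt n}{\alpha}\bigl(3\beta_1\sqrt{\log(mn^2)}+R_w\bigr)\,e^{-\lambda_0 s/2}\,\|\bm y-\bm f(0)\|_2.
\end{equation*}
Integrating from $0$ to $t$ gives
\begin{equation*}
|a_r(t)-a_r(0)|\;\le\;\tfrac{2\sqrt n}{\lambda_0\alpha}\bigl(3\beta_1\sqrt{\log(mn^2)}+R_w\bigr)\|\bm y-\bm f(0)\|_2\bigl(1-e^{-\lambda_0 t/2}\bigr).
\end{equation*}
Finally I would set the right-hand side $\le R_a$, solve for $t$, and verify that the resulting threshold is exactly the $t_2^\star$ defined in the excerpt.

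I expect the only real subtlety to be the Gaussian concentration step that produces the $3\beta_1\sqrt{\log(mn^2)}$ factor and, in particular, tracking the union-bound failure probability so that it collapses into the stated $1-1/n$; every other step is a direct analogue of the proof of the previous lemma, with $\sigma(\bm w_r^\top\bm x_i)$ playing the role that $a_r\,\bm x_i$ played there.
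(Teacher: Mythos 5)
Your proposal is correct and follows essentially the same route as the paper's proof: the exponential decay of $\left\|\bm{y}-\bm{f}(s)\right\|_2$ from $\lambda_{\min}(\bm{H}(s))\geq\lambda_0/2$, the bound $|\sigma(\bm{w}_r(s)^{\top}\bm{x}_i)|\leq|\bm{w}_r(0)^{\top}\bm{x}_i|+R_w$, the Gaussian tail plus union bound giving $3\beta_1\sqrt{\log(mn^2)}$ with probability $1-\tfrac{1}{n}$, and integration of $|\tfrac{\mathrm{d}}{\mathrm{d}s}a_r(s)|$ followed by solving for the threshold $t_2^{\star}$. If anything, your write-up makes explicit a few steps (the Cauchy--Schwarz step and the Lipschitz bound on $\sigma$) that the paper leaves implicit.
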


\begin{proof}

Note for any $i \in [n]$ and $r \in [m]$, $\bm{w}_r^{\top}(0)\bm{x}_i ~\sim \mathcal{N}(0, \beta_1^2)$. Therefore applying Gaussian tail bound and union bound, we have with probability at least $1 - \frac{1}{n}$, for all $i \in [n]$ and $r \in [m]$, $\left | \bm{w}_r^{\top}(0)\bm{x}_i \right |\leq 3 \beta_1 \sqrt{\log(mn^2)}$, That means for $0 \leq s \leq t$, With probability at least $1-\frac{1}{n}$, we have:

\begin{equation*}
    \begin{split}
        \left | \frac{\mathrm{d}}{\mathrm{d} s}a_r(s)  \right | & = \left | \frac{1}{\alpha}\sum_{i=1}^{n}[f(\bm{W}(t),\bm{a}(t),\bm{x}_i)-y_i] \sigma(\bm{w}_r^{\top}(t)\bm{x}_i)  \right |\\
        & \leq \frac{\sqrt{n}}{\alpha}\left \| \bm{y}-\bm{f}(s) \right \|_2 (\left | \bm{w}_r^{\top}(0)\bm{x}_i \right | + R_w )\\
        & \leq \frac{\sqrt{n}}{\alpha}e^{-\lambda_0 s/2}\left \| \bm{y}-\bm{f}(0) \right \|_2 \bigg(3 \beta_1 \sqrt{\log(mn^2)} + R_w \bigg)\,.
    \end{split}
\end{equation*}

Then, we have:
\begin{equation}
\small
    \left | a_r(t) - a_r(0) \right |_2 \leq \int_{0}^{t}\left | \frac{\mathrm{d}}{\mathrm{d} s}a_r(s)  \right | \mathrm{d}s \leq \frac{2\sqrt{n}}{\lambda_0\alpha}\bigg(3 \beta_1 \sqrt{\log(mn^2)} + R_w \bigg)\left \| \bm{y}-\bm{f}(0) \right \|_2 (1-\exp(-\frac{\lambda_0 t}{2}))\,.
    \label{eq:at}
\end{equation}

Bring in $t$, then finish the proof.

\end{proof}

\begin{lemma}
\label{lemma:bound_wt_at}
Suppose $0 \leq t \leq \min(t_1^{\star}, t_2^{\star})$. Then with probability at least $1-n\exp(-n/2)-\frac{3}{n}$ over initialization, we have: $\lambda_{\min}(\bm{H}(t))\geq \frac{\lambda_0}{2}$, 
\begin{equation*}
    \left | a_r(t) - a_r(0) \right | \leq \frac{2\sqrt{n}}{\lambda_0\alpha}\bigg(3 \beta_1 \sqrt{\log(mn^2)} + R_w \bigg)\left \| \bm{y}-\bm{f}(0) \right \|_2 (1-\exp(-\frac{\lambda_0 t}{2})) := R_a^{\star}(t)\,,
\end{equation*}

\begin{equation*}
    \left \| \bm{w}_r(t) - \bm{w}_r(0) \right \|_2 \leq \frac{2\sqrt{n}}{\lambda_0\alpha}(\sqrt{n}\beta_2 + R_a)\left \| \bm{y}-\bm{f}(0) \right \|_2 (1-\exp(-\frac{\lambda_0 t}{2})) := R_w^{\star}(t)\,,
\end{equation*}

for all $r \in [m]$.
% , where $R_a$ and $R_w$ satisfy~\cref{eq:ra_rw}, $t_1^{\star}$ satisfy~\cref{eq:t_1_star} and $t_2^{\star}$ satisfy~\cref{eq:t_2_star}.
\end{lemma}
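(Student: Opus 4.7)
The plan is a standard continuity/bootstrap argument that couples the three conclusions so that each one, once it holds on $[0,t]$, forces the others via the preceding lemmas. Concretely, define the stopping time
\begin{equation*}
t^{\star} := \inf\Big\{ t \geq 0 \,:\, \lambda_{\min}(\bm{H}(t)) < \tfrac{\lambda_0}{2} \ \text{or}\ \exists r,\ |a_r(t)-a_r(0)| > R_a\ \text{or}\ \|\bm{w}_r(t)-\bm{w}_r(0)\|_2 > R_w \Big\}.
\end{equation*}
At $t=0$ we have $|a_r(0)-a_r(0)|=0 < R_a$, $\|\bm{w}_r(0)-\bm{w}_r(0)\|_2 = 0 < R_w$, and by \cref{lemma:bound_initialization}, $\lambda_{\min}(\bm{H}(0)) \geq 3\lambda_0/4 > \lambda_0/2$ on an event $\mathcal{E}_1$ of probability at least $1-1/n$, so $t^{\star}>0$ by continuity of the trajectory. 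The goal is to prove $t^{\star} \geq \min(t_1^{\star},t_2^{\star})$, because on the good event all three bounds then hold on $[0,\min(t_1^{\star},t_2^{\star})]$ and in particular at the given $t$.

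Assume for contradiction that $t^{\star} < \min(t_1^{\star},t_2^{\star})$. On the interval $[0,t^{\star})$ the hypotheses of \cref{lemma:bound_w_use_a} and \cref{lemma:bound_a_use_w} are satisfied: $\lambda_{\min}(\bm{H}(s)) \geq \lambda_0/2$, $|a_r(s)-a_r(0)|\leq R_a$, and $\|\bm{w}_r(s)-\bm{w}_r(0)\|_2 \leq R_w$. Applying those two lemmas on events $\mathcal{E}_2$ and $\mathcal{E}_3$ of respective probabilities $1-n\exp(-n/2)$ and $1-1/n$ yields the sharper time-dependent bounds
\begin{equation*}
\|\bm{w}_r(t^{\star}) - \bm{w}_r(0)\|_2 \leq R_w^{\star}(t^{\star}), \qquad |a_r(t^{\star})-a_r(0)| \leq R_a^{\star}(t^{\star}).
\end{equation*}
By the very definitions of $t_1^{\star}$ and $t_2^{\star}$ in \cref{eq:t_1_star,eq:t_2_star}, the functions $R_w^{\star}(\cdot)$ and $R_a^{\star}(\cdot)$ are strictly increasing in $t$ and satisfy $R_w^{\star}(t_1^{\star}) = R_w$ and $R_a^{\star}(t_2^{\star}) = R_a$. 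Since $t^{\star} < \min(t_1^{\star},t_2^{\star})$, we obtain $\|\bm{w}_r(t^{\star})-\bm{w}_r(0)\|_2 < R_w$ and $|a_r(t^{\star})-a_r(0)| < R_a$ with strict inequality for every $r$.

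Now applying \cref{lemma:bound_changes_of_gram_matrix} at time $t^{\star}$ (on an event $\mathcal{E}_4$ of probability $1-2/n$) yields $\lambda_{\min}(\bm{H}(t^{\star})) \geq \lambda_0/2$ with strict gap $\|\bm{H}(t^{\star})-\bm{H}(0)\|_2 \leq \lambda_0/4 < 3\lambda_0/4$. Hence all three defining inequalities of the stopping time hold strictly at $t^{\star}$, so by continuity of $t\mapsto (\bm{W}(t),\bm{a}(t))$ the same strict inequalities persist on a neighborhood $[t^{\star}, t^{\star}+\delta)$, contradicting the definition of $t^{\star}$. Therefore $t^{\star} \geq \min(t_1^{\star},t_2^{\star})$, and for any $t$ in that interval the three conclusions hold on the event $\mathcal{E}_1 \cap \mathcal{E}_2 \cap \mathcal{E}_3 \cap \mathcal{E}_4$; a union bound gives probability at least $1 - n\exp(-n/2) - 3/n$, matching the claim.

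The main obstacle, as the sketch makes explicit, is the circular coupling: the spectral lower bound on $\bm{H}(t)$ requires both weight bounds, but each weight bound in turn requires $\lambda_{\min}(\bm{H}(s))\geq \lambda_0/2$ to integrate the exponentially decaying residual $\|\bm{y}-\bm{f}(s)\|_2$. The stopping-time / bootstrap framework above is the standard way to break this circularity; the only thing one must be careful about is that the bounds defining $t^{\star}$ are \emph{strict} until one hits $\min(t_1^{\star},t_2^{\star})$, which is guaranteed by the strict monotonicity of $R_w^{\star}(\cdot)$ and $R_a^{\star}(\cdot)$ and the strict inequality $\lambda_0/4 < 3\lambda_0/4$ obtained from \cref{lemma:bound_initialization,lemma:bound_changes_of_gram_matrix}.
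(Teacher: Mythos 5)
Your proposal is correct and is essentially the paper's own argument: the paper's proof is a one-line induction/bootstrap ("start from $t=0$ and combine Lemmas~\ref{lemma:bound_changes_of_gram_matrix}, \ref{lemma:bound_w_use_a}, \ref{lemma:bound_a_use_w}"), and your stopping-time formulation is just the rigorous spelling-out of that same continuity argument. The only quibbles are cosmetic: $\lambda_{\min}(\bm{H}(t))\geq\lambda_0/2$ past $t^{\star}$ should be re-derived from Lemma~\ref{lemma:bound_changes_of_gram_matrix} (since $\bm{H}(t)$ jumps with the ReLU patterns) rather than from continuity, and your four failure events nominally sum to $4/n$ rather than $3/n$, a bookkeeping looseness the paper shares.
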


\begin{proof}
When $t = 0$, $\lambda_{\min}(\bm{H}(s))\geq \frac{3}{4}\lambda_0$, $\left | a_r(t) - a_r(0) \right | = 0 < R_{a}$ and $\left \| \bm{w}_r(t) - \bm{w}_r(0) \right \|_2 = 0 < R_{w}$. Using induction, combine~\cref{lemma:bound_changes_of_gram_matrix}, ~\cref{lemma:bound_w_use_a} and ~\cref{lemma:bound_a_use_w}, we have the result.
\end{proof}

\subsection{Proof of~\cref{thm:perturbation_stability_non_lazy}}
\label{ssec:proof_non_lazy}
\begin{proof}
We can compute the gradient of the network that:

\begin{equation*}
    \nabla_{\bm{x}} f_t(\bm{x})= \frac{1}{\alpha} \sum_{r=1}^{m} a_r(t) 1\left \{\bm{w}_r^{\top}(t)\bm{x} \geq 0\right \}\bm{w}_r^{\top}(t) \,.
\end{equation*}
\end{proof}

Then we can derive that:
\begin{equation}
\begin{split}
    \mathscr{P}(f_t, \epsilon) & =  \mathbb{E}_{\bm{x},\hat{\bm{x}}} \left | \frac{1}{\alpha} \sum_{r=1}^{m} a_r(t) 1\left \{\bm{w}_r^{\top}(t)\bm{x} \geq 0\right \}\bm{w}_r^{\top}(t) (\bm{x}-\bm{\hat{x}}) \right | \\
    & \leq \frac{1}{\alpha} \mathbb{E}_{\bm{x},\hat{\bm{x}}} \sum_{r=1}^{m}\left |   a_r(t) \bm{w}_r^{\top}(t) (\bm{x}-\bm{\hat{x}})\right |\\
    & \leq \frac{1}{\alpha} \mathbb{E}_{\bm{x},\hat{\bm{x}}} \sum_{r=1}^{m}\left |   a_r(t) \right | \left \| \bm{w}_r(t) \right \|_2  \left \| \bm{x}-\bm{\hat{x}}\right \|_2\\
    & \leq \frac{\epsilon}{\alpha} \sum_{r=1}^{m}\left | a_r(t) \right | \left \| \bm{w}_r(t) \right \|_2 \,.
\end{split}
\label{eq:p_non_lazy}
\end{equation}

Then by~\cref{lemma:bound_wt_at}, we have:

\begin{equation*}
    \left |   a_r(t) \right | \leq \left | a_r(t) - a_r(0) \right | + \left | a_r(0) \right | \leq R_a^{\star}(t) + \left | a_r(0) \right |\,.
\end{equation*}

\begin{equation*}
    \left \| \bm{w}_r(t) \right \|_2 \leq \left \| \bm{w}_r(t) - \bm{w}_r(0) \right \|_2 + \left \| \bm{w}_r(0) \right \|_2 \leq R_w^{\star}(t) + \left \| \bm{w}_r(0) \right \|_2\,.
\end{equation*}

From~\cref{eq:bound_ht_hzero}, we have $ \mathbb{E} \left | a_r(0) \right | = \sqrt{\frac{2}{\pi}}\beta_2$. That means with probability at least $1-\frac{1}{n}$ over initialization, we have $\left | a_r(0) \right | \leq \sqrt{\frac{2}{\pi}}n\beta_2$.

By~\citet[Chapter 3]{vershyninHighdimensionalProbabilityIntroduction2018}, with probability at least $1-\delta$ over initialization, we have $\left \| \bm{w}_r(0) \right \|_2 \leq 4\beta_1\sqrt{m} + 2\beta_1\sqrt{\log n}$.

By combining the results above with~\cref{eq:p_non_lazy} and~\cref{lemma:bound_wt_at}, with probability at least $1-n\exp(-n/2)-\frac{3}{n}$ over initialization we obtain that:

\begin{equation}
\begin{split}
    \mathscr{P}(f_t, \epsilon) & \leq \frac{\epsilon}{\alpha} \sum_{r=1}^{m}\left |  a_r(t) \right | \left \| \bm{w}_r(t) \right \|_2 \\
    & \leq \frac{\epsilon m}{\alpha} (R_a^{\star}(t) + \sqrt{\frac{2}{\pi}}n\beta_2) (R_w^{\star}(t) + 4\beta_1\sqrt{m} + 2\beta_1\sqrt{\log n})
\end{split}
\label{eq:non_lazy_bound}
\end{equation}

Suppose that $\alpha \sim 1$, $\beta_1 \sim \beta_2 \sim \beta \sim \frac{1}{m^c}$, $c \geq 1.5$, $m\gg n^2$. Then $R_a = \Theta(\frac{1}{\sqrt{n^3m}})$, $R_w = \Theta(\frac{1}{m^c})$, $R_a^{\star}(t) = \Theta(\frac{\sqrt{n \log m}}{m^c})$ and $R_w^{\star}(t) = \Theta(\frac{1}{\sqrt{n^3m}})$. Bring these results into~\cref{eq:non_lazy_bound}, with probability at least $1-n\exp(-\frac{n}{2})-\frac{3}{n}$ over initialization, we have:

\begin{equation*}
    \mathscr{P}(f_t, \epsilon) \leq \Theta \bigg(\epsilon \frac{\sqrt{n \log m} + n}{m^{c-1}} \bigg(\frac{1}{\sqrt{n^3m}}+\frac{1}{m^{c-0.5}} \bigg)\bigg)\,.
\end{equation*}

\section{Additional Experiments}
\label{sec:additional_experiments}

A number of additional experiments are conducted in this section. Unless explicitly mentioned otherwise, the experimental setup remains similar to the one in the main paper. The following experiments are conducted below: 

\begin{enumerate}
    \item In~\cref{ssec:additional_experiments_compare_lazy_nonlazy}, we compare the two different training regimes, lazy training and non-lazy training.
    \item In~\cref{ssec:ablation_early_stop}, we conduct some experiments to assess early-stopping training.
    \item In~\cref{ssec:additional_experiments_cnn}, we extend the experiments in~\cref{ssec:Validation_width} from fully connected network to Convolutional Neural Network.
    \item In~\cref{ssec:additional_experiments_non_lazy}, we conduct experiments with additional initializations under the non-lazy training regime.
    \item In~\cref{ssec:additional_experiments_NTK}, we extend the experiments from He and LeCun initialization to NTK initialization.
    \item In~\cref{ssec:additional_experiments_ResNet}, we extend in~\cref{ssec:Validation_width} from fully connected network to ResNet.
\end{enumerate}

\subsection{Comparison of Lazy training and Non-lazy training}
\label{ssec:additional_experiments_compare_lazy_nonlazy}

In this section, we test the performance of the lazy training regime and the non-lazy training regime on the standard ResNet-110\footnote{We use the following link for the implementation: \url{https://github.com/bearpaw/pytorch-classification}.}. We adopt a narrow model width for computational efficiency. We choose the He initialization and the non-lazy training initialization as mentioned in~\cref{ssec:experiment_setting} on CIFAR10 and CIFAR100. The results are provided in~\cref{table:resnet_lazy_non_lazy}. Notice that the non-lazy training regime achieves a similar performance to the lazy training regime. This implies that the non-lazy training regime is also needed for studying practical learning tasks.

\begin{table}[t]
\caption{Compare the clean accuracy for lazy training regime and non-lazy training regime of ResNet-110.}
\label{table:resnet_lazy_non_lazy}
\centering
\begin{tabular}{c | c | c}
\hline
Dataset & Lazy training & Non-lazy training\\
\hline
CIFAR10  & $92.89\%$ & $92.14\%$ \\
CIFAR100 & $71.08\%$ & $70.55\%$ \\
\hline
\end{tabular}
\end{table}

\subsection{Ablation study on early stopping (training 50 epochs)}
\label{ssec:ablation_early_stop}

In this section, we conduct an experiment to assess the early-stopping technique that is frequently employed in neural network training. In our case, we consider stopping after 50 epochs. The experimental results shown in~\cref{fig:ablation_early_stop} indicate that the loss and accuracy of the neural network remain almost unchanged from the 50th epoch to the 200th epoch under two different network settings: width = 32, depth = 4 and width = 64, depth = 8. Therefore, we train the rest of the networks for 50 epochs in this work.

\begin{figure}[t]
\centering
    \subfigure[]{\includegraphics[width=0.49\linewidth]{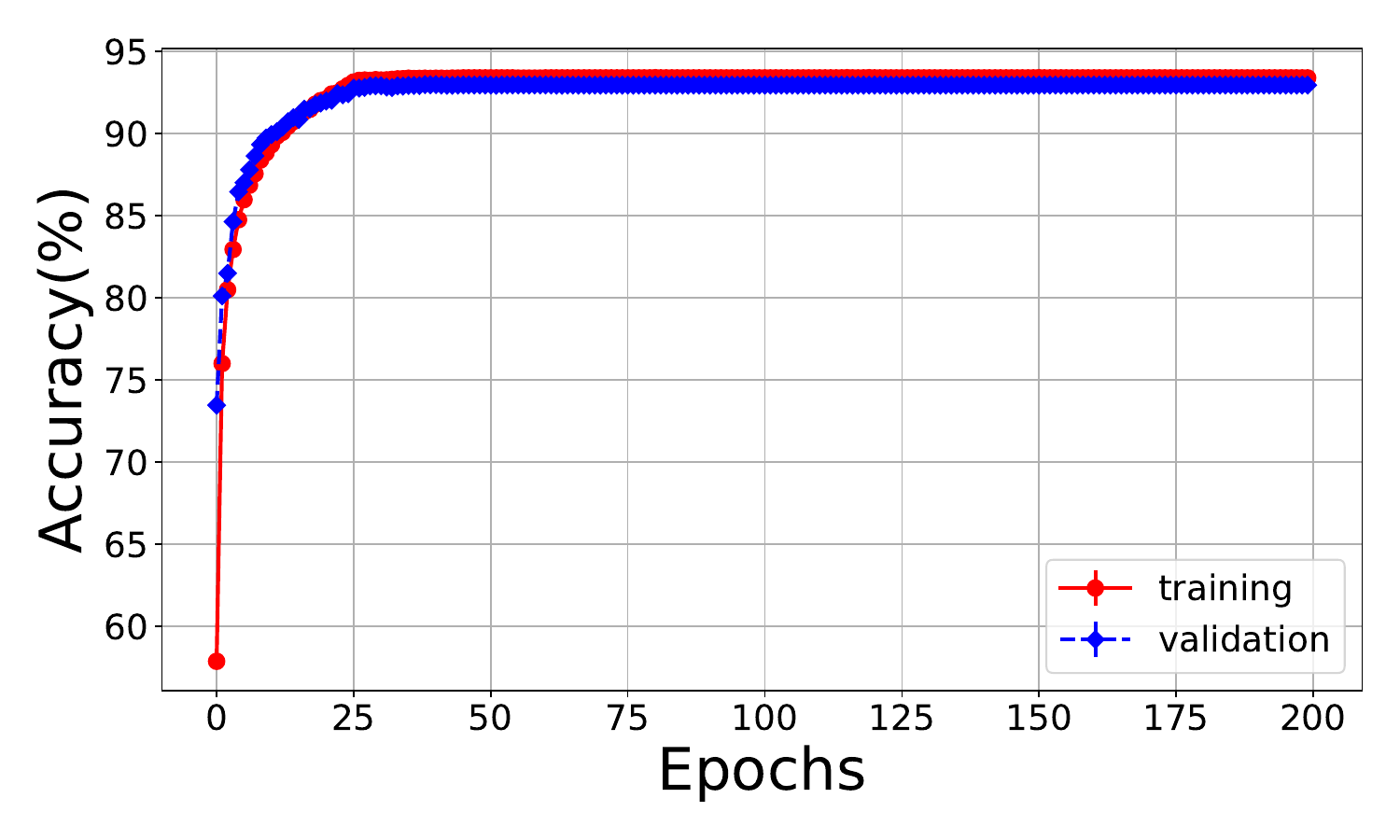}}\label{fig:ablation_early_stop_acc_4}
    \subfigure[]{\includegraphics[width=0.49\linewidth]{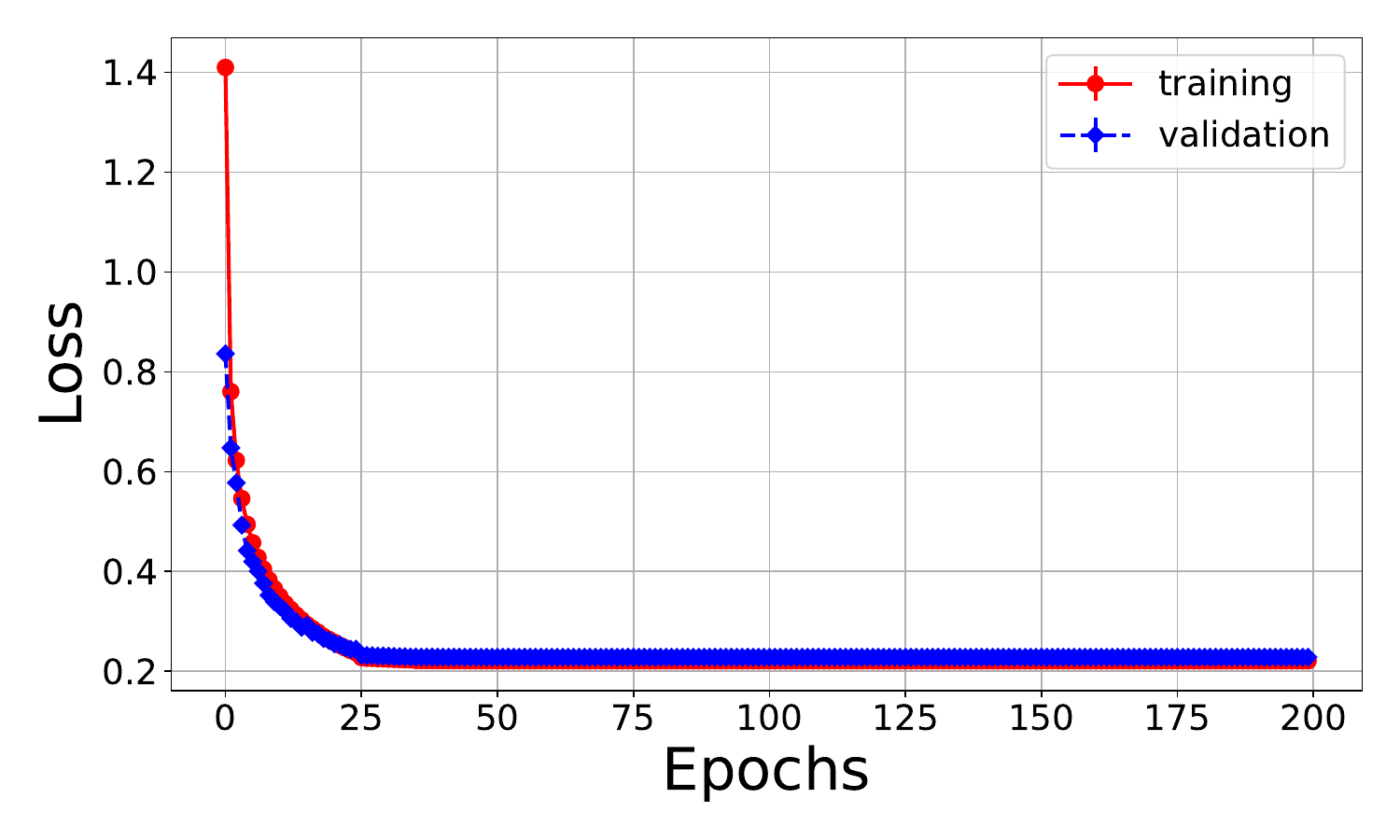}}\label{fig:ablation_early_stop_loss_4}\\
    \subfigure[]{\includegraphics[width=0.49\linewidth]{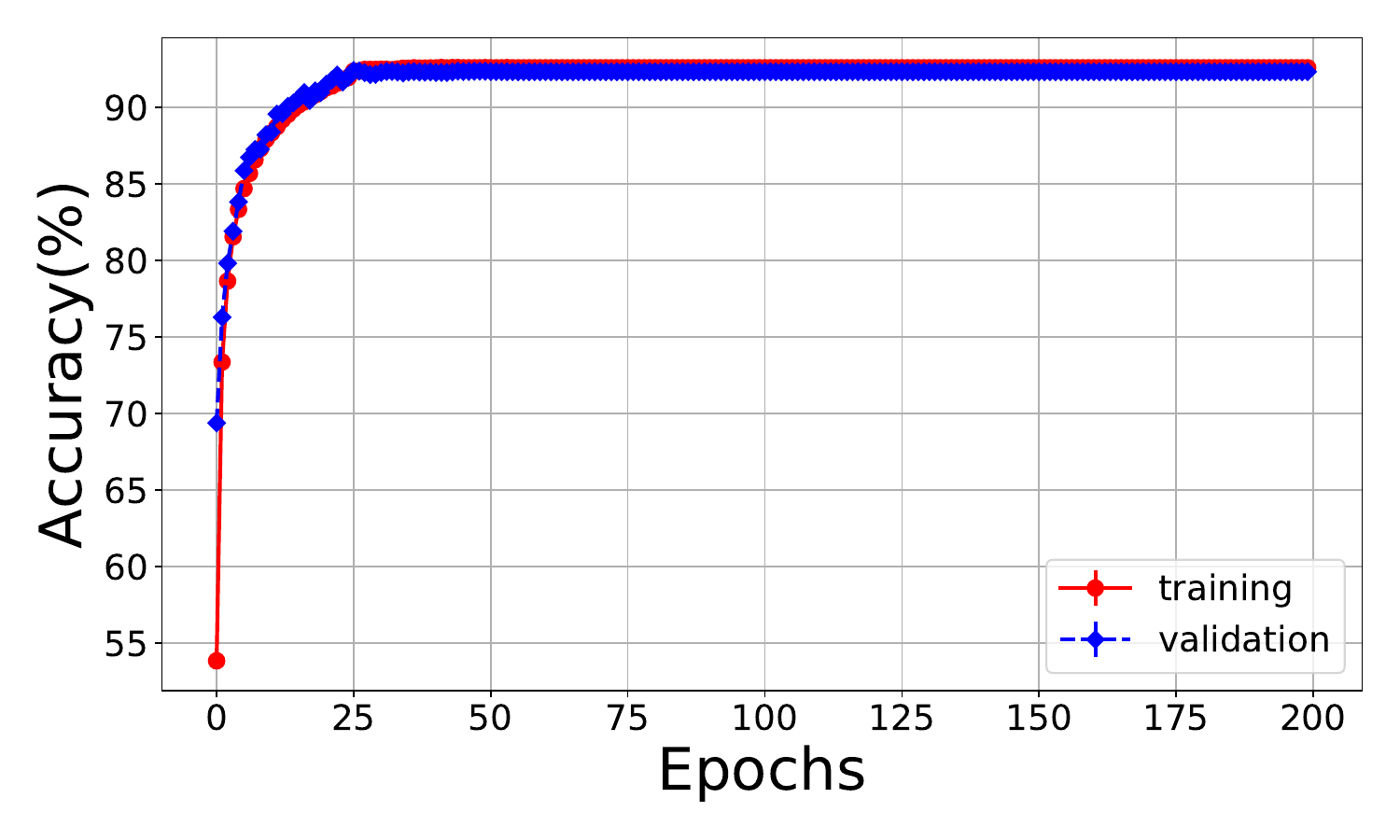}}\label{fig:ablation_early_stop_acc_8}
    \subfigure[]{\includegraphics[width=0.49\linewidth]{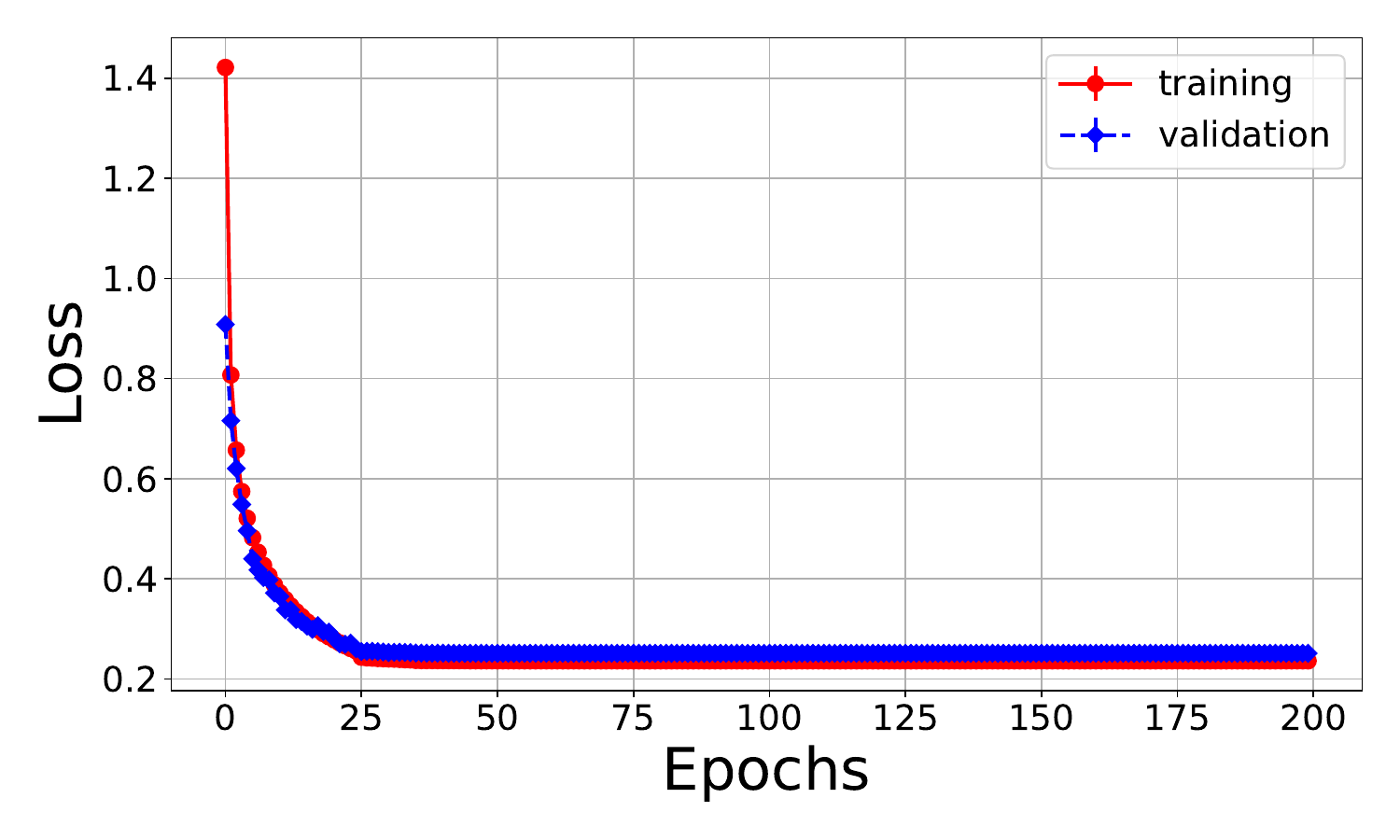}}\label{fig:ablation_early_stop_loss_8}
\caption{Effect of the early-stop of training on (a) the accuracy and (b) the loss of the network with width=32 and depth=4, (c) the accuracy and (d) the loss of the network with width=64 and depth=8.}
\label{fig:ablation_early_stop}
\end{figure}

\subsection{Extension of ~\cref{ssec:Validation_width} to convolutional networks}
\label{ssec:additional_experiments_cnn}

We extend the experiments of ~\cref{ssec:Validation_width} from fully connected networks to convolutional neural networks in~\cref{fig:additional_cnn}. Compared with the fully connected network, the main difference of the convolutional neural network is that the difference between different depths is much larger than fully connected network, which is more in line with the relationship between robustness and depth under He initialization in~\cref{thm:perturbation_stability_lazy}.

\begin{figure}[t]
\centering
    \subfigure[]{\includegraphics[width=0.49\linewidth]{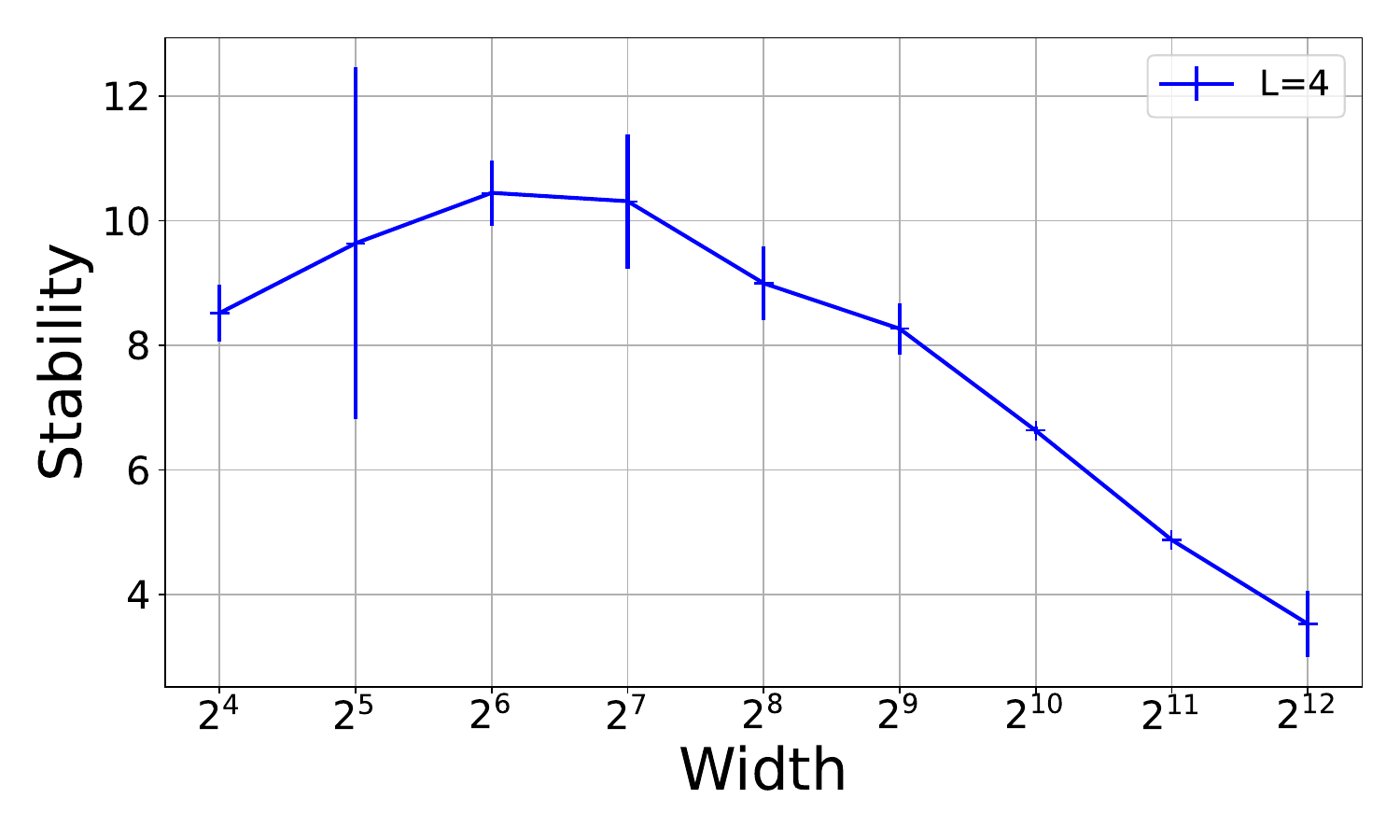}\label{fig:additional_cnn_4}}
    \subfigure[]{\includegraphics[width=0.49\linewidth]{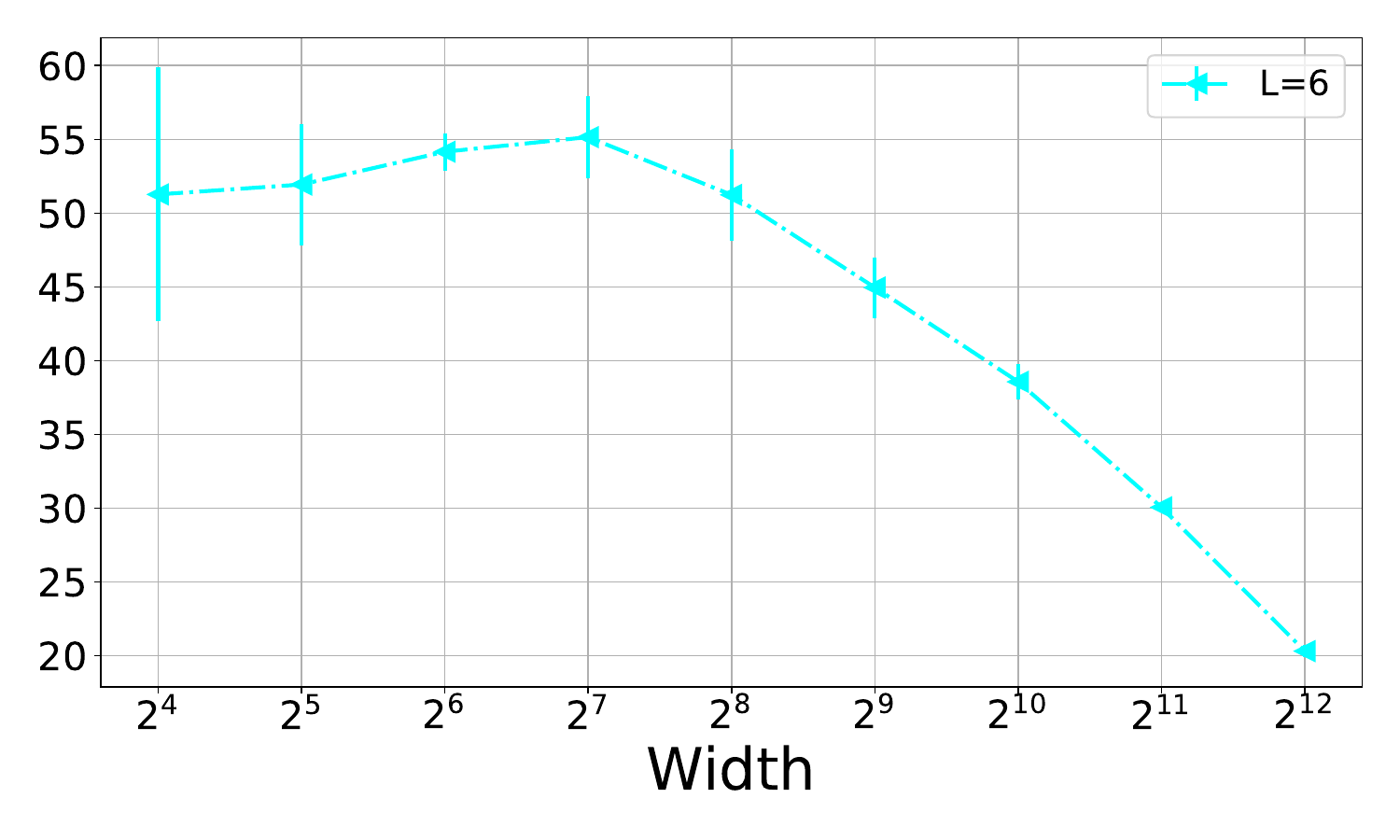}\label{fig:additional_cnn_6}}\\
    \subfigure[]{\includegraphics[width=0.49\linewidth]{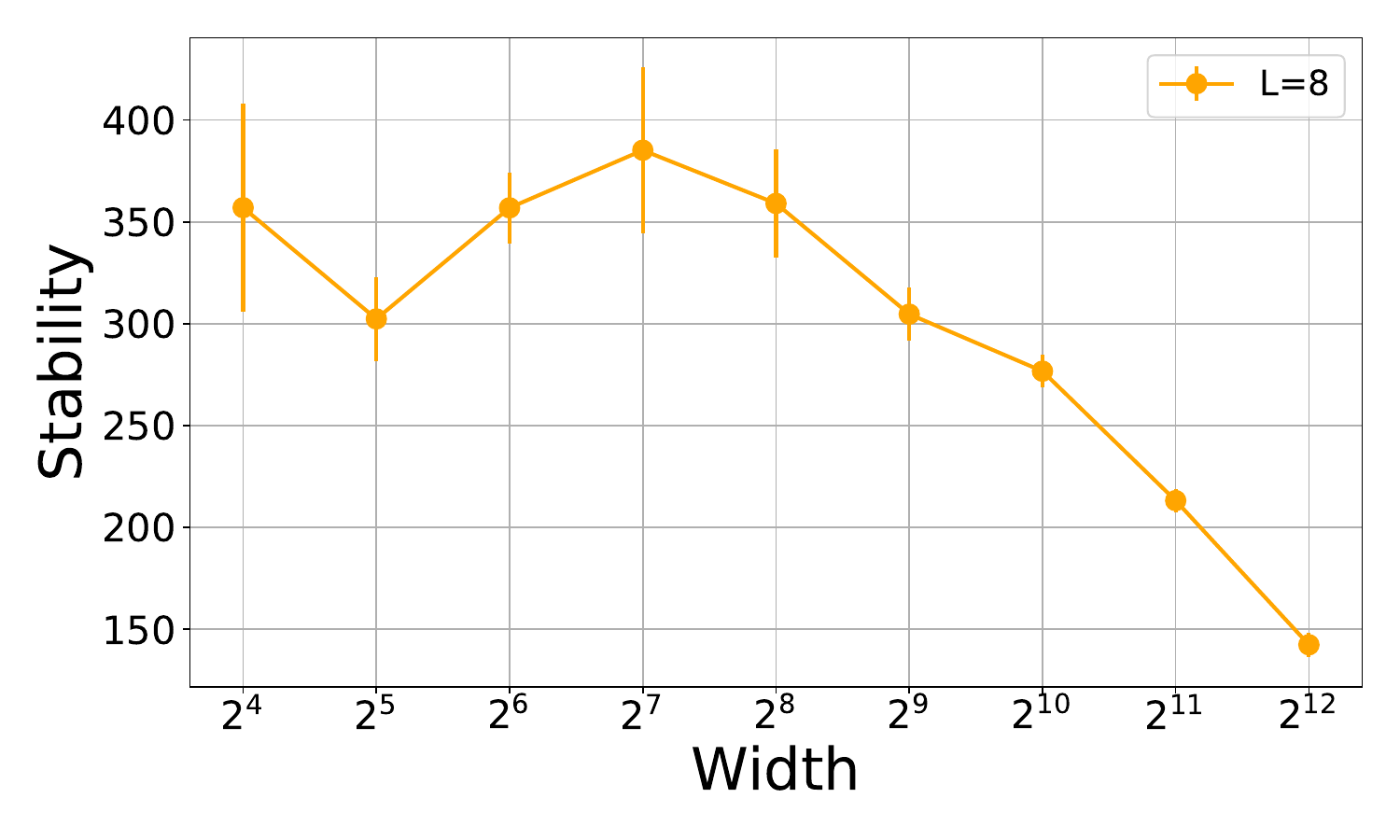}\label{fig:additional_cnn_8}}
    \subfigure[]{\includegraphics[width=0.49\linewidth]{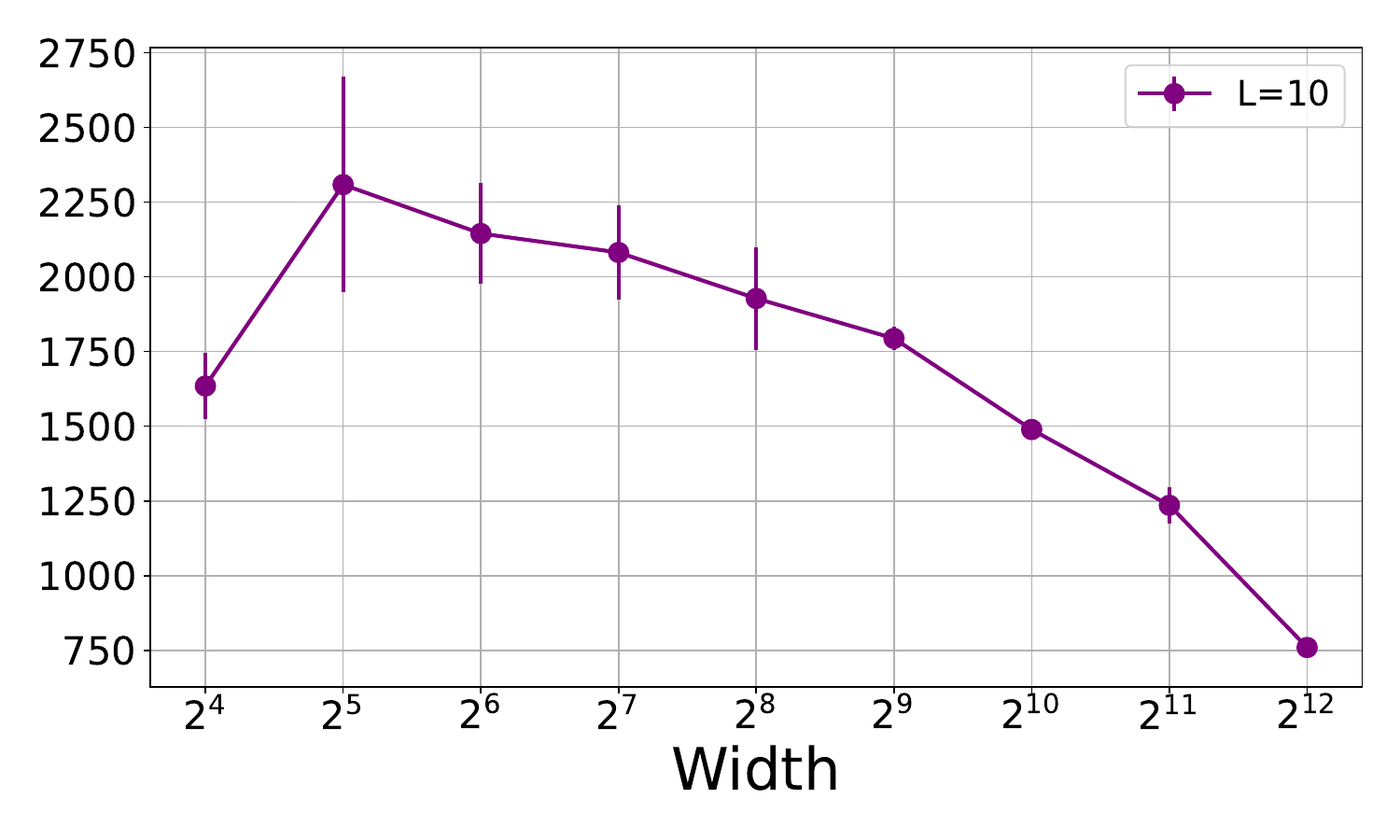}\label{fig:additional_cnn_10}}
\caption{Relationship between the \emph{perturbation stability} and width of CNN under He initialization for different depths of $L=4, 6, 8$ and $10$. The stability values differ substantially across depths, which is more in line with the relationship between robustness and depth under He initialization in~\cref{thm:perturbation_stability_lazy}.}
\label{fig:additional_cnn}
\end{figure}

\subsection{Additional experiments on ResNet}
\label{ssec:additional_experiments_ResNet}

In this section, we extend the experiments in~\cref{ssec:Validation_width} from fully connected networks to ResNet in~\cref{fig:ResNet}. Compared with the fully connected network, the results of ResNet show similar characteristics to our theory on fully connected networks. Specifically, the perturbation stability increases with depth, and an insignificant phase transition can also be seen for width.

\begin{figure}[t]
\centering
    \includegraphics[width=0.49\linewidth]{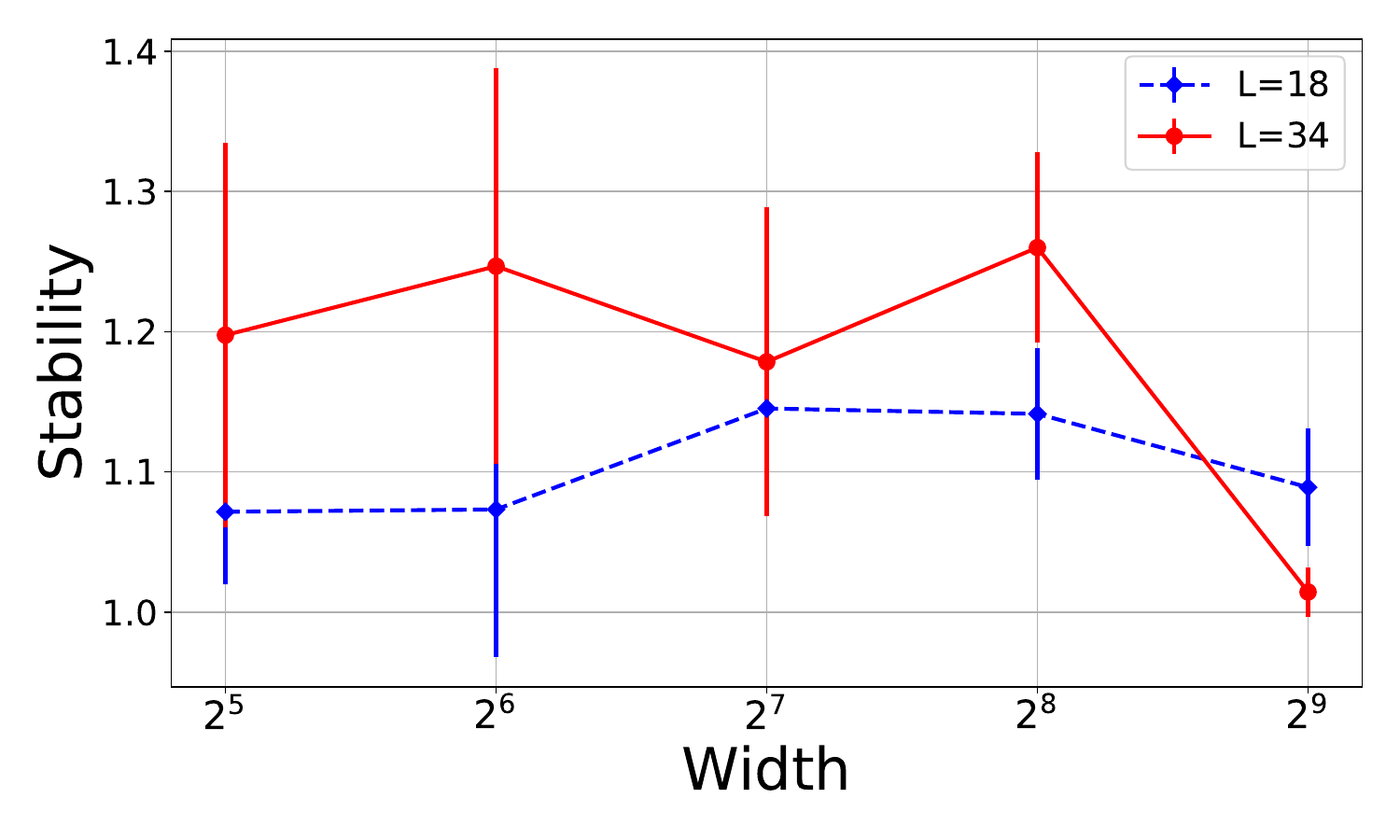}
\caption{Relationship between the \emph{perturbation stability} and width of ResNet-18 and ResNet-34.}\vspace{-0.25cm}
\label{fig:ResNet}
\end{figure}

\subsection{Additional experiments in non-lazy training regime}
\label{ssec:additional_experiments_non_lazy}

We extend the experiments of~\cref{fig:verify_width_non_lazy} to more initializations under non-lazy training regime (the variance of the initial weight are $\frac{1}{m^3}$ and $\frac{1}{m^4}$).~\cref{fig:verify_width_non_lazy_} provides the relationship between robustness and width of neural network for these two initializations and shows that the robustness improves with the increase of the width of network which is consistent with~\cref{thm:perturbation_stability_non_lazy}. However, the difference between different initializations is not as large as our theoretical expectation, which may indicate that the bound in~\cref{thm:perturbation_stability_non_lazy} is not tight enough.

\begin{figure}[t]
\centering
    \subfigure[]{\includegraphics[width=0.49\linewidth]{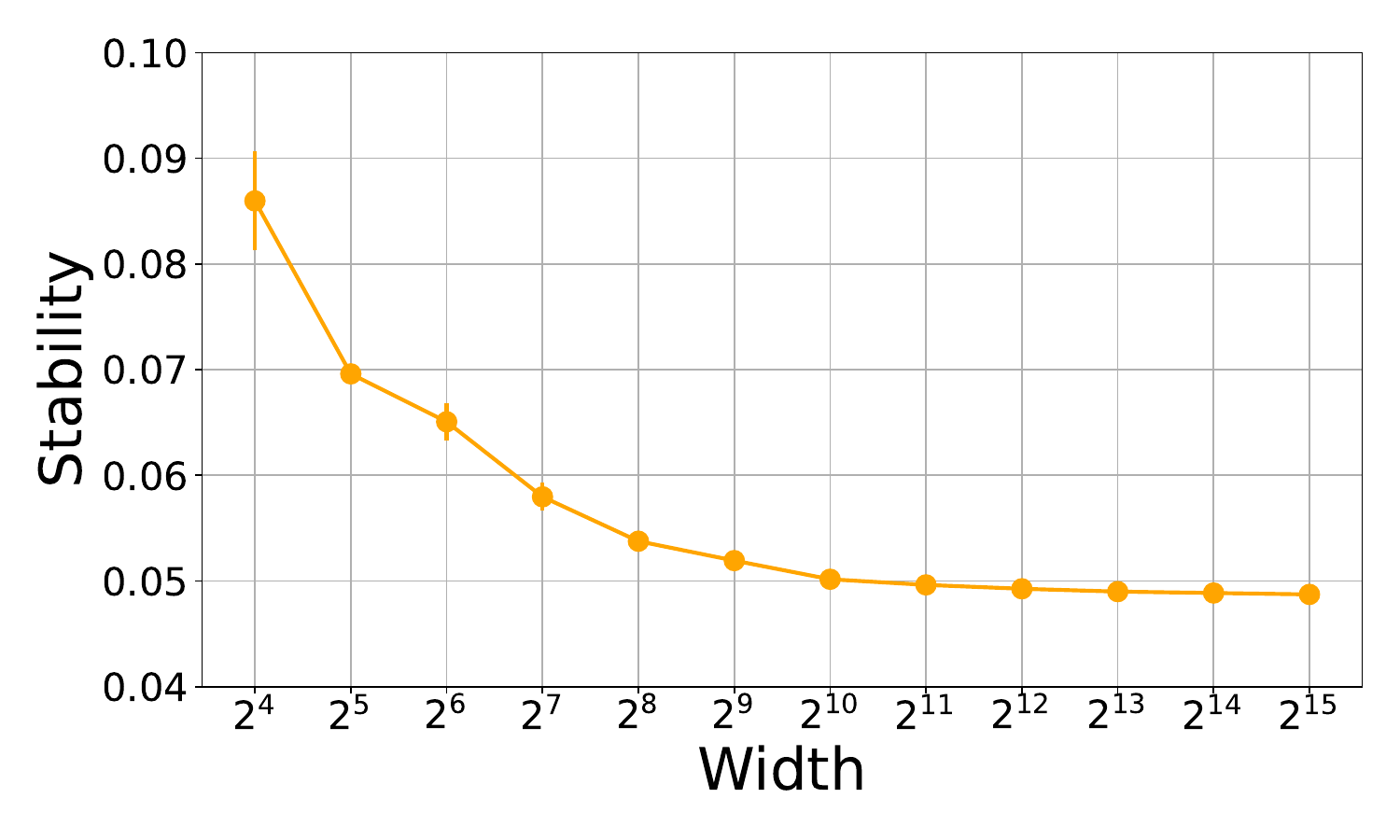}\label{fig:verify_width_non_lazy_1}}
    \subfigure[]{\includegraphics[width=0.49\linewidth]{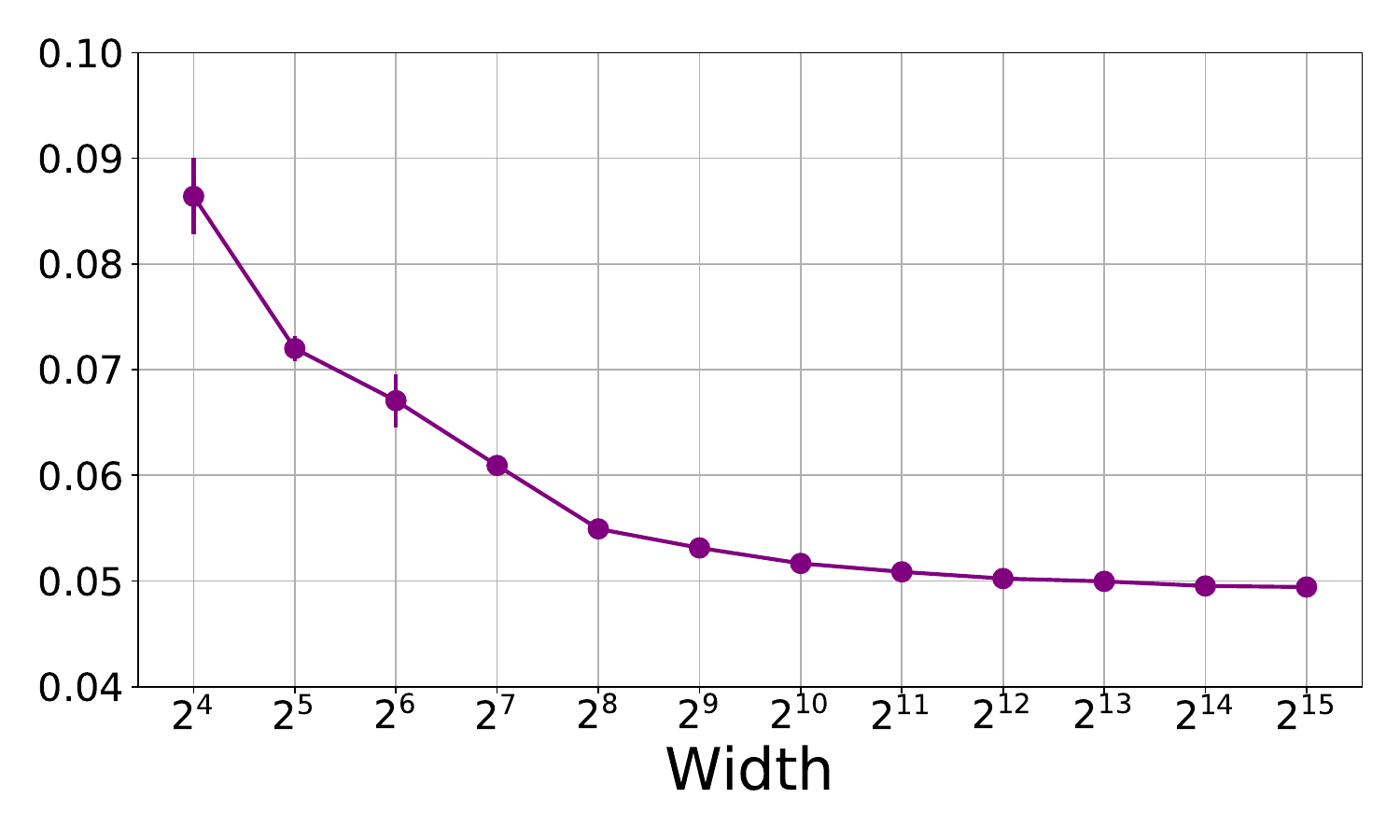}
    \label{fig:verify_width_non_lazy_2}}
\caption{Influence of width of neural network on the perturbation stability under non-lazy training regime. (a) the variance of the initial weight is $\frac{1}{m^3}$. (b) the variance of the initial weight is $\frac{1}{m^4}$.}\vspace{-0.25cm}
\label{fig:verify_width_non_lazy_}
\end{figure}

\subsection{Additional experiments under NTK initialization}
\label{ssec:additional_experiments_NTK}

In this section, we extend the experiments in~\cref{fig:verify_init} from He and LeCun initialization to NTK initialization in~\cref{fig:exp_NTK}. Our experimental results show that, NTK initialization and He initialization yield similar curves, but differ in the curve of  $L=2$.
This may be because the infinite-width NTK is equivalent to the linear model, and the large finite-width network approximates the linear model.
This phenomenon can be more easily detected for two-layer neural networks when compared to deeper networks.

\begin{figure}[t]
\centering
    \includegraphics[width=0.49\linewidth]{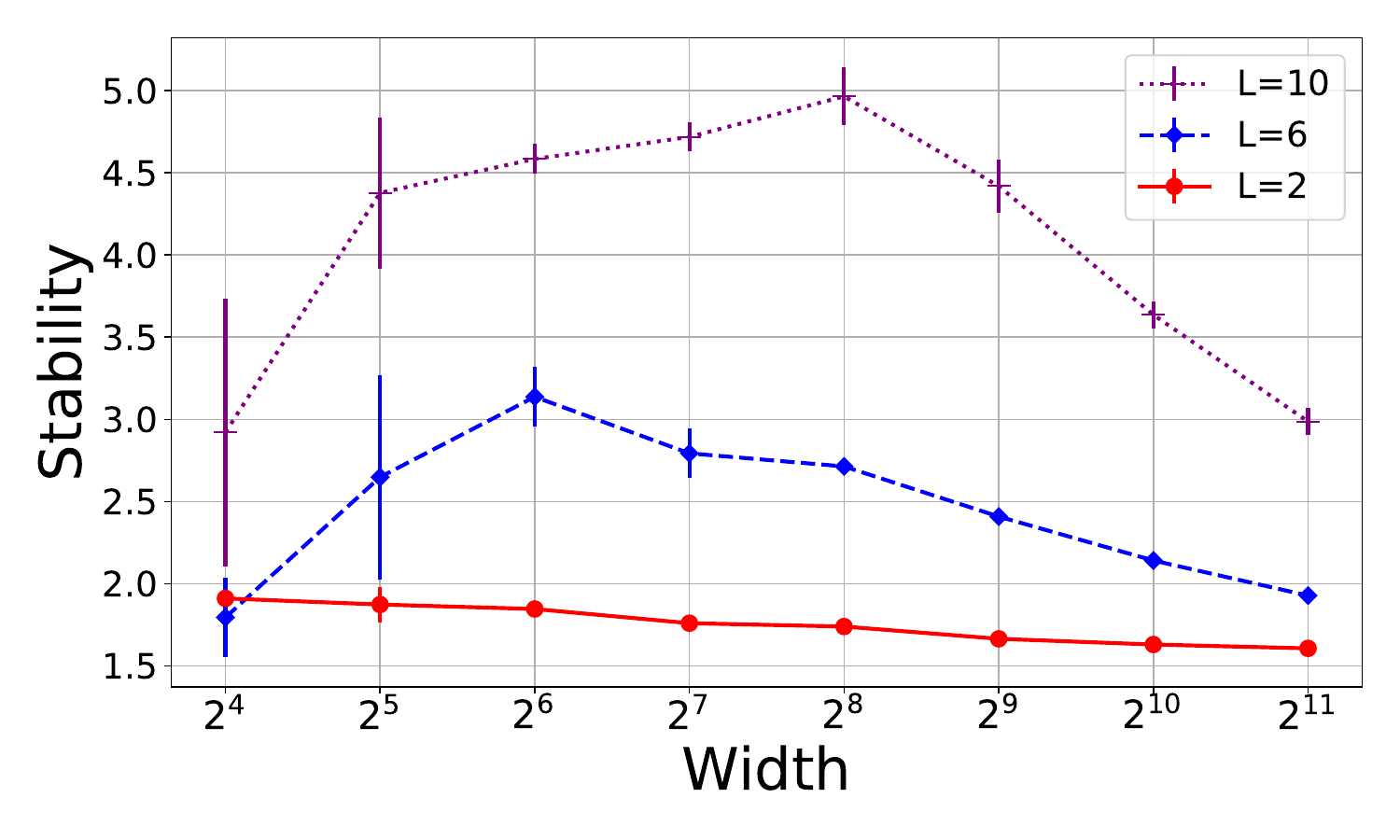}
\caption{Relationship between the \emph{perturbation stability} and depth of FCN under NTK initialization with different depths of $L=2, 6$ and $10$.}\vspace{-0.25cm}
\label{fig:exp_NTK}
\end{figure}

\section{Limitation and discussion}
\label{sec:limitation_and_discussion}

The limitation of this work is mainly manifested in that ~\cref{thm:perturbation_stability_non_lazy} is built on two-layers neural networks. Extending this results to deep neural networks beyond lazy training regime is non-trivial. Firstly, the dynamics of the deep neural network and the bounds of the gap between the initialization and the expectation of the gram matrix will become more complex. Secondly, due to the coupling relationship between different layers, the critical change radius of the weight in~\cref{lemma:bound_changes_of_gram_matrix} is also coupled with each other and is difficult to analyze. Then, due to the superposition of the previous two points, the relationship between the weights changing with time in the early stage of training (similar to~\cref{lemma:bound_w_use_a}) and the width and initialization of the neural network will be difficult to distinguish, which leads to the final result being complex, demanding and difficult to obtain a valid conclusion about width and initialization.

\section{Societal impact}
\label{sec:societal_impact}

This is a theoretical work that explores the interplay of the width, the depth and the initialization of neural networks on their average robustness. Our goal is to obtain an in-depth understanding of the factors that affect the robustness. We do not focus on obtaining any state-of-the-art results in a particular task, which means there are other works that can be used for forming strong adversarial attacks and can be used with malicious intent. 

Despite the theoretical nature of our work, we encourage researchers to further investigate the impact of robustness on the society. We expect robustness to have a key role into a world where neural networks are increasingly deployed into real-world applications. 

\end{document}